\useunder{\uline}{\ul}{}
\newcommand{\msl}{\{\!\!\{}
\newcommand{\msr}{\}\!\!\}}
\newcommand{\bm}{\boldsymbol}
\newtheorem{theorem}{Theorem}[section]
\newtheorem{proposition}[theorem]{Proposition}
\newtheorem{lemma}[theorem]{Lemma}
\theoremstyle{definition}
\newtheorem{definition}[theorem]{Definition}
\theoremstyle{remark}
\title{Is Distance Matrix Enough for Geometric Deep Learning?}
\author{%
    Zian Li$^{1,2}$, Xiyuan Wang$^{1,2}$, Yinan Huang$^{1}$, Muhan Zhang$^{1,}$\thanks{Corresponding author: Muhan Zhang (muhan@pku.edu.cn).}\\
    $^1$Institute for Artificial Intelligence, Peking University\\
    $^2$School of Intelligence Science and Technology, Peking University\\ 
}
\begin{document}
\maketitle

\begin{abstract}
Graph Neural Networks (GNNs) are often used for tasks involving the 3D geometry of a given graph, such as molecular dynamics simulation. While incorporating Euclidean distance into Message Passing Neural Networks (referred to as Vanilla DisGNN) is a straightforward way to learn the geometry, it has been demonstrated that Vanilla DisGNN is geometrically incomplete. In this work, we first construct families of novel and symmetric geometric graphs that Vanilla DisGNN cannot distinguish even when considering all-pair distances, which greatly expands the existing counterexample families. Our counterexamples show the inherent limitation of Vanilla DisGNN to capture symmetric geometric structures. We then propose $k$-DisGNNs, which can effectively exploit the rich geometry contained in the distance matrix. We demonstrate the high expressive power of $k$-DisGNNs from three perspectives: 1. They can learn high-order geometric information that cannot be captured by Vanilla DisGNN. 2. They can unify some existing well-designed geometric models. 3. They are universal function approximators from geometric graphs to scalars (when $k\geq 2$) and vectors (when $k\geq 3$). Most importantly, we establish a connection between geometric deep learning (GDL) and traditional graph representation learning (GRL), showing that those highly expressive GNN models originally designed for GRL can also be applied to GDL with impressive performance, and that existing complicated, equivariant models are not the only solution. Experiments verify our theory. Our $k$-DisGNNs achieve many new state-of-the-art results on MD17.
\end{abstract}

\section{Introduction}

Many real-world tasks are relevant to learning the geometric structure of a given graph, such as molecular dynamics simulation, physical simulation, drug designing and protein structure prediction~\citep{schmitz2019machine, GNS, alphafold2, pointclouds:survey}. Usually in these tasks, the coordinates of nodes and their individual properties, such as atomic numbers, are given. The goal is to accurately predict both invariant properties, such as the energy of the molecule, and equivariant properties, such as the force acting on each atom. This kind of graphs is also referred to as \textit{geometric graphs} by researchers \citep{GDL5G,2022Geometrically}.

In recent years, Graph Neural Networks (GNNs) have achieved outstanding performance on such tasks, as they can learn a representation for each graph or node in an end-to-end fashion, rather than relying on handcrafted features. {In such setting, a straightforward idea is to incorporate Euclidean distance, the most basic geometric feature, into Message Passing Neural Network~\citep{MPNN} (we call these models Vanilla DisGNN)~\citep{schnet, kearnes2016molecular}. While Vanilla DisGNN is simple yet powerful when considering all-pair distances (we assume all-pair distance is used by default to research a model's maximal expressive power), it has been proved to be geometrically incomplete~\citep{finite_cutoff_ce1, finite_cutoff_ce2, PaiNN, pozdnyakov2022incompleteness}, i.e., there exist pairs of geometric graphs which Vanilla DisGNN cannot distinguish.}
It has led to the development of various GNNs which go beyond simply using \textit{distance} as input. Instead, these models use complex group irreducible representations, first-order equivariant representations or manually-designed complex invariant geometric features such as angles or dihedral angles to learn better representations from geometric graphs. It is generally believed that pure distance information is insufficient for complex GDL tasks~\citep{dimenet, PaiNN}.

On the other hand, it is well known that the \textit{distance matrix}, which contains the distances between all pairs of nodes in a geometric graph, holds all of the geometric structure information~\citep{EGNN}. This suggests that it is possible to obtain all of the desired geometric structure information from \textit{distance graphs}, i.e., complete weighted graphs with Euclidean distance as edge weight. Therefore, the complex GDL task with node coordinates as input (i.e., 3D graph) may be transformed into an equivalent graph representation learning task with distance graphs as input (i.e., 2D graph). 

{Thus, a desired question is: Can we design theoretically and experimentally powerful geometric models, which can learn the rich geometric patterns purely from distance matrix? }

{In this work, we first revisit the counterexamples in previous work used to show the incompleteness of Vanilla DisGNN. These existing counterexamples either consider only finite pairs of distance (a cutoff distance is used to remove long edges)~\citep{PaiNN, finite_cutoff_ce1, finite_cutoff_ce2}, which can limit the representation power of Vanilla DisGNN, or lack diversity thus may not effectively reveal the inherent limitation of Vanilla DisGNN~\citep{pozdnyakov2022incompleteness}. In this regard, we further constructed plenty of novel and symmetric counterexamples, as well as a novel method to construct families of counterexamples based on several basic units. These counterexamples significantly enrich the current set of counterexample families, and can reveal the inherent limitations of Vanilla DisGNNs in \textit{capturing symmetric configurations}, which explains the reason why they perform badly in real-world tasks where lots of symmetric (sub-)structures are included.}

Given the limitations of Vanilla DisGNN, we propose $k$-DisGNNs, models that take pair-wise distance as input and aim for learning the rich geometric information contained in geometric graphs. $k$-DisGNNs are mainly based on the well known $k$-(F)WL test~\citep{cai1992optimal}, and include three versions: $k$-DisGNN, $k$-F-DisGNN and $k$-E-DisGNN. We demonstrate the superior geometric learning ability of $k$-DisGNNs from three perspectives. We first show that $k$-DisGNNs can capture arbitrary $k$- or $(k+1)$-order \textbf{geometric information} (multi-node features), which cannot be achieved by Vanilla DisGNN. Then, we demonstrate the high \textbf{generality} of our framework by showing that $k$-DisGNNs can implement DimeNet~\citep{dimenet} and GemNet~\citep{gemnet}, two well-designed state-of-the-art models. A key insight is that these two models are both augmented with manually-designed high-order geometric features, including angles (three-node features) and dihedral angles (four-node features), which correspond to the $k$-tuples in $k$-DisGNNs. However, $k$-DisGNNs can learn \textit{more than} these handcrafted features in an end-to-end fashion. Finally, we demonstrate that $k$-DisGNNs can act as \textbf{universal function approximators} from geometric graphs to \textbf{scalars} (i.e., E(3) invariant properties) when $k \geq 2$ and \textbf{vectors} (i.e., first-order O(3) equivariant and translation invariant properties) when $k \geq 3$. This essentially answers the question posed in our title: \textit{distance matrix is sufficient for GDL}. We conduct experiments on benchmark datasets where our models achieve \textbf{state-of-the-art results} on a wide range of the targets in the MD17 dataset.

Our method reveals the high potential of the most basic geometric feature, \textit{distance}. Highly expressive GNNs originally designed for traditional GRL can naturally leverage such information as edge weight, and achieve high theoretical and experimental performance in geometric settings. This \textbf{opens up a new door for GDL research} by transferring knowledge from traditional GRL, and suggests that existing complex, equivariant models may not be the only solution.

\section{Related Work}

\textbf{Equivariant neural networks.} Symmetry is a rather important design principle for GDL~\citep{GDL5G}. In the context of geometric graphs, it is desirable for models to be equivariant or invariant under the group (such as E(3) and SO(3)) actions of the input graphs. These models include those using group irreducible representations~\citep{TFN, NequIP, anderson2019cormorant, SE(3)-Transformers}, complex invariant geometric features~\citep{schnet, dimenet, gemnet} and first-order equivariant representations~\citep{EGNN, PaiNN, TorchMD}.
Particularly, \citet{dym2020universality} proved that both \citet{TFN} and \citet{SE(3)-Transformers} are universal for SO(3)-equivariant functions. Besides, \citet{villar2021scalars} showed that one could construct powerful (first-order) equivariant outputs by leveraging invariances, highlighting the potential of invariant models to learn equivariant targets.

\textbf{GNNs with distance.} In geometric settings, incorporating 3D distance between nodes as edge weight is a simple yet efficient way to improve geometric learning. Previous work~\citep{PPGN,k-GNN,zhang2021nested,zhao2022practical} mostly treat distance as an auxiliary edge feature for better experimental performance and do not explore the expressiveness or performance of using pure distance for geometric learning. { \citet{schnet} proposes to expand distance using a radial basis function as a continuous-ﬁlter and perform convolutions on the geometric graph, which can be essentially unified into the Vanilla DisGNN category.} \citet{finite_cutoff_ce1, finite_cutoff_ce2, PaiNN, pozdnyakov2022incompleteness} demonstrated the limitation of Vanilla DisGNN by constructing pairs of non-congruent geometric graphs which cannot be distinguished by it, thus explaining the poor performance of such models~\citep{schnet, kearnes2016molecular}. However, none of these studies proposed a complete purely distance-based geometric model.
Recent work~\citep{hordan2023complete} proposed theoretically complete GNNs for distinguishing geometric graphs, but they go beyond pair-wise distance and instead utilize gram matrices or coordinate projection.

\textbf{Expressive GNNs.} In traditional GRL (no 3D information), it has been proven that the expressiveness of MPNNs is limited by the Weisfeiler-Leman test~\citep{wltest, GIN, k-GNN}, a classical algorithm for graph isomorphism test. While MPNNs can distinguish most graphs~\citep{babai1979canonical}, they are unable to count rings, triangles, or distinguish regular graphs, which are common in real-world data such as molecules~\citep{huang2023boosting}. To increase the expressiveness and design space of GNNs, \citet{k-GNN, delta-k-WL} proposed high-order GNNs based on $k$-WL. \citet{PPGN} designed GNNs based on the folklore WL (FWL) test~\citep{cai1992optimal} and \citet{azizian2020expressive} showed that these GNNs are the most powerful GNNs for a given tensor order. Beyond these, there are subgraph GNNs~\citep{zhang2021nested,bevilacqua2021equivariant,frasca2022understanding,zhao2021stars}, substructure-based GNNs~\citep{bouritsas2022improving, horn2021topological, bodnar2021weisfeiler} and so on, which are also strictly more expressive than MPNNs.  For a comprehensive review on expressive GNNs, we refer the readers to \citet{zhang2023rethinking}.

More related work can be referred to Appendix~\ref{sec:supply related work}.

\section{Preliminaries}\label{sec:preliminaries}

In this paper, we denote multiset with $\msl \msr$. We use $[n]$ to represent the set $\{1,2,...,n\}$. A complete weighted graph with $n$ nodes is denoted by $G=(V,\boldsymbol{E})$, where $V=[n]$ and $\boldsymbol{E} = [e_{ij}]_{n\times n} \in \mathbb{R}^{n \times n}$. In this work, $e_{ij}$ specifically represents the Euclidean distance between node $i$ and $j$. The neighborhoods of node $i$ are denoted by $N(i)$.

\textbf{Distance graph vs. geometric graph.} In many tasks, we need to deal with geometric graphs, where each node $i$ is attached with its 3D coordinates $\mathbf{x}_i \in \mathbb{R}^3$ in addition to other invariant features $z_i \in \mathbb{R}^d$, such as atomic numbers in molecules.\footnote{We note that the geometric graph studied in this work is a simplified version of that in~\citet{joshi2023expressive}, where additional vector node features and connections are considered. In this work, geometric graph is equivalent to labeled point cloud.} Geometric graphs contain rich geometric information useful for learning chemical or physical properties. However, due to the significant variation of coordinates under E(3) transformations, they can be redundant when it comes to capturing geometric structure information.

Corresponding to geometric graphs are distance graphs, i.e., \textit{complete} weighted graph with Euclidean distance as edge weight. Unlike geometric graphs, distance graphs do not have explicit coordinates attached to each node, but instead, they possess distance features that are naturally \textit{invariant} under E(3) transformation and can be \textit{readily utilized} by most GNNs originally designed for traditional GRL. Distance also provides an inherent \textit{inductive bias} for effectively modeling the interaction/relationship between nodes. Moreover, a distance graph maintains all the essential \textit{geometric structure information}, as stated in the following theorem:
\begin{theorem}
\label{theorem:distance matrix}
\citep{EGNN}
Two geometric graphs are congruent (i.e., they are equivalent by permutation of nodes and E(3) transformation of coordinates) $\iff$ their corresponding distance graphs are isomorphic.
\end{theorem}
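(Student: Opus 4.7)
The forward direction is essentially immediate: if a permutation $\pi$ and an E(3) map $T = R(\cdot)+t$ (with $R \in O(3)$) witness a congruence between the two geometric graphs, then since $R$ is an isometry and translation by $t$ preserves distances, we have $\|T(\mathbf{x}_i)-T(\mathbf{x}_j)\| = \|\mathbf{x}_i-\mathbf{x}_j\|$, so the edge weights of the distance graphs match under $\pi$; the invariant features $z_i$ match by construction. So I would just write these two lines and move on.

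The interesting direction is the converse. The plan is to reconstruct the E(3) transformation from the distance information. After relabeling one graph by the isomorphism $\pi$, I may assume $e_{ij}=e'_{ij}$ and $z_i = z'_i$ for all $i,j\in[n]$. I then translate each coordinate system to put node $1$ at the origin; this reduces the task to producing an orthogonal $Q \in O(3)$ with $Q\mathbf{x}_i = \mathbf{x}'_i$ for every $i$, since the composition of translation and orthogonal map is a generic element of E(3).

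The key calculation is the polarization identity
\[
\langle \mathbf{x}_i,\mathbf{x}_j\rangle \;=\; \tfrac{1}{2}\bigl(\|\mathbf{x}_i\|^2+\|\mathbf{x}_j\|^2-\|\mathbf{x}_i-\mathbf{x}_j\|^2\bigr) \;=\; \tfrac{1}{2}\bigl(e_{1i}^2+e_{1j}^2-e_{ij}^2\bigr),
\]
which recovers every entry of the Gram matrix $G=[\langle \mathbf{x}_i,\mathbf{x}_j\rangle]$ from the distance matrix alone. Because distances agree across the two graphs, the Gram matrices $G$ and $G'$ coincide. It then suffices to invoke the standard linear-algebra fact that two tuples of vectors in $\mathbb{R}^3$ with the same Gram matrix differ by an orthogonal transformation: concretely, pick an orthonormal basis of $\mathrm{span}\{\mathbf{x}_i\}$ and express each $\mathbf{x}_i$ in it; the coefficients depend only on $G$, so the same coefficients express $\mathbf{x}'_i$ in some orthonormal basis of $\mathrm{span}\{\mathbf{x}'_i\}$, and the linear map sending one basis to the other is the desired $Q$.

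The main obstacle is the last step in the degenerate case, when the points lie in a proper subspace of $\mathbb{R}^3$ (a plane, a line, or a point). There the isometry $Q$ between the two spans is not yet defined on the orthogonal complement, and one has to extend it to a full element of $O(3)$; this is routine (complete both orthonormal bases to orthonormal bases of $\mathbb{R}^3$ and map one completion to the other), but it is the place where some care is required. Once $Q$ is in hand, combining it with the earlier translations yields the E(3) transformation witnessing the congruence, completing the proof.
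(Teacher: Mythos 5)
The paper does not prove this statement itself; it is quoted as a known result with a citation to the EGNN paper, so there is no in-paper argument to compare against. Your proof is correct and is the standard one: the forward direction is the isometry property of E(3), and the converse is the classical Gram-matrix reconstruction via the polarization identity $\langle \mathbf{x}_i,\mathbf{x}_j\rangle = \tfrac{1}{2}(e_{1i}^2+e_{1j}^2-e_{ij}^2)$ after centering at node $1$, followed by the fact that equal Gram matrices force the two point tuples to be related by an orthogonal map (with the routine extension of $Q$ from the span to all of $\mathbb{R}^3$ in the degenerate case). You correctly identify the degenerate case as the only place needing care, and your handling of it is sound.
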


By additionally incorporating \textit{orientation} information, we can learn equivariant features as well~\citep{villar2021scalars}. Hence, distance graphs provide a way to \textbf{represent geometric graphs without referring to a canonical coordinate system}. This urges us to study their full potential for GDL.

\textbf{Weisfeiler-Leman Algorithms.} 
Weisfeiler-Lehman test (also called as 1-WL)~\citep{wltest} is a well-known efficient algorithm for graph isomorphism test (traditional setting, no 3D information). It iteratively updates the labels of nodes according to nodes' own labels and their neighbors' labels, and compares the histograms of the labels to distinguish two graphs. Specifically, we use $l^{t}_{i}$ to denote the label of node $i$ at iteration $t$, then 1-WL updates the node label by
\begin{equation}
\label{def:1WL update}
l^{t+1}_{i}={\rm {\rm HASH}}(l^t_{i}, \msl  l^t_{j} \mid j \in N(i) \msr ),
\end{equation}
where ${\rm {\rm HASH}}$ is an injective function that maps different inputs to different labels. However, 1-WL cannot distinguish all the graphs~\citep{zhang2021nested}, and thus $k$-dimensional WL ($k$-WL) and $k$-dimensional Folklore WL ($k$-FWL), $k\geq 2$, are proposed to boost the expressiveness of WL test.

Instead of updating the label of nodes, $k$-WL and $k$-FWL update the label of $k$-tuples $\boldsymbol{v}\coloneqq (v_1, v_2, ..., v_k) \in V^k$, denoted by $l_{\boldsymbol{v}}$. Both methods initialize $k$-tuples' labels according to their isomorphic types and update the labels iteratively according to their $j$-neighbors $N_{j}(\boldsymbol{v})$. The difference between $k$-WL and $k$-FWL mainly lies in the definition of $N_j(\boldsymbol{v})$. To be specific, tuple $\boldsymbol{v}$'s $j$-neighbors in $k$-WL and $k$-FWL are defined as follows respectively
\begin{align}
    N_{j}(\boldsymbol{v}) &= \msl  (v_1, ..., v_{j-1}, w, v_{j+1}, ..., v_k) \mid w\in V \msr ,
    \\
    N_{j}^{F}(\boldsymbol{v}) &= \big((j, v_2, ..., v_k), (v_1, j, ..., v_k), ..., (v_1, v_2, ..., j)\big).
\end{align}
To update the label of each tuple, $k$-WL and $k$-FWL iterates as follows
\begin{align}
    \label{def:kWL update}
    k\text{-WL}:&~~l_{\boldsymbol{v}}^{t+1} = {\rm {\rm HASH}}\Big(l^{t}_{\boldsymbol{v}}, \big(\msl  l^t_{\boldsymbol{w}}\mid\boldsymbol{w} \in N_{j}(\bm{v}) \msr \mid j \in [k]\big)\Big),\\
    \label{def:kFWL update}
    k\text{-FWL}:&~~l^{t+1}_{\boldsymbol{v}} = {\rm {\rm HASH}}\Big(l^{t}_{\boldsymbol{v}}, \msl  \big( l^t_{\boldsymbol{w}}\mid \boldsymbol{w} \in N_{j}^F(\bm{v}) \big)\mid j\in [|V|] \msr \Big).
\end{align}

{According to~\citet{cai1992optimal}, there always exists a pair of non-isomorphic graphs that cannot be distinguished by $k$-(F)WL but can be distinguished by $(k+1)$-(F)WL. This means that $k$-(F)WL forms a strict hierarchy, which, however, still cannot solve the graph isomorphism problem with a finite $k$.}

\section{Revisiting the Incompleteness of Vanilla DisGNN}
\label{sec:Incompleteness}

\begin{wrapfigure}{r}{0.35\textwidth}
\vskip -10pt
\includegraphics[width=0.35\columnwidth]{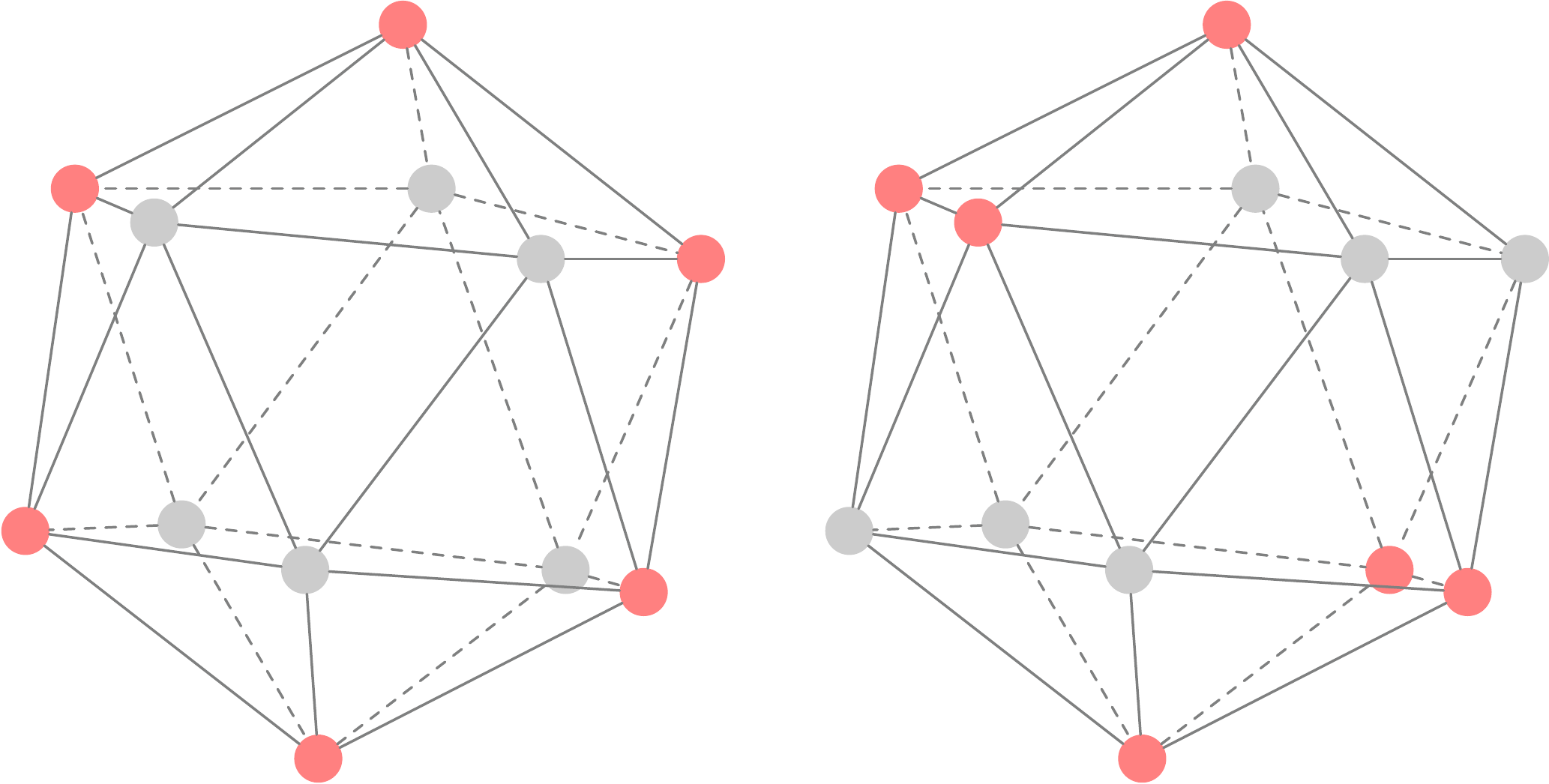}
\caption{A pair of geometric graphs that are non-congruent but cannot be distinguished by Vanilla DisGNN. The nodes of the geometric graphs are taken from regular icosahedrons and have identical node features. Only the \textcolor[HTML]{FF7F7F}{red} nodes are part of the two geometric graphs; the \textcolor[HTML]{AAAAAA}{grey} nodes and the “edges” are included solely for visualization purposes.}
\label{fig:ce20}
\vskip -10pt
\end{wrapfigure}
As mentioned in previous sections, Vanilla DisGNN is the edge-enhanced version of MPNN, which can unify many model frameworks~\citep{schnet, kearnes2016molecular}. Its message passing formula can be generalized from its discrete version, which we call 1-WL-E~\citep{pozdnyakov2022incompleteness}:
\begin{equation}
\label{def:1WLE update}
l^{t+1}_{i}={\rm {\rm HASH}}(l^t_{i}, \msl  (l^t_{j}, e_{ij}) \mid j \in |V| \msr ).
\end{equation}
To provide valid counterexamples that Vanilla DisGNN cannot distinguish, \citet{pozdnyakov2022incompleteness} proposed both finite-size and periodic counterexamples and showed that they included real chemical structures. This work demonstrated for the first time the inherent limitations of Vanilla DisGNN. However, it should be noted that the essential change between these counterexamples is merely the distortion of $C^{\pm}$ atoms, which lacks diversity despite having high manifold dimensions. In this regard, constructing new families of counterexamples cannot only enrich the diversity of existing families, but also demonstrate Vanilla DisGNN's limitations from different angles.

In this paper, we give a \textbf{simple} valid counterexample which Vanilla DisGNN cannot distinguish even with an infinite cutoff, as shown in Figure~\ref{fig:ce20}. In both geometric graphs, all nodes have exactly the same unordered list of distances (infinite cutoff considered), which means that Vanilla DisGNN will always label them identically. Nevertheless, the two geometric graphs are obviously non-congruent, since there are two small equilateral triangles on the right, and zero on the left.
Beyond this, we construct three kinds of counterexamples in Appendix~\ref{sec:counterexamples}, which can significantly enrich the counterexamples found by \citet{pozdnyakov2022incompleteness}, including:
\begin{enumerate}[leftmargin=20pt]
    \item Individual counterexamples sampled from regular polyhedrons that can be directly verified.
    \item Small families of counterexamples that can be transformed in one dimension.
    \item Families of counterexamples constructed by arbitrary combinations of basic symmetric units.
\end{enumerate}

These counterexamples further demonstrate the incompleteness of Vanilla DisGNN in learning the geometry: even quite simple geometric graphs as shown in Figure~\ref{fig:ce20} cannot be distinguished by it. While the counterexamples we constructed may not correspond to real molecules, such symmetric structures are commonly seen in other geometry-relevant tasks, such as physical simulations and point clouds. The inability to distinguish these symmetric structures can have a significant impact on the geometric learning performance of models, even for non-degenerated configurations, as demonstrated by~\citet{pozdnyakov2022incompleteness}. This perspective can provide an additional explanation for the poor performance of models such as SchNet~\citep{schnet} and inspire the design of more powerful and efficient geometric models.

\section{$k$-DisGNNs: $k$-order Distance Graph Neural Networks}
\label{sec:High-order DisGNNs}

In this section, we propose the framework of $k$-DisGNNs, \textit{complete and universal} geometric models learning from \textit{pure distance features}. $k$-DisGNNs consist of three versions: $k$-DisGNN, $k$-F-DisGNN, and $k$-E-DisGNN. Detailed implementation is referred to Appendix~\ref{sec:detailed model}.

\textbf{Initialization Block.} Given a geometric $k$-tuple $\boldsymbol{v} = (v_0, v_1, \dots, v_k)$, high-order DisGNNs initialize its representation using a powerful function: $h_{\boldsymbol{v}}^0 = f_{\rm init}\Big(\big(z_{v_i} \mid i \in [k]\big), \big(e_{v_i v_j} \mid i, j \in [k], i < j\big)\Big) \in \mathbb{R}^K$, where $K$ is the hidden dimension. This function embeds the \textit{ordered} distance matrix, along with the $z$-type of each node in the $k$-tuple, into tuple representations, preserving all geometric information within it.

\textbf{Message Passing Block.} The key difference between the three versions of high-order DisGNNs lies in their message passing blocks.
The message passing blocks of $k$-DisGNN and $k$-F-DisGNN are based on the paradigms of $k$-WL and $k$-FWL, respectively. Their core message passing function $f^{t}_{\rm MP}$ and $f^{{\rm F},t}_{\rm MP}$ are formulated simply by replacing the \textit{discrete} tuple labels $l_{\boldsymbol{v}}^t$ in $k$-(F)WL (Equation~(\ref{def:kWL update},~\ref{def:kFWL update})) with \textit{continuous} geometric tuple representation $h_{\boldsymbol{v}}^t \in \mathbb{R}^K$, see Equation~(\ref{def:k-DisGNN},~\ref{def:k-F-DisGNN}). Tuples containing local geometric information interact with each other in message passing blocks, thus allowing models to learn considerable global geometric information (as explained in Section~\ref{sec:high-order geometry}).
\begin{align}
    \label{def:k-DisGNN}
    k\text{-DisGNN:}\ h^{t+1}_{\boldsymbol{v}} &= f^{t}_{\rm MP}\Big(h^{t}_{\boldsymbol{v}}, \big(\msl  h^t_{\boldsymbol{w}}\mid \boldsymbol{w} \in N_{j}(\boldsymbol{v}) \msr \mid j \in [k] \big)\Big),
    \\
    \label{def:k-F-DisGNN}
    k\text{-F-DisGNN:}\ h^{t+1}_{\boldsymbol{v}} &= f^{{\rm F},t}_{\rm MP}\Big(h^{t}_{\boldsymbol{v}}, \msl  \big( h^t_{\boldsymbol{w}}\mid \boldsymbol{w} \in N_{j}^F(\boldsymbol{v}) \big) \mid  j \in [|V|] \msr \Big).
\end{align}
However, during message passing in $k$-DisGNN, the information about distance is not explicitly used but is embedded implicitly in the initialization block when each geometric tuple is embedded according to its distance matrix and node types. This means that $k$-DisGNN is \textbf{unable to capture the relationship} between a $k$-tuple $\boldsymbol{v}$ and its neighbor $\boldsymbol{w}$ during message passing, which could be very helpful for learning the geometric structural information of the graph. For example, in a physical system, a tuple $\boldsymbol{w}$ far from $\boldsymbol{v}$ should have much less influence on $\boldsymbol{v}$ than another tuple $\boldsymbol{w}'$ near $\boldsymbol{v}$.

Based on this observation, we propose \textbf{$k$-E-DisGNN}, which maintains a representation $e_{ij}^t$ for each edge $ij$ at every time step and explicitly incorporates it into the message passing procedure. The edge representation $e_{ij}^t$ is defined as
\begin{align}
    \label{def:et}
    e_{ij}^t &= f_{\rm e}^t\Big( e_{ij}, \big(\msl  h^t_{\bm{w}}\mid \boldsymbol{w} \in V^k, w_u = i, w_v = j \msr\mid u, v \in [k], u < v\big) \Big).
\end{align}
Note that $e_{ij}^t$ not only contains the distance between nodes $i$ and $j$, but also pools all the tuples related to edge $ij$, making it informative and general. For example, in the special case $k=2$, Equation~(\ref{def:et}) is equivalent to $e_{ij}^t = f_{\rm e}^t( e_{ij}, h_{ij}^t )$.

We realize the message passing function of $k$-E-DisGNN, $f_{\rm MP}^{{\rm E}, t}$, by replacing the neighbor representation, $h^t_{\boldsymbol{w}}$ in $f_{\rm MP}^{t}$, with $\big( h^t_{\boldsymbol{w}}, e^t_{\boldsymbol{v} \backslash \boldsymbol{w}, \boldsymbol{w} \backslash \boldsymbol{v}} \big)$ where $\boldsymbol{v} \backslash \boldsymbol{w}$ gives the only element in $\boldsymbol{v}$ but not in $\boldsymbol{w}$, see Equation~\ref{def:k-E-DisGNN}. In other words, $e^t_{\boldsymbol{v} \backslash \boldsymbol{w}, \boldsymbol{w} \backslash \boldsymbol{v}}$ gives the representation of the edge connecting $\boldsymbol{v},\boldsymbol{w}$. By this means, a tuple can be aware of \textbf{how far} it is to its neighbors and by what kind of edges each neighbor is connected. This can boost the ability of geometric structure learning (as explained in Section~\ref{sec:high-order geometry}). 
% Note that the calculation of $e_{ij}^t$ does not increase the time complexity of $k$-DisGNN (detailed in Appendix~\ref{sec:detailed model}).
\begin{align}
    \label{def:k-E-DisGNN}
    k\text{-E-DisGNN:}\ h^{t+1}_{\boldsymbol{v}} &= f^{{\rm E},t}_{\rm MP}\Big(h^{t}_{\boldsymbol{v}}, \big(\msl  (h^t_{\boldsymbol{w}}, e^t_{\boldsymbol{v} \backslash \boldsymbol{w}, \boldsymbol{w} \backslash \boldsymbol{v}})  \mid \boldsymbol{w} \in N_{j}(\boldsymbol{v}) \msr \mid j \in [k] \big)\Big).
\end{align}
\textbf{Output Block.} The output function $t = f_{\rm out} \big(\msl  h^T_{\boldsymbol{v}} \mid \boldsymbol{v} \in V^k \msr \big)$, where $T$ is the final iteration, pools all the tuple representations and generates the E(3) and permutation invariant geometric target $t \in \mathbb{R}$.

Like the conclusion about $k$-(F)WL for unweighted graphs, the expressiveness (in terms of approximating functions) of DisGNNs does \textit{not decrease} as $k$ increases. This is simply because all $k$-tuples are contained within some $(k+1)$-tuples, and by designing message passing that ignores the last index, we can implement $k$-DisGNNs with $(k+1)$-DisGNNs.

\section{Rich Geometric Information Learned by $k$-DisGNNs}
\label{sec:rich geometric information}

In this section, we aim to delve deeper into the geometry learning capability of $k$-DisGNNs from various perspectives. In Subsection~\ref{sec:high-order geometry}, we will examine the \textbf{high-order geometric features} that the models can extract from geometric graphs. This analysis is more intuitive and aids in comprehending the high geometric expressiveness of $k$-DisGNNs. In Subsection~\ref{sec:unify}, we will show that DimeNet~\citep{dimenet} and GemNet~\citep{gemnet}, two classical and widely-used GNNs for GDL employing invariant geometric representations, are \textbf{special cases} of $k$-DisGNNs, highlighting the generality of $k$-DisGNNs. In the final subsection, we will show the \textbf{completeness} (distinguishing geometric graphs) and \textbf{universality} (approximating functions over geometric graphs) of $k$-DisGNNs, thus answering the question posed in the title: distance matrix is enough for GDL.

\subsection{Ability of Learning High-Order Geometry}
\label{sec:high-order geometry}

We first give the concept of \textit{high-order geometric information} for better understanding.
\begin{definition}
    $k$-order geometric information is the \textit{E(3)-invariant} features calculable from $k$ nodes' 3D coordinates.
\end{definition}
For example, in a 4-tuple, one can find various high-order geometric information, including distance (2-order), angles (3-order) and dihedral angles (4-order), as shown in Figure~\ref{fig:k-order-information and mixtuples}(A1). Note that high-order information is not limited to the common features listed above: we show some of other possible 3-order geometric features in Figure~\ref{fig:k-order-information and mixtuples}(A2).

\begin{figure}[t]
\centering
\includegraphics[width=\columnwidth]{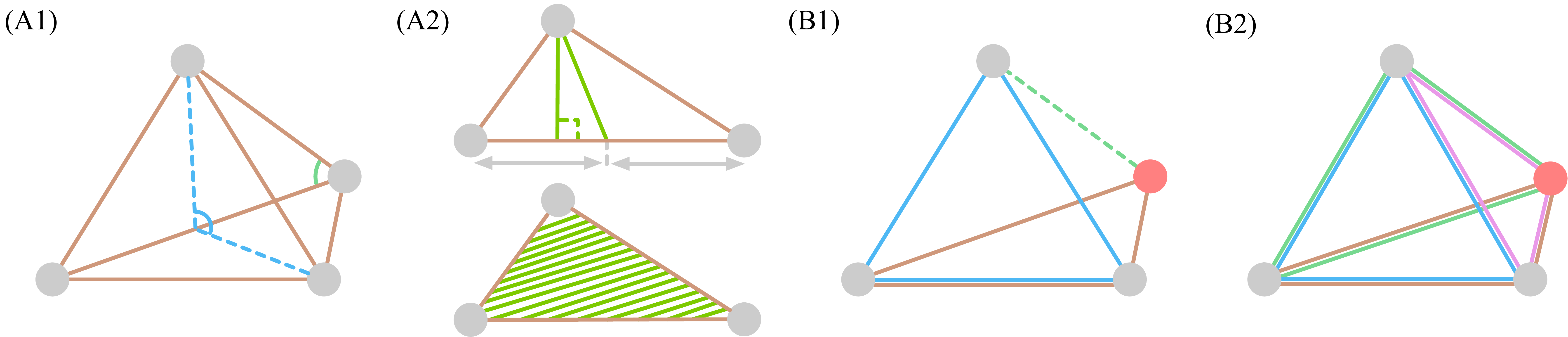}
\caption{\textbf{A}: High-order geometric information contained in the distance matrix of $k$-tuples. We mark different orders with different colors, with \textcolor[HTML]{CF987B}{brown}, \textcolor[HTML]{8ED696}{green}, \textcolor[HTML]{4EB8F4}{blue} for 2-,3-,4-order respectively. (A1) High-order geometric information contained in 4-tuples, including distances, angles and dihedral angles. (A2) More 3-order geometric features, such as vertical lines, middle lines and the area of triangles. \textbf{B}: Examples explaining that neighboring 3-tuples can form a 4-tuple. \textcolor[HTML]{4EB8F4}{Blue} represents the center tuple, the other colors represent neighbor tuples, and the \textcolor[HTML]{FF7F7F}{red} node is the one node of that neighbor tuple which is not in the center tuple. (B1) Example for $3$-E-DisGNN. With the \textcolor[HTML]{8ED696}{green} edge, two 3-tuples can form a 4-tuple. (B2) Example for 3-F-DisGNN. Four 3-tuples form a 4-tuple.}
\label{fig:k-order-information and mixtuples}
\end{figure}

We first note that $k$-DisGNNs can learn and process at least $k$-order geometric information. Theorem~\ref{theorem:distance matrix} states that we can reconstruct the whole geometry of the $k$-tuple from the embedding of its distance matrix. In other words, we can extract all the desired $k$-order geometric information solely from the embedding of the $k$-tuple's distance matrix, which is calculated at the initialization step of $k$-DisGNNs, with a learnable function.

Furthermore, both $k$-E-DisGNN and $k$-F-DisGNN can actually \textbf{learn $(k+1)$-order geometric information} in their message passing layers. For $k$-E-DisGNN, information about $(h_{\boldsymbol{v}}, h_{\boldsymbol{w}}, e_{\boldsymbol{v} \backslash \boldsymbol{w}, \boldsymbol{w} \backslash \boldsymbol{v}})$, where $\boldsymbol{w}$ is some $j$-neighbor of $\boldsymbol{v}$, is included in the input of its update function. Since the distance matrices of tuples $\boldsymbol{v}$ and $\boldsymbol{w}$ can be reconstructed from $h_{\boldsymbol{v}}$ and $h_{\boldsymbol{w}}$, the all-pair distances of $(v_1, v_2, ..., v_k, \boldsymbol{w} \backslash \boldsymbol{v})$ can be reconstructed from $(h_{\boldsymbol{v}}, h_{\boldsymbol{w}}, e_{\boldsymbol{v} \backslash \boldsymbol{w}, \boldsymbol{w} \backslash \boldsymbol{v}})$, as shown in Figure~\ref{fig:k-order-information and mixtuples}(B1). Similarly, the distance matrix of $(v_1, v_2, ..., v_k, j)$ can also be reconstructed from $\big(h_{\bm{v}}, ( h^t_{\boldsymbol{w}}\mid \boldsymbol{w} \in N_{j}^F(\boldsymbol{v})) \big)$ in update function of $k$-F-DisGNN as shown in Figure~\ref{fig:k-order-information and mixtuples}(B2). These enable $k$-E-DisGNN and $k$-F-DisGNN to reconstruct all $(k+1)$-tuples' distance matrices during message passing and thus learn all the $(k+1)$-order geometric information contained in the graph.

With the ability to learn high-order geometric information, $k$-DisGNNs can learn geometric structures that cannot be captured by Vanilla DisGNNs. For example, consider the counterexample shown in Figure~\ref{fig:ce20}. The two geometric graphs have a different number of small equilateral triangles, which cannot be distinguished by Vanilla DisGNNs even with infinite message passing layers. However, $3$-DisGNN and $2$-E/F-DisGNN can easily distinguish the graphs by counting the number of small equilateral triangles, which is actually a kind of 3-order geometric information. This example also illustrates that high-order DisGNNs are \textit{strictly more powerful} than Vanilla DisGNNs.

\subsection{Unifying Existing Geometric Models with DisGNNs}
\label{sec:unify}

There have been many attempts to improve GDL models by \textit{manually} designing and incorporating high-order geometric features such as angles (3-order) and dihedral angles (4-order). These features are all invariant geometric features that can be learned by some $k$-DisGNNs. It is therefore natural to ask whether these models can be implemented by $k$-DisGNNs. In this subsection, we show that two classical models, DimeNet~\citep{dimenet} and GemNet~\citep{gemnet}, are just special cases of $k$-DisGNNs, thus unifying existing methods based on \textit{hand-crafted} features with a learning paradigm that learns \textit{arbitrary} high-order features from distance matrix.

% DimeNet proposed a message passing scheme analogous to belief propagation which incorporates angle information into it. To be specific, 
DimeNet embeds atom $i$ with a set of incoming messages $m_{ji}$, i.e., $h_i=\sum_{j\in N_i} m_{ji}$, and updates the message $m_{ji}$ by
\begin{equation}
\label{eq:dimenet update}
    m_{ji}^{t+1} = f_{\rm MP}^{\rm D}\big(m_{ji}^{t}, \sum_{k} f_{\rm int}^{\rm D}(m_{kj}^{t}, d_{ji}, \phi_{kji})\big),
\end{equation}
where $d_{ji}$ is the distance between node $j$ and node $i$, and $\phi_{kji}$ is the angle $kji$. We simplify the subscript of $\sum$, same for that in Equation~(\ref{eq:gemnet update}). The detailed ones are referred to Appendix~\ref{sec:DimeNet_proof}, \ref{sec:GemNet_proof}.

DimeNet is one early approach that uses geometric features to improve the geometry learning ability of GNNs, especially useful for learning the \textit{angle-relevant} energy or other chemical properties. Note that the angle information that DimeNet explicitly incorporates into its message passing function is actually a kind of 3-order geometric information, which can be exploited from distance matrices by our 2-E/F-DisGNN with a learning paradigm. This gives the key insight for the following proposition.
\begin{proposition}
\label{prop:DimeNet}
2-E-DisGNN and 2-F-DisGNN can implement DimeNet.
\end{proposition}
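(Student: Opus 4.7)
The plan is to construct an explicit simulation: I would show that a 2-E-DisGNN (resp.\ 2-F-DisGNN) update can reproduce DimeNet's message update step by step. DimeNet maintains a \emph{directed} message $m^t_{ji}$ for every ordered pair, whereas 2-DisGNNs maintain tuple representations $h^t_{(i,j)}$ for every ordered pair. A naive attempt to set $h^t_{(i,j)} = m^t_{ij}$ fails: to update $m^{t+1}_{ji}$, DimeNet requires $m^t_{kj}$, but $(k,j)$ is neither a 1-neighbor nor a 2-neighbor of $(j,i)$ under the 2-WL neighbor structure. The key idea is therefore to let $h^t_{(i,j)}$ encode the ordered pair $\bigl(m^t_{ij},\, m^t_{ji}\bigr)$, exploiting the fact that the learnable, sufficiently expressive initialization and update functions can pack and unpack such pairs within a single tuple representation.

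I would organize the proof into three routine verifications. \textbf{(1) Initialization.} DimeNet initializes $m^0_{ji}$ from $z_i, z_j, d_{ij}$, which are exactly the inputs of $f_{\rm init}$, so $h^0_{(i,j)} = f_{\rm init}(z_i, z_j, e_{ij})$ can encode both $m^0_{ij}$ and $m^0_{ji}$ in parallel. \textbf{(2) Message passing.} To update $h^{t+1}_{(a,b)}$ so that it contains $m^{t+1}_{ab} = f^{\rm D}_{\rm MP}\bigl(m^t_{ab},\, \sum_k f^{\rm D}_{\rm int}(m^t_{ka}, d_{ab}, \phi_{kab})\bigr)$, every needed quantity sits in the input of $f^{{\rm E},t}_{\rm MP}$: $m^t_{ab}$ and $d_{ab}$ are extracted from $h^t_{(a,b)}$ itself; for each $k$, the tuple $(a,k) \in N_2((a,b))$ contributes $h^t_{(a,k)}$, which by the packing trick contains $m^t_{ka}$ and the distance $d_{ak}$, while the explicit edge representation $e^t_{b,k}$ supplies $d_{bk}$. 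The angle $\phi_{kab}$ is a function of the triangle side lengths $d_{ka}, d_{ab}, d_{kb}$ via the law of cosines, hence computable from inputs already in hand. An analogous argument using $N_1((a,b))$ handles the companion direction $m^{t+1}_{ba}$. For 2-F-DisGNN, observe that the $k$-th F-neighbor of $(a,b)$ is the pair $\bigl((k,b),\,(a,k)\bigr)$, which simultaneously provides $h^t_{(a,k)}$ (hence $m^t_{ka}$ and $d_{ak}$) and $h^t_{(k,b)}$ (hence $d_{kb}$); this compensates for the absence of an explicit edge representation in the F-variant. \textbf{(3) Output.} DimeNet's per-atom aggregation $\sum_j m_{ji}$ followed by a graph-level pool is a special case of the permutation-invariant pooling $f_{\rm out}\bigl(\msl h^T_{(i,j)} \mid i,j \in V \msr\bigr)$, which can recover the same scalar.

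The main obstacle is conceptual rather than computational: the asymmetry between DimeNet's directed messages and the ordered-tuple neighbor structure of 2-WL initially seems to block a direct simulation, because the required source message $m^t_{kj}$ does not correspond to any 2-WL neighbor of $(j,i)$. Once the packing trick $h^t_{(i,j)} \leftrightarrow (m^t_{ij}, m^t_{ji})$ is adopted, all remaining steps become routine: the angle reconstruction is a mechanical application of the law of cosines, the ``sum over $k$'' is recoverable from a multiset aggregation since sums are permutation-invariant, and the learnable aggregators $f^{{\rm E},t}_{\rm MP}$ and $f^{{\rm F},t}_{\rm MP}$ are expressive enough to implement the sum-and-apply pattern of DimeNet's interaction block. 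Care must also be taken to check that if DimeNet uses a distance cutoff, the indicator $\mathbbm{1}[d_{ij} < r_{\rm cut}]$ can be baked into $f^{\rm D}_{\rm int}$ since it is itself a function of available distances.
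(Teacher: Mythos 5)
Your construction matches the paper's proof in all essentials: you pack both directed messages $(m^t_{ij}, m^t_{ji})$ into the single ordered-tuple representation $h^t_{(i,j)}$, recover the angle $\phi_{kab}$ from the three side lengths $d_{ab}, d_{ak}, d_{kb}$ that are jointly available from the center tuple, the neighbor tuples (or the explicit edge representation in the E-variant), and this is exactly the paper's key observation that $(h_{ab}, h_{kb}, h_{ak})$ determines the full distance matrix of the 3-tuple $abk$. The only substantive deviation is cosmetic: the paper argues the E-case by showing the 2-E update subsumes the 2-F update, whereas you argue it directly via $e^t_{b,k}$; both work.

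One step, as written, does not go through: the output block. You claim that DimeNet's readout $t=\sum_a {\rm MLP}\bigl(\sum_k m_{ka}\bigr)$ is ``a special case of'' the flat pooling $f_{\rm out}\bigl(\msl h^T_{(i,j)} \mid i,j \in V\msr\bigr)$. But the flat multiset of pair representations discards the grouping of messages by their common target atom, and since DimeNet applies a nonlinearity \emph{after} the per-atom sum, the quantity $\sum_a {\rm MLP}\bigl(\sum_k m_{ka}\bigr)$ is not in general a function of $\msl (m^T_{ij}, m^T_{ji}, d_{ij}) \mid i,j\msr$ alone. The paper resolves this by spending one additional message-passing round to compute the per-atom aggregate $u_a = f_{\rm atom}\bigl(\msl (m_{ka}, e_{\rm RBF}^{(ka)}) \mid k\msr\bigr)$ \emph{inside} the tuple representations (it is recoverable from, e.g., the multiset $\msl h_{ak} \mid k\msr$ available to the update of $h_{ab}$), and only then pooling the diagonal tuples $\msl h_{aa} \mid a\msr$ to extract $\msl u_a \mid a\msr$ and sum. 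You have already established in your step (2) that such per-atom aggregation fits inside one MP layer, so the fix is routine, but it must be stated: the simulation needs $T+1$ DisGNN layers for $T$ DimeNet interaction blocks, not a bare final pool.
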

GemNet is also a graph neural network designed to process graphs embedded in Euclidean space. Unlike DimeNet, GemNet incorporates both angle and \textit{dihedral angle} information into the message passing functions, with a 2-step message passing scheme. The core procedure is as follows:
\begin{equation}
\label{eq:gemnet update}
    m_{ca}^{t+1} = f_{\rm MP}^{\rm G}\big(m_{ca}^{t}, \sum_{b, d} f_{\rm int}^{\rm G}(m_{db}^{t}, d_{db}, \phi_{cab}, \phi_{abd}, \theta_{cabd})\big),
\end{equation}
where $\theta_{cabd}$ is the dihedral angle of planes $cab$ and $abd$. The use of dihedral angle information allows GemNet to learn at least 4-order geometric information, making it much more powerful than models that only consider angle information. We now prove that GemNet is just a special case of 3-E/F-DisGNN, which can also learn 4-order geometric information during its message passing stage.

\begin{proposition}
\label{prop:GemNet}
3-E-DisGNN and 3-F-DisGNN can implement GemNet.
\end{proposition}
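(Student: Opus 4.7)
The plan is to show that every GemNet layer can be simulated by two consecutive message passing layers of 3-E-DisGNN (or 3-F-DisGNN), using 3-tuple representations that carry both the local geometry and GemNet's edge embeddings. I would maintain the invariant that $h^t_{(c,a,b)}$ contains (i) the three pairwise distances and atom types of $\{c,a,b\}$ and (ii) the three GemNet edge states $m_{ca}^t, m_{ab}^t, m_{cb}^t$ written into dedicated coordinate slots; since GemNet's edge states depend only on the pair endpoints, these values are consistent across every triple containing the corresponding pair. GemNet's initial embeddings $m_{ca}^0$ are continuous functions of the atom types of $c,a$ and the distance $d_{ca}$, all of which are inputs to $f_{\rm init}$, so a single universal MLP suffices to establish the invariant at $t=0$.

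\textbf{Simulating one GemNet layer with two 3-DisGNN layers.} I would rewrite the GemNet update as a nested sum
\begin{equation*}
m_{ca}^{t+1} = f_{\rm MP}^{\rm G}\Bigl(m_{ca}^t,\, \sum_{b'} s^t_{(c,a,b')}\Bigr), \qquad s^t_{(c,a,b')} := \sum_{d} f_{\rm int}^{\rm G}\bigl(m_{db'}^t, d_{db'}, \phi_{cab'}, \phi_{ab'd}, \theta_{cab'd}\bigr),
\end{equation*}
and simulate the inner and outer sums with one 3-DisGNN step each. The first step produces $s^t_{(c,a,b)}$ at every triple. For 3-E-DisGNN, I aggregate over $N_1((c,a,b)) = \{(d,a,b) : d \in V\}$: the neighbor $(d,a,b)$ supplies $m_{db}^t$ directly from its stored edge slot, while the triple $\bigl(h^t_{(c,a,b)}, h^t_{(d,a,b)}, e^t_{cd}\bigr)$ determines all six pairwise distances of the 4-tuple $(c,a,b,d)$ by the $(k+1)$-order reconstruction argument of Section~\ref{sec:high-order geometry}, yielding $d_{db}$, $\phi_{cab}$, $\phi_{abd}$ and $\theta_{cabd}$; a universal MLP inside $f^{{\rm E},t}_{\rm MP}$ then evaluates $f_{\rm int}^{\rm G}$ and sums. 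For 3-F-DisGNN the same data is accessible because for $j=d$ the folklore triple $\bigl((d,a,b),(c,d,b),(c,a,d)\bigr)$ already covers all six pairwise distances of $(c,a,b,d)$ and carries $m_{db}$ inside $h^t_{(d,a,b)}$. The second 3-DisGNN step carries out the outer sum $\sum_{b'}$: for 3-E-DisGNN it aggregates over $N_3((c,a,b)) = \{(c,a,b') : b' \in V\}$, each neighbor supplying the freshly produced $s^t_{(c,a,b')}$, after which $f_{\rm MP}^{\rm G}$ is applied to obtain $m_{ca}^{t+1}$; the other two edge slots $m_{ab}^{t+1}, m_{cb}^{t+1}$ are handled in parallel by the symmetric choices of $j$. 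The same value of each new edge embedding is written into every triple carrying that edge, preserving the invariant for the next round. The analogous construction for 3-F-DisGNN uses $j=b'$ in its second step.

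\textbf{Readout and expected obstacle.} GemNet's final readout $h_i = \sum_j m_{ji}$ is a permutation-invariant pooling of edge embeddings, which is implementable by the output block of $k$-DisGNNs after reading the edge slots off the triple representations. The main technical obstacle is verifying that the dihedral $\theta_{cabd}$ and the auxiliary angle $\phi_{abd}$ are both reconstructible from the exact neighbor data supplied within a single 3-DisGNN step; this reduces to the $(k+1)$-order reconstruction argument already established in Section~\ref{sec:high-order geometry}, so the bulk of the proof is a careful case check, for each variant, that the particular neighborhood used ($N_1$ for 3-E-DisGNN and $N_d^F$ for 3-F-DisGNN) does supply the full 4-tuple distance matrix together with the required message $m_{db}$. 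Once this is in hand, the remainder of the argument is a standard universal-approximation claim on the MLPs implementing $f_{\rm int}^{\rm G}$ and $f_{\rm MP}^{\rm G}$.
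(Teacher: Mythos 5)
Your proposal is correct and follows essentially the same route as the paper's proof: embed GemNet's directional edge embeddings (together with the tuple's full distance information) into the 3-tuple representations, use the fact that a tuple, its neighbor, and the connecting edge (or the folklore neighbor triple) reconstruct the full 4-tuple distance matrix to recover $d_{db}$, $\phi_{cab}$, $\phi_{abd}$, $\theta_{cabd}$, and split the double sum over $(b,d)$ into two consecutive message-passing layers via two-step pooling. The paper additionally walks through GemNet's atom-embedding and self-interaction blocks and stores all six ordered edge states per triple, but these are routine extensions of the invariant you already maintain.
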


It is worth noting that while both $k$-DisGNNs and existing models can learn some $k$-order geometric information, $k$-DisGNNs have the advantage of \textbf{learning arbitrary $k$-order geometric information}. This means that we can learn different high-order geometric features according to specific tasks in a \textit{data-driven} manner, including but not limited to angles and dihedral angles.

\subsection{Completeness and Universality of $k$-DisGNNs}
\label{sec:completeness and universality}

We have provided an illustrative example in Section~\ref{sec:high-order geometry} that demonstrates the ability of $k$-DisGNNs to distinguish certain geometric graphs, which is not achievable by Vanilla DisGNNs. Actually, all counterexamples presented in Appendix~\ref{sec:counterexamples} can be distinguished by 3-DisGNNs or 2-E/F-DisGNNs. This leads to a natural question: Can $k$-DisGNNs distinguish all geometric graphs with a finite and small value of $k$? Furthermore, can they approximate arbitrary functions over geometric graphs?

We now present our main findings that elaborate on the \textbf{completeness} and \textbf{universality} of $k$-DisGNNs with finite and small $k$, which essentially highlight the high theoretical expressiveness of $k$-DisGNNs. 
\begin{theorem}\label{theorem:expressiveness}(informal)
    Let $\mathcal{M(\theta)} \in {\rm MinMods} = \{$1-round 4-DisGNN, 2-round 3-E/F-DisGNN$\}$ with parameters $\theta$. Denote $h_m = f_{\rm node} \big( \msl h^T_{\bm{v}} \mid v_0 = m \msr \big) \in \mathbb{R}^{K'}$ as node $m$'s representation produced by $\mathcal{M}$ where $f_{\rm node}$ is an injective multiset function, and $\mathbf{x}_m^c$ as node $m$'s coordinates w.r.t. the center. Denote ${\rm MLPs}: \mathbb{R}^{K'} \to \mathbb{R}$ as a multi-layer perceptron. Then we have:
    \begin{enumerate}[leftmargin=20pt,itemsep=0pt,topsep=0pt]
        \item (\textbf{Completeness}) There exists $\theta_0$ such that $\mathcal{M} (\theta_0)$ can distinguish all pairs of non-congruent geometric graphs.
        \item (\textbf{Universality for scalars}) $\mathcal{M (\theta)}$ is a universal function approximator for continuous, $E(3)$ and permutation invariant functions $f:\mathbb{R}^{3\times n}\to \mathbb{R}$ over geometric graphs.
        \item (\textbf{Universality for vectors}) $f^{\rm equiv}_{\rm out} = \sum_{m=1}^{|V|} {\rm MLPs}\big( h_m  \big)\mathbf{x}_m^c$ is a universal function approximator for continuous, $O(3)$ equivariant and translation and permutation invariant functions $f:\mathbb{R}^{3\times n}\to \mathbb{R}^3$ over geometric graphs.
    \end{enumerate}
\end{theorem}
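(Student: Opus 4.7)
The plan is to tackle the three claims in order, since completeness is the most technical and the two universality statements will follow from it through standard templates. The overall strategy is: first show that after the stated number of rounds, the multiset of tuple representations encodes enough geometric information to reconstruct the full inter-node distance matrix up to permutation; then invoke Theorem~\ref{theorem:distance matrix} to pass from distance-matrix reconstruction to congruence; finally apply Stone--Weierstrass for scalar universality and the invariant-to-equivariant decomposition of Villar et al.\ (2021) for vector universality. Throughout, I would realize $f_{\rm init}$, $f^t_{\rm MP}$, $f_{\rm out}$ as MLPs so that $\mathcal{M}(\theta)$ depends continuously on the input.

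For completeness of 1-round 4-DisGNN, I would analyze what $h^1_{\bm v}$ encodes for $\bm v = (v_1, v_2, v_3, v_4)$. At initialization, $h^0_{\bm v}$ already embeds the full $4\times 4$ distance submatrix plus atomic types, characterizing $\bm v$ up to congruence by Theorem~\ref{theorem:distance matrix}. After one round, $h^1_{\bm v}$ further aggregates, for each $j\in[4]$, the multiset $\msl h^0_{\bm w} \mid \bm w \in N_j(\bm v) \msr$, whose elements encode the distances from each node $w\in V$ to the other three reference points. Combining across $j$, any non-coplanar reference 4-tuple pins down every node's position, and hence the full distance matrix up to permutation. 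A multiset-matching argument (if $\msl h^1_{\bm v}\msr_{G_1} = \msl h^1_{\bm v}\msr_{G_2}$ then the two distance matrices agree up to permutation) closes the case. For 2-round 3-E/F-DisGNN the argument is analogous but uses one extra round: Section~\ref{sec:high-order geometry} already shows that a single round of these models reconstructs the $4\times 4$ distance submatrix on any 4-node set, via the edge representation $e^t$ in E-DisGNN or the folklore aggregation in F-DisGNN, thereby simulating the initialization of 4-DisGNN; the second round then completes the reconstruction exactly as before.

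For scalar universality, completeness yields $\theta_0$ such that $\mathcal{M}(\theta_0)$ is injective on the quotient by permutations and $E(3)$, and continuity of the MLP realization implies continuity on this quotient. On any compact input domain, the family $\{\mathcal{M}(\theta):\theta\}$ separates points and (by letting the output MLP realize arbitrary polynomials) forms a sub-algebra containing constants, so Stone--Weierstrass gives density in the space of continuous $E(3)$- and permutation-invariant scalars. For vector universality, I invoke the scalars-are-universal theorem of Villar et al.\ (2021): every continuous $O(3)$-equivariant, translation- and permutation-invariant $f:\mathbb{R}^{3\times n}\to\mathbb{R}^3$ decomposes as $f(X)=\sum_m g_m(X)\,\mathbf{x}_m^c$ with $E(3)$- and permutation-invariant scalar coefficients $g_m$. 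Applying the completeness argument at the node level shows that $h_m$ is a complete invariant for the graph viewed from $m$, so an MLP on $h_m$ uniformly approximates $g_m$ on compact domains; summing yields $f^{\rm equiv}_{\rm out}\approx f$.

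The main obstacle will be the completeness lemma, specifically justifying the exact round counts and handling degenerate (coplanar, or otherwise rank-deficient) reference tuples, where three distances alone only determine a node up to reflection. The resolution is to argue that the multiset over \emph{all} tuples—not a single generic one—always contains enough non-degenerate reference tuples to pin down the configuration, under a mild general-position assumption on the input graph, and that even at degenerate tuples the aggregated multiset retains the reflection-matching information needed for a consistent global reconstruction. Once this lemma is in place, the density arguments for scalars and the equivariant decomposition for vectors go through routinely.
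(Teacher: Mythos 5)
Your overall architecture matches the paper's: reconstruct the geometry from tuple representations, invoke Theorem~\ref{theorem:distance matrix} for completeness, get scalar universality from completeness (the paper cites Theorem~4.1 of Hordan et al.\ rather than running Stone--Weierstrass directly, but it is the same mechanism), and get vector universality from the decomposition $f(X)=\sum_m g_m(X)\mathbf{x}_m^c$ of Villar et al.\ together with the observation that $h_m$ separates the orbits of the stabilizer group of node $m$. The vector part in particular is essentially the paper's argument.

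The gap is in the completeness step, and it is exactly the obstacle you name at the end without actually resolving. After one round of 4-DisGNN, the color $c^1_{\bm k}$ of an affinely independent reference tuple $\bm k$ gives you, for each $m\in[4]$, an \emph{unordered} multiset $P_{m,\bm k}$ in which each node $j$ contributes only its three distances to $\bm k\setminus\{k_m\}$, hence a pair of candidate positions mirror-symmetric about the plane spanned by those three reference points. ``Combining across $j$'' is not a routine step: because the four multisets are unordered, you cannot match which pair in $P_{1,\bm k}$ and which pair in $P_{2,\bm k}$ belong to the same node, so the naive intersection of the four candidate sets does not obviously collapse to a single configuration. The paper's proof spends most of its length on precisely this: an iterative refinement that first extracts the unambiguous pairs (multiplicity two, or coincident candidates lying on the reference plane), then proves by contradiction --- using the fact that the midpoint of every surviving pair lies on the corresponding reference plane, so if every pair remained doubly ambiguous across all four planes their common centroid would lie on all four planes, contradicting affine independence of $\bm k$ --- that some pair intersects a pair from another plane in exactly one point, which can then be peeled off and the procedure repeated. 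Your proposed fix (``a mild general-position assumption on the input graph'' plus appeal to the multiset over all tuples) both fails to supply this argument and weakens the theorem: the statement has no general-position hypothesis, and the paper handles the genuinely degenerate (coplanar) case separately by reducing to a 2D version of the same argument, not by assuming it away. Until the reflection-disambiguation argument is supplied unconditionally, the completeness claim --- and with it both universality claims --- is not established.
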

In fact, completeness directly implies universality for scalars~\citep{hordan2023complete}. Since we can actually recover the whole geometry from arbitrary node representation $h_m$, universality for vectors can be proved with conclusions from~\citet{villar2021scalars}. Also note that as order $k$ and round number go higher, the completeness and universality still hold. Formal theorems and detailed proof are provided in Appendix~\ref{proof:expressiveness}.

There is a concurrent work by~\citet{delle2023three}, which also investigates the completeness of $k$-WL on distance graphs. In contrast to our approach of using only one tuple's final representation to reconstruct the geometry, they leverage all tuples' final representations. They prove that with the distance matrix, 3-round geometric 2-FWL and 1-round geometric 3-FWL are complete for geometry. This conclusion can also be extended to $k$-DisGNNs, leading to the following theorem:
\begin{theorem}
\label{theorem:univers2}
    The completeness and universality for scalars stated in Theorem~\ref{theorem:expressiveness} hold for $\mathcal{M(\theta)} \in {\rm MinMods_{ext}} = \{$3-round 2-E/F-DisGNN, 1-round 3-E/F-DisGNN$\}$ with parameters $\theta$.
\end{theorem}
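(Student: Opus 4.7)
The plan is to transfer the geometric $k$-FWL completeness result of Delle Rose et al.\ (2023) into our framework, and then promote completeness to universality for scalars via the same separation argument used in Theorem~\ref{theorem:expressiveness}. Delle Rose et al.\ show that, on distance graphs, 3 rounds of geometric 2-FWL and 1 round of geometric 3-FWL produce distinct multisets of final tuple labels on any pair of non-congruent geometric graphs. First, I would argue that $k$-F-DisGNN subsumes geometric $k$-FWL: the initialization block embeds the full ordered distance submatrix and node types of each $k$-tuple (Section~\ref{sec:High-order DisGNNs}), which is at least as fine as the initial label of geometric $k$-FWL, while the folklore message passing in Equation~\eqref{def:k-F-DisGNN} uses exactly the neighborhood $N_j^F$. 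Since MLPs can approximate any injective hash to arbitrary precision on bounded inputs, there exist parameters $\theta$ under which 3-round 2-F-DisGNN and 1-round 3-F-DisGNN simulate the corresponding geometric $k$-FWL updates. Composing with an injective multiset output $f_{\rm out}$ then yields completeness.

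For $k$-E-DisGNN, I would show that the explicit edge representation $e_{ij}^t$ (Equation~\eqref{def:et}) lets its WL-style update match the expressive power of the folklore update in the geometric setting. The key observation, already used in Section~\ref{sec:high-order geometry}, is that when $\boldsymbol{w}\in N_j(\boldsymbol{v})$ is obtained by replacing $v_j$ with some $w$, the triple $(h_{\boldsymbol{v}}^t, h_{\boldsymbol{w}}^t, e_{v_j,w}^t)$ determines the full distance submatrix of $(v_1,\dots,v_k,w)$; this is precisely the information a folklore step of geometric $k$-FWL would use. Consequently each round of $k$-E-DisGNN can simulate one round of geometric $k$-FWL on the distance graph, so 3-round 2-E-DisGNN and 1-round 3-E-DisGNN inherit the completeness of their folklore counterparts.

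Once completeness is established, universality for scalars follows from the argument of Hordan et al.\ (2023) already invoked in Theorem~\ref{theorem:expressiveness}: a complete continuous invariant embedding of the congruence class, combined with a universal approximator on multisets of continuous vectors (e.g.\ DeepSets with sum-pooling and an MLP head), is dense in the class of continuous, $E(3)$- and permutation-invariant functions on geometric graphs. The main obstacle will be the $k$-E-DisGNN step: in the non-geometric setting $k$-WL is strictly weaker than $k$-FWL, so the claim that the distance-informed edge features $e_{ij}^t$ close this gap requires a careful audit of round counts. In particular, one must verify that computing $e_{ij}^t$ from tuple features at iteration $t$ does not silently consume an additional round, so that the stated counts (3 rounds at $k=2$, 1 round at $k=3$) remain tight; otherwise the universality conclusion would only hold with one extra round and the theorem as stated would need to be weakened.
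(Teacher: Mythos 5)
Your overall route is the same as the paper's: import the completeness of geometric 3-round 2-FWL and 1-round 3-FWL from Delle Rose et al., note that the F-DisGNNs are continuous versions that can realize injective hashes, reduce the E-versions to the F-versions, and then obtain universality for scalars from the Hordan et al.\ argument already used for Theorem~\ref{theorem:expressiveness}. The F-DisGNN half and the scalar-universality step are fine.

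The gap is in your E-to-F reduction. Your justification --- that $(h^t_{\boldsymbol{v}}, h^t_{\boldsymbol{w}}, e^t_{\boldsymbol{v}\backslash\boldsymbol{w},\boldsymbol{w}\backslash\boldsymbol{v}})$ determines the distance submatrix of the $(k{+}1)$-tuple, ``precisely the information a folklore step would use'' --- is only true at the \emph{first} round. At $t\ge 1$ the folklore neighbor $\big(h^t_{\Phi_1(\boldsymbol{v},w)},\dots,h^t_{\Phi_k(\boldsymbol{v},w)}\big)$ carries aggregated history, not just distances, so your argument does not show that round 2 or 3 of 2-E-DisGNN simulates round 2 or 3 of 2-FWL. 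The concern you raise (whether computing $e^t_{ij}$ consumes an extra round) is not the real failure mode; it doesn't, since $e^t_{ij}$ is built from iteration-$t$ features. What actually saves the 3-round 2-E-DisGNN case is a $k=2$-specific fact: by Equation~(\ref{def:et}), $e^t_{ij}=f^t_{\rm e}(e_{ij},h^t_{ij})$ contains the \emph{current tuple representation} $h^t_{ij}$ itself, so the 1-neighbor component $\msl (h^t_{kj}, e^t_{ik})\mid k\in V\msr$ of 2-E-DisGNN subsumes the folklore neighbor multiset $\msl (h^t_{kj}, h^t_{ik})\mid k\in V\msr$ at \emph{every} round. For $k=3$ no such identity holds, which is exactly why the paper restricts the E-to-F implementation to the 1-round case (where your distance-submatrix argument does suffice) and explicitly flags the multi-round $k=3$ question as nontrivial. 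Replace your uniform ``each round simulates one round'' claim with these two separate arguments and the proof closes.
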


This essentially lowers the required order $k$ for achieving universality on scalars to 2. However, due to the lower order and fewer rounds, the expressiveness of a single node's representation may be limited, thus the universality for vectors stated in Theorems~\ref{theorem:expressiveness} may not be guaranteed.

\section{Experiments}
\label{sec:exper}
In this section, we evaluate the experimental performance of $k$-DisGNNs. Our main objectives are to answer the following questions: 

\textbf{Q1} Does 2/3-E/F-DisGNN outperform their counterparts in experiments (corresponding to Section~\ref{sec:unify})? 

\textbf{Q2} As universal models for scalars (when $k\geq 2$), do $k$-DisGNNs also have good experimental performance (corresponding to Section~\ref{sec:completeness and universality})? 

\textbf{Q3} Does incorporating well-designed edge representations in the $k$-E-DisGNN result in improved performance?

The best and the second best results are shown in \textbf{bold} and \uline{underline} respectively in tables. Detailed experiment configuration and supplementary experiment information can be found in Appendix~\ref{sec:detailed exper}. Our code is available at \url{https://github.com/GraphPKU/DisGNN}.

\textbf{MD17.} MD17~\citep{MD17} is a dataset commonly used to evaluate the performance of machine learning models in the field of molecular dynamics. It contains a collection of molecular dynamics simulations of small organic molecules such as aspirin and ethanol. Given the atomic numbers and coordinates, the task is to predict the energy of the molecule and the atomic forces. We mainly focus on the comparison between 2-F-DisGNN/3-E-DisGNN and DimeNet/GemNet. At the same time, we also compare $k$-DisGNNs with the state-of-the-art models: FCHL~\citep{FCHL}, PaiNN~\citep{PaiNN}, NequIP~\citep{NequIP}, TorchMD~\citep{TorchMD}, GNN-LF~\citep{GNN-LF}. The results are shown in Table~\ref{tab:MD17}. 2-F-DisGNN and 3-E-DisGNN outperform their counterparts on \textbf{16/16} and 6/8 targets, respectively, with an average improvement of 43.9\% and 12.23\%, suggesting that data-driven models can subsume carefully designed manual features given high expressiveness. In addition, $k$-DisGNNs also achieve the \textbf{best performance} on 8/16 targets, and achieve the second-best performance on \textbf{all the other targets}, outperforming the best baselines GNN-LF and TorchMD by 10.19\% and 12.32\%, respectively. Note that GNN-LF and TorchMD are all complex equivariant models. Beyond these, we perform experiments on \textbf{revised MD17}, which has better data quality, and $k$-DisGNNs outperforms the SOTA models~\citep{batatia2022mace,musaelian2023learning} on a wide range of targets. The results are referred to Appendix~\ref{sec:supply exper}. The results firmly answer \textbf{Q1} and \textbf{Q2} posed at the beginning of this section and demonstrate the potential of pure distance-based methods for graph deep learning.
\begin{table*}[t]
\centering
\caption{MAE loss on MD17. Energy (E) in kcal/mol, force (F) in kcal/mol/\r{A}. We \textcolor[HTML]{ADCA91}{color} the cell if DisGNNs outperforms its counterpart. The improvement ratio is calculated relative to DimeNet. $k$-DisGNNs rank \textbf{top 2} on force prediction (which determines the accuracy of molecular dynamics~\citep{gemnet}) on average.}
\label{tab:MD17}
\resizebox{\linewidth}{!}{
\begin{tabular}{cc|ccccc|cc|cc}
\hline
\multicolumn{2}{c|}{Target}                       & FCHL    & PaiNN   & NequIP  & TorchMD          & GNN-LF          & DimeNet & 2F-Dis.                                 & GemNet          & 3E-Dis.                                 \\ \hline
                           & E                   & 0.182   & 0.167   & -       & \textbf{0.124}   & 0.1342          & 0.204   & \cellcolor[HTML]{E2EFDA}{\ul 0.1305}    & -               & 0.1466          \\
\multirow{-2}{*}{aspirin}  & F                   & 0.478   & 0.338   & 0.348   & 0.255            & {\ul 0.2018}    & 0.499   & \cellcolor[HTML]{E2EFDA}\textbf{0.1393} & 0.2168          & \cellcolor[HTML]{E2EFDA}0.2060          \\
                           & E                   & -       & -       & -       & \textbf{0.056}   & 0.0686          & 0.078   & \cellcolor[HTML]{E2EFDA}{\ul 0.0683}    & -               & 0.0795          \\
\multirow{-2}{*}{benzene}  & F                   & -       & -       & 0.187   & 0.201            & 0.1506          & 0.187   & \cellcolor[HTML]{E2EFDA}0.1474          & \textbf{0.1453} & {\ul 0.1471}                            \\
                           & E                   & 0.054   & 0.064   & -       & 0.054            & {\ul 0.0520}    & 0.064   & \cellcolor[HTML]{E2EFDA}\textbf{0.0502} & -               & 0.0541          \\
\multirow{-2}{*}{ethanol}  & F                   & 0.136   & 0.224   & 0.208   & 0.116            & 0.0814          & 0.230   & \cellcolor[HTML]{E2EFDA}\textbf{0.0478} & 0.0853          & \cellcolor[HTML]{E2EFDA}{\ul 0.0617}    \\
                           & E                   & 0.081   & 0.091   & -       & 0.079            & 0.0764          & 0.104   & \cellcolor[HTML]{E2EFDA}\textbf{0.0730} & -               & {\ul 0.0739}    \\
\multirow{-2}{*}{malonal.} & F                   & 0.245   & 0.319   & 0.337   & 0.176            & 0.1259          & 0.383   & \cellcolor[HTML]{E2EFDA}\textbf{0.0786} & 0.1545          & \cellcolor[HTML]{E2EFDA}{\ul 0.0974}    \\
                           & E                   & 0.117   & 0.166   & -       & \textbf{0.085}   & 0.1136          & 0.122   & \cellcolor[HTML]{E2EFDA}0.1146          & -               & {\ul 0.1135}    \\
\multirow{-2}{*}{napthal.} & F                   & 0.151   & 0.077   & 0.097   & 0.06             & 0.0550          & 0.215   & \cellcolor[HTML]{E2EFDA}{\ul 0.0518}    & 0.0553          & \cellcolor[HTML]{E2EFDA}\textbf{0.0478} \\
                           & E                   & 0.114   & 0.166   & -       & \textbf{0.094}   & 0.1081          & 0.134   & \cellcolor[HTML]{E2EFDA}{\ul 0.1071}    & -               & 0.1084          \\
\multirow{-2}{*}{salicyl.} & F                   & 0.221   & 0.195   & 0.238   & 0.135            & {\ul 0.1005}    & 0.374   & \cellcolor[HTML]{E2EFDA}\textbf{0.0862} & 0.1048          & 0.1186                                  \\
                           & E                   & 0.098   & 0.095   & -       & \textbf{0.074}   & 0.0930          & 0.102   & \cellcolor[HTML]{E2EFDA}{\ul 0.0922}    & -               & 0.1004          \\
\multirow{-2}{*}{toluene}  & F                   & 0.203   & 0.094   & 0.101   & 0.066            & 0.0543          & 0.216   & \cellcolor[HTML]{E2EFDA}\textbf{0.0395} & 0.0600          & \cellcolor[HTML]{E2EFDA}{\ul 0.0455}    \\
                           & E                   & 0.104   & 0.106   & -       & \textbf{0.096}   & 0.1037          & 0.115   & \cellcolor[HTML]{E2EFDA}{\ul 0.1036}    & -               & 0.1037          \\
\multirow{-2}{*}{uracil}   & F                   & 0.105   & 0.139   & 0.173   & 0.094            & \textbf{0.0751} & 0.301   & \cellcolor[HTML]{E2EFDA}{\ul 0.0876}    & 0.0969          & \cellcolor[HTML]{E2EFDA}0.0921          \\
\hline
Avg improv.                &                     & 11.58\% & -2.09\% & -       & \textbf{26.40\%} & 17.05\%         & 0.00\%  & {\ul 18.18\%}                           & -               & 13.51\%                                 \\
Rank                       & \multirow{-2}{*}{E} & 5       & 7       & -       & \textbf{1}       & 3               & 6       & {\ul 2}                                 & -               & 4                                       \\ \hline
Avg improv.                &                     & 31.85\% & 39.13\% & 29.86\% & 52.40\%          & 63.53\%         & 0.00\%  & \textbf{69.68\%}                        & 60.96\%         & {\ul 65.27\%}                           \\
Rank                       & \multirow{-2}{*}{F} & 7       & 6       & 8       & 5                & 3               & 9       & \textbf{1}                              & 4               & {\ul 2}                                \\
\hline
\end{tabular}
}
\end{table*}

\textbf{QM9.} QM9~\citep{qm9-1, qm9-2} consists of 134k stable small organic molecules with 19 regression targets. The task is like that in MD17, but this time we want to predict the molecule's properties such as the dipole moment. We mainly compare 2-F-DisGNN with DimeNet on this dataset and the results are shown in Table~\ref{tab:qm9}. 2-F-DisGNN outperforms DimeNet on \textbf{11/12} targets, by 14.27\% on average, especially on targets $\mu$ (65.0\%) and $\langle R^2 \rangle$ (90.4\%), which also answers \textbf{Q1} well. A full comparison to other state-of-the-art models is included in Appendix~\ref{sec:supply exper}.

\textbf{Effectiveness of edge representations.} To answer \textbf{Q3}, we split the edge representation $e_{ij}^t$ (Equation~(\ref{def:et})) into two parts, namely the pure distance (the first element) and the tuple representations (the second element), and explore whether incorporating the two elements is beneficial. We conduct experiments on MD17 with three versions of 2-DisGNNs, including 2-DisGNN (no edge representation), 2-e-DisGNN (add only edge weight) and 2-E-DisGNN (add full edge representation). The results are shown in Table~\ref{tab:ablation}. Both 2-e-DisGNN and 2-E-DisGNN exhibit significant performance improvements over 2-DisGNN, highlighting the significance of edge representation. Furthermore, 2-E-DisGNN outperforms 2-DisGNN and 2-e-DisGNN on 16/16 and 12/16 targets, respectively, with average improvements of 39.8\% and 3.8\%. This verifies our theory that capturing the \textit{full edge representation} connecting two tuples can boost the model's representation power.

\begin{minipage}{0.45\textwidth}
    \centering
    \captionof{table}{Comparison of 2-F-DisGNN and DimeNet on QM9.}
    \label{tab:qm9}
    \begin{tabular}{cc|cc}
        \hline
Target                & Unit            & DimeNet & 2F-Dis.         \\
\hline
$\mu$                 & D               & 0.0286  & \textbf{0.0100} \\
$\alpha$              & $a_0^3$         & 0.0469  & \textbf{0.0431} \\
$\epsilon_{\rm HOMO}$     & $\rm meV$       & 27.8    & \textbf{21.81}  \\
$\epsilon_{\rm LUMO}$     & $\rm meV$       & 19.7    & 21.22           \\
$\Delta \epsilon$     & $\rm meV$       & 34.8    &  \textbf{31.3}          \\
$\langle R^2 \rangle$ & $a_0^2$         & 0.331   & \textbf{0.0299} \\
ZPVE                  & $\rm meV$       & 1.29    & \textbf{1.26}          \\
$U_0$                 & $\rm meV$       & 8.02    & \textbf{7.33}   \\
$U$                   & $\rm meV$       & 7.89    & \textbf{7.37}   \\
$H$                   & $\rm meV$       & 8.11    & \textbf{7.36}   \\
$G$                   & $\rm meV$       & 8.98    & \textbf{8.56}            \\
$c_v$                 & $\rm cal/mol/K$ & 0.0249  & \textbf{0.0233}\\
\hline
    \end{tabular}
\end{minipage}%
\hfill
\begin{minipage}{0.45\textwidth}
\centering
    \captionof{table}{Effectiveness of edge representations on MD17. Energy (E) in kcal/mol, force (F) in kcal/mol/\r{A}.}
    \label{tab:ablation}
    \begin{tabular}{cc|ccc}
        \hline
\multicolumn{2}{c|}{Target}    & 2-Dis. & 2$\rm e$-Dis.   & 2E-Dis.         \\
\hline
\multirow{2}{*}{aspirin}  & E & 0.2120 & {\ul 0.1362}    & \textbf{0.1280} \\
                          & F & 0.4483 & {\ul 0.1749}    & \textbf{0.1279} \\
\multirow{2}{*}{benzene}  & E & 0.1533 & {\ul 0.0723}    & \textbf{0.0700} \\
                          & F & 0.2049 & \textbf{0.1475} & {\ul 0.1529}    \\
\multirow{2}{*}{ethanol}  & E & 0.0529 & {\ul 0.0506}    & \textbf{0.0502} \\
                          & F & 0.0850 & {\ul 0.0533}    & \textbf{0.0438} \\
\multirow{2}{*}{malonal.} & E & 0.0868 & {\ul 0.0736}    & \textbf{0.0732} \\
                          & F & 0.2124 & {\ul 0.0981}    & \textbf{0.0861} \\
\multirow{2}{*}{napthal.} & E & 0.1163 & \textbf{0.1133} & {\ul 0.1149}    \\
                          & F & 0.1318 & \textbf{0.0407} & {\ul 0.0531}    \\
\multirow{2}{*}{salicyl.} & E & 0.1209 & {\ul 0.1089}    & \textbf{0.1074} \\
                          & F & 0.2723 & {\ul 0.0960}    & \textbf{0.0806} \\
\multirow{2}{*}{toluene}  & E & 0.1437 & {\ul 0.0924}    & \textbf{0.0920} \\
                          & F & 0.1229 & {\ul 0.0528}    & \textbf{0.0431} \\
\multirow{2}{*}{uracil}   & E & 0.1152 & {\ul 0.1041}    & \textbf{0.1037} \\
                          & F & 0.2631 & \textbf{0.0874} & {\ul 0.0933}    \\
        \hline
    \end{tabular}
\end{minipage}

\section{Conclusions and Limitations}
\label{sec:conclusions}
\textbf{Conclusions.} In this work we have thoroughly studied the ability of GNNs to learn the geometry of a graph solely from its distance matrix. We expand on the families of counterexamples that Vanilla DisGNNs are unable to distinguish from their distance matrices by constructing families of symmetric and diverse geometric graphs, revealing the inherent limitation of Vanilla DisGNNs in capturing symmetric configurations. To better leverage the geometric structure information contained in distance graphs, we proposed $k$-DisGNNs, geometric models with high generality (ability to unify two classical and widely-used models, DimeNet and GemNet), provable completeness (ability to distinguish all pairs of non-congruent geometric graphs) and universality (ability to universally approximate scalar functions when $k\geq 2$ and vector functions when $k\geq 3$). In experiments, $k$-DisGNNs outperformed previous state-of-the-art models on a wide range of targets of the MD17 dataset. Our work reveals the potential of using expressive GNN models, which were originally designed for traditional GRL, for the GDL tasks, and opens up new opportunities for this domain.

\textbf{Limitations.} 
Although DisGNNs can achieve universality (for scalars) with a small order of 2, the results are still generated under the assumption that the distance graph is complete. In practice, for large geometric graphs such as proteins, this assumption may lead to intractable computation due to the $O(n^3)$ time complexity and $O(n^2)$ space complexity. However, we can still balance theoretical universality and experimental tractability by using either an appropriate cutoff, which can furthest preserve the completeness of the distance matrix in local clusters while cutting long dependencies to improve efficiency (and generalization), or using sparse but expressive GNNs such as subgraph GNNs~\citep{zhang2021nested}. We leave it for future work.

\section*{Acknowledgement}

Muhan Zhang is partially supported by the National Natural Science Foundation of China (62276003) and Alibaba Innovative Research Program.

\bibliographystyle{unsrtnat}
\bibliography{DisGNN}

%%%%%%%%%%%%%%%%%%%%%%%%%%%%%%%%%%%%%%%%%%%%%%%%%%%%%%%%%%%%
\newpage
\appendix
\onecolumn
\section{Counterexamples for Vanilla DisGNNs}
\label{sec:counterexamples}

In this section, we will give some counterexamples, i.e., pairs of geometric graphs that are not congruent but cannot be distinguished by Vanilla DisGNNs. We will organize the section as follows: First, we will give some isolated counterexamples that can be directly verified; Then, we will give some counterexamples that can hold the property (i.e., to be counterexamples) after some simple continuous transformation; Finally, we will give a family of counterexamples that can be obtained by the combination of some basic units. Since the latter two cases cannot be directly verified, we will also give the detailed proof.

We say there are $M$ \textit{kinds} of nodes if there are $M$ different labels after infinite iterations of Vanilla DisGNNs. Note that in this section, all the \textcolor[HTML]{AAAAAA}{grey} nodes and the ``edges'' are just for \textbf{visualization purpose}. Only colored nodes belong to the geometric graph.

Note that we provide \textbf{verification programs} in our code for all the counterexamples. The first type of counterexamples (Appendix~\ref{sec:ce1}) is limited in quantity and can be directly verified. The remaining two types (Appendix~\ref{sec:ce2} and Appendix~\ref{sec:ce3}) are families of counterexamples and have an infinite number of cases. Our code can verify them to some extent by randomly selecting some parameters. Theoretical proofs for these two types are also provided in Appendix~\ref{sec:ce2} and Appendix~\ref{sec:ce3}, respectively.

\subsection{Counterexamples That Can Be Directly Verified}
\label{sec:ce1}
Since the distance graph, which DisGNNs take as input, is a kind of complete graph and contains lots of geometric constraints, if we want to get a pair of non-congruent geometric graphs that cannot be distinguished by DisGNNs, they must exhibit great \textit{symmetry} and have just \textit{minor difference}. Inspired by this, we can try to sample nodes from regular polyhedrons, which themselves exhibit high symmetry. The first pair of geometric graphs is sampled from regular icosahedrons, which is referred to the main body of our paper, see Figure~\ref{fig:ce20}. Since all the nodes from both graphs have the same unordered distance list, there is only \textbf{1} kind of nodes for both graphs. We note that it is the only counterexample we can sample from just regular icosahedron.

The following pairs of geometric graphs are all sampled from regular dodecahedrons. We will distinguish them by the graph size.  

\textbf{6-size and 14-size counterexamples.} 

\begin{figure}[H]
\vskip 0in
\begin{center}
\centerline{\includegraphics[width=\columnwidth / 2]{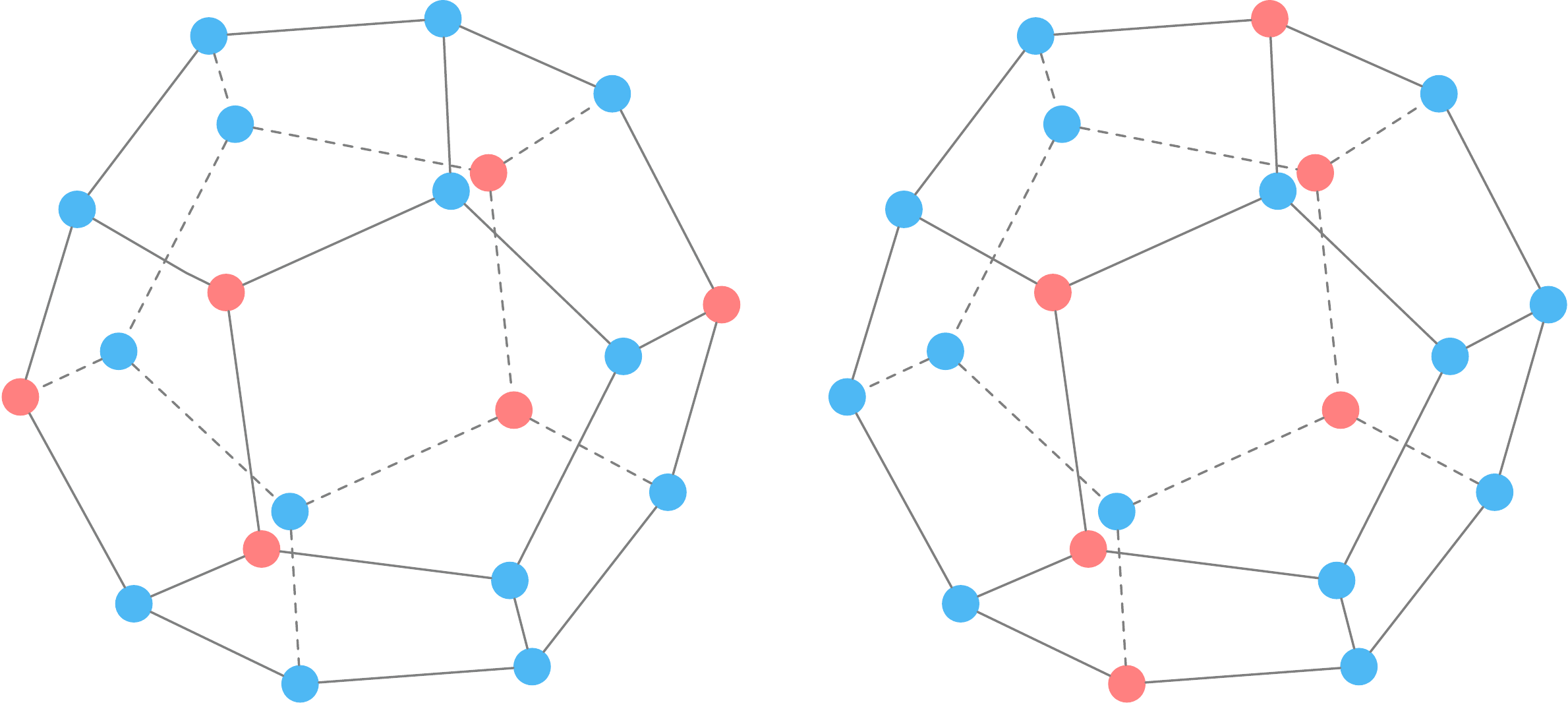}}

\label{fig:counterexample-12-6-14}
\end{center}
\vskip -0.2in
\end{figure}

A pair of 6-size geometric graphs (\textcolor[HTML]{FF7F7F}{red} nodes) and a pair of 14-size geometric graphs (\textcolor[HTML]{4EB8F4}{blue} nodes) that are counterexamples are shown in the figure above. They are actually complementary in terms of forming all vertices of a regular dodecahedron. 

\begin{figure}[H]
\vskip 0in
\begin{center}
\centerline{\includegraphics[width=\columnwidth / 2]{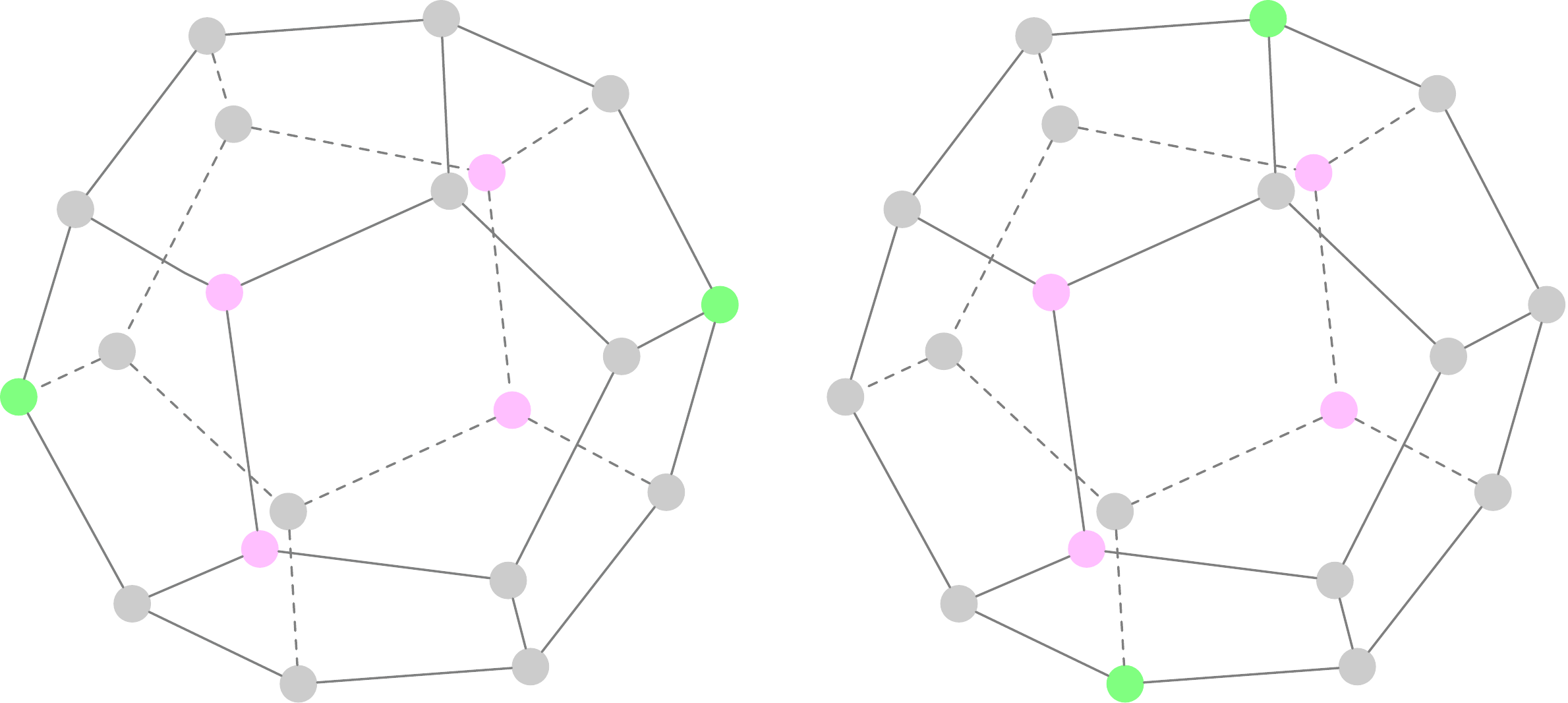}}

\label{fig:counterexample-12-6-14_2}
\end{center}
\vskip -0.2in
\end{figure}

For the 6-size counterexample, the label distribution after infinite rounds of Vanilla DisGNN is shown in the picture above, and the nodes are colored differently to represent different labels. It can be directly verified that there are \textbf{2} kinds of nodes. Note that the two geometric graphs are non-congruent: The isosceles triangle of the two geometric graphs formed by one green point and two pink nodes have the same leg length but different base lengths.

For the 14-size counterexample, we give the conclusion that there are \textbf{4} kinds of nodes and the number of nodes of each kind is 2, 4, 4, 4 respectively. Since the counterexamples here and in the following are complex and not so intuitionistic, we recommend readers to directly verify them with our programs.

\textbf{8-size and 12-size counterexamples.}

\begin{figure}[H]
\vskip 0in
\begin{center}
\centerline{\includegraphics[width=\columnwidth / 2]{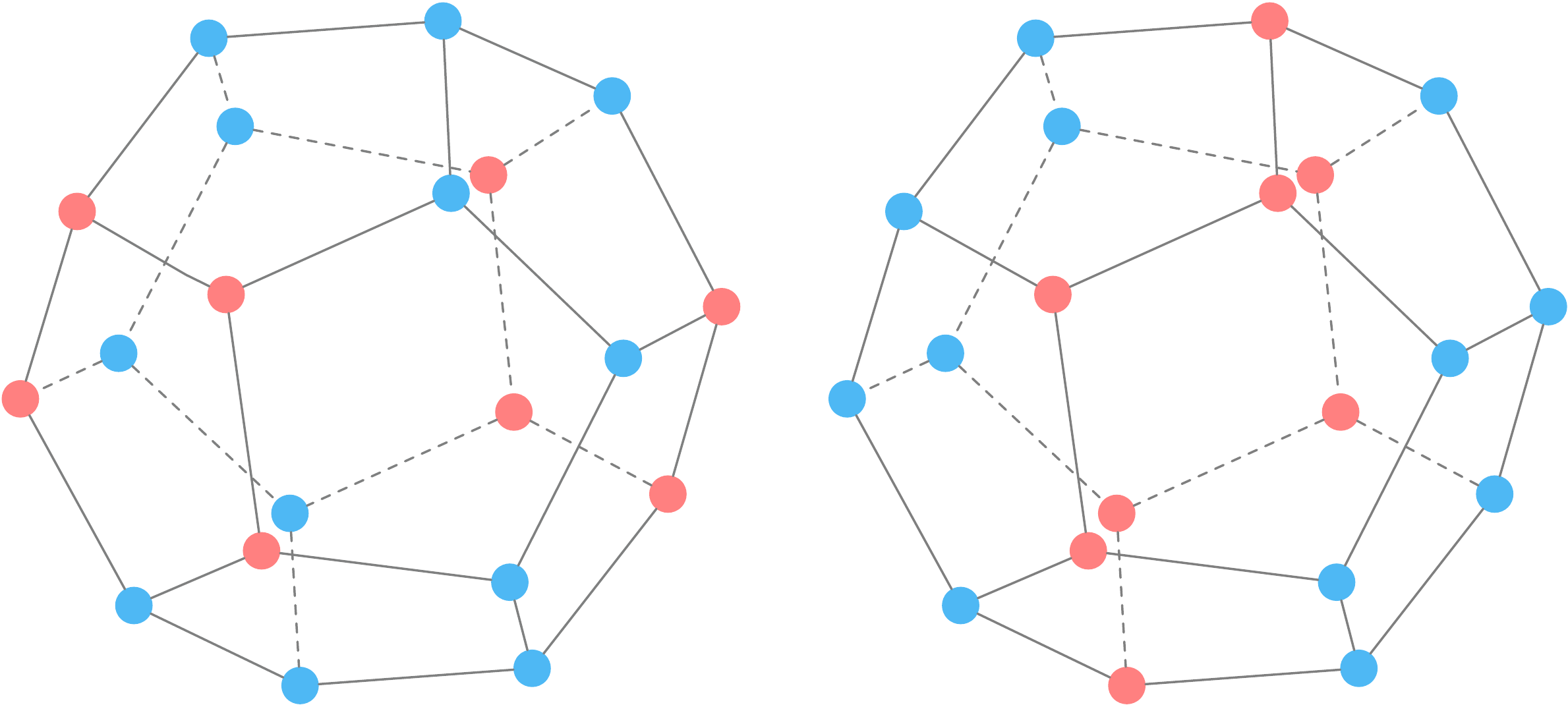}}

\label{fig:counterexample-12-8-12}
\end{center}
\vskip -0.2in
\end{figure}

A pair of 8-size geometric graphs (\textcolor[HTML]{FF7F7F}{red} nodes) and a pair of 12-size geometric graphs (\textcolor[HTML]{4EB8F4}{blue} nodes) that are counterexamples are shown in the figure above.

For the 8-size counterexample, note that it is actually obtained by carefully inserting two nodes into the 6-size counterexample mentioned just before. We note that there are \textbf{2} kinds of nodes in this counterexample, and the numbers of nodes of each kind are 4, 4 respectively.

For the 12-size counterexample, it is actually obtained by carefully deleting two nodes from the 14-size counterexample (corresponds to the way to obtain the 8-size counterexample) mentioned just before. There are \textbf{4} kinds of nodes in this counterexample, and the numbers of nodes of each kind are 4, 4, 2, 2 respectively.

\textbf{Two pairs of 10-size counterexamples.}

\begin{figure}[H]
\vskip 0in
\begin{center}
\centerline{\includegraphics[width=\columnwidth / 2]{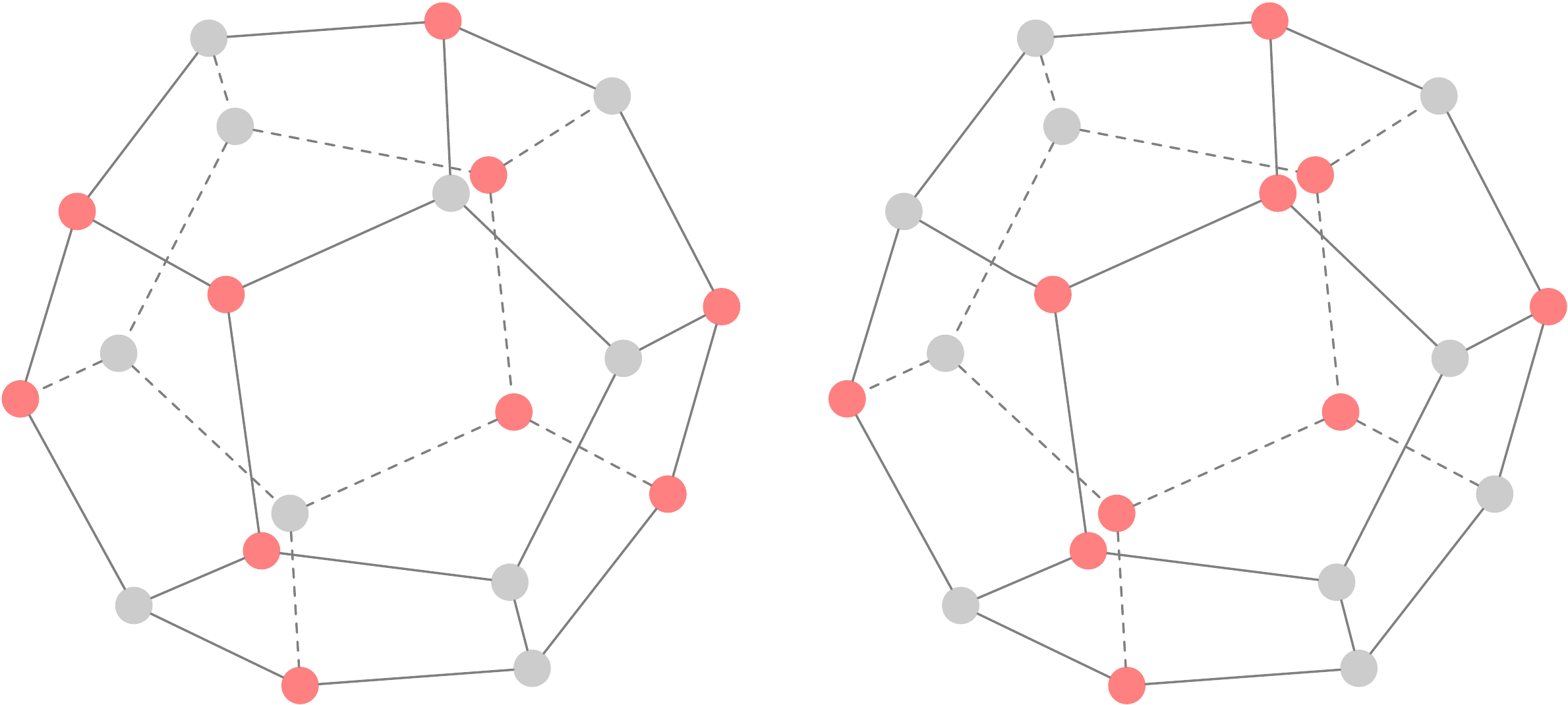}}

\label{fig:counterexample-12-10_1}
\end{center}
\vskip -0.2in
\end{figure}

The above figure shows the first pair of 10-size counterexamples. There are \textbf{3} kinds of nodes and the numbers of nodes of each kind are 4, 4, 2 respectively.

\begin{figure}[H]
\vskip 0in
\begin{center}
\centerline{\includegraphics[width=\columnwidth / 2]{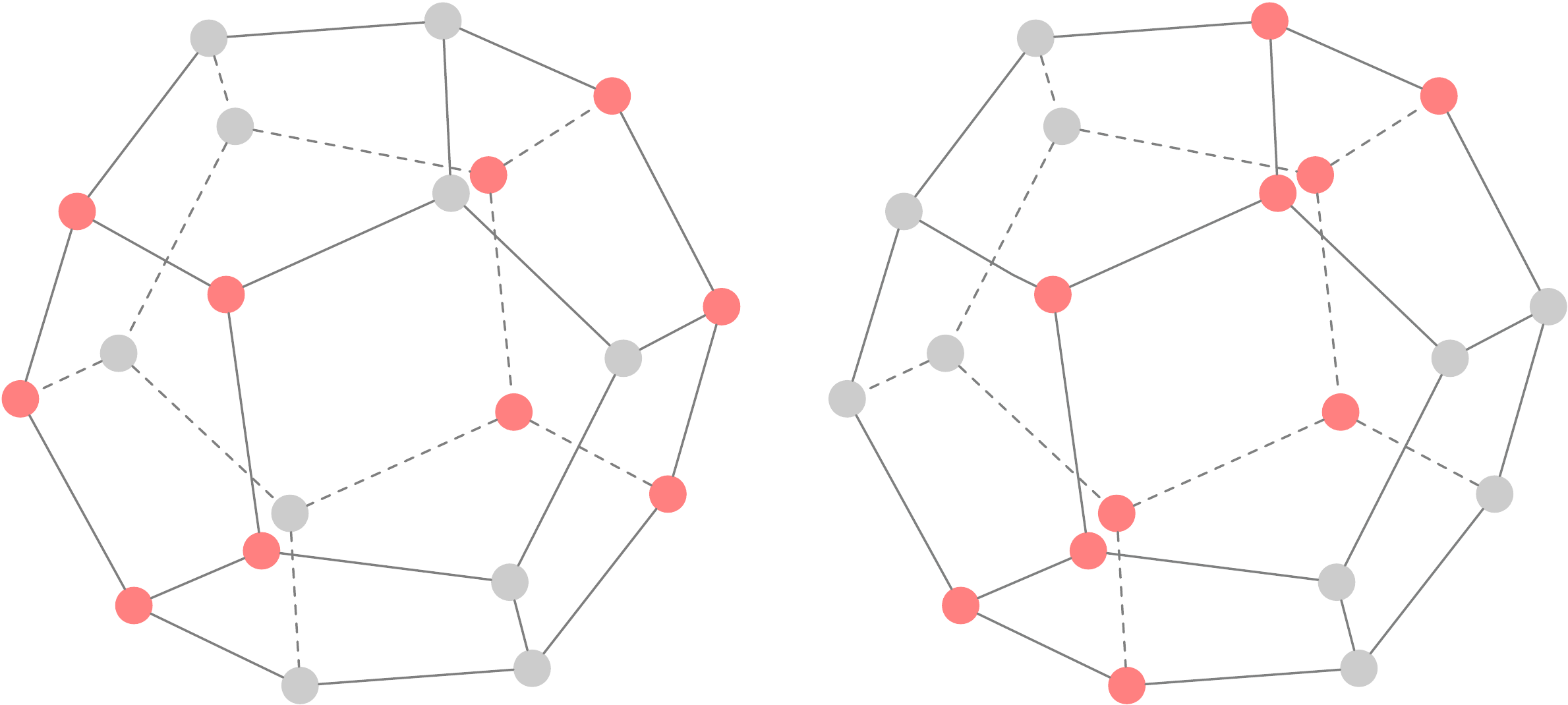}}

\label{fig:counterexample-12-10_2}
\end{center}
\vskip -0.2in
\end{figure}

The above figure shows the second pair of 10-size counterexamples. There is actually only \textbf{1} kind of nodes, all the nodes of both graphs have the same unordered distance list. It is interesting because there are two regular pentagons in the left graph and a ring in the right graph, which means that the two graphs are non-congruent, and it is quite similar to the case shown in Figure~\ref{fig:ce20}, where there are two equilateral triangles in the left graph and also a ring in the right graph.

~\\
In the end, we note that the above counterexamples are probably all the counterexamples one can sample from a single regular polyhedron.

\subsection{Counterexamples That Hold After Some Continuous Transformation}
\label{sec:ce2}

Inspired by the intuition mentioned in Appendix~\ref{sec:ce1}, we can also sample nodes from the combination of regular polyhedrons, which also have great symmetry but with more nodes.

\textbf{Cube + regular octahedron.} 

\begin{figure}[H]
\vskip 0in
\begin{center}
\centerline{\includegraphics[width=10cm]{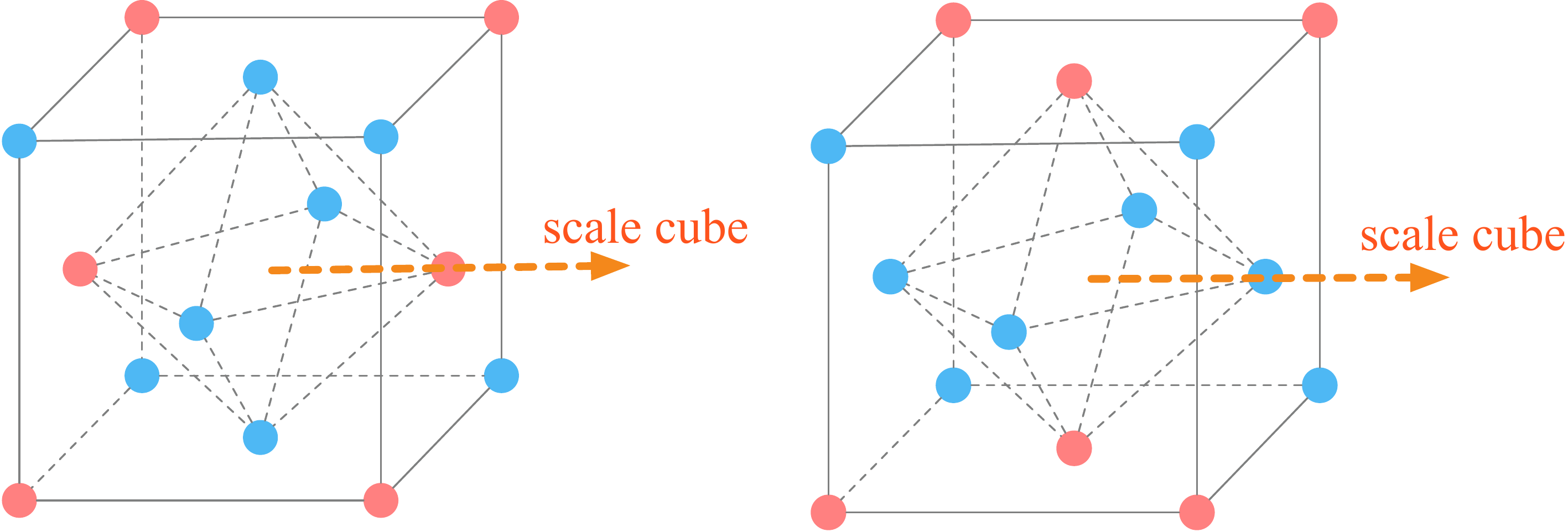}}

\label{fig:counterexample-cube}
\end{center}
\vskip -0.2in
\end{figure}

The first two counterexamples are sampled from the combination of a cube and a regular octahedron. We combine them in the following way: Bring the center of both polyhedrons together, and align the vertex of the regular octahedron, the center of the cube's face, and the center of the polyhedrons in a straight line. We do not limit the \textbf{relative size} of the two polyhedrons: one can scale the cube to any size as long as the constraints are not broken. In this way, a \textit{small} family of counterexamples is given (we say small here because the dimension of the transformation space is small).

\textit{Proof of the \textcolor[HTML]{FF7F7F}{red} case.} Let the side length of the cube be $2a$, the diagonal of the regular octahedron be $2b$. Initially, all nodes are labeled $l_0$. We now simulate Vanilla DisGNNs on both graphs.

At the first iteration, there are actually two kinds of unordered distance lists for both of the graphs: For the vertexes on the regular octahedron, the list is $\Big\{ \big((2b, l_0),1\big), \big((\sqrt{3a^2 + b^2 + 2ab},l_0),2\big),$
$\big((\sqrt{3a^2 + b^2 - 2ab},l_0), 2\big) \Big\}$. For the vertexes on the cube, the list is $\Big\{ \big((2a, l_0),1\big), \big((2\sqrt{2}a, l_0),1\big), \big((2\sqrt{3}a, l_0),1\big),$
$ \big((\sqrt{3a^2 + b^2 + 2ab},l_0),1\big), \big((\sqrt{3a^2 + b^2 - 2ab},l_0),1\big)\Big\}$. Thus they will be labeled differently at this step, let the labels be $l_1$ and $l_2$ respectively. 

At the second iteration, the lists of all the vertexes on the regular octahedron from both graphs are still the same, i.e. $\Big\{ \big((2b, l_1),1\big), \big((\sqrt{3a^2 + b^2 + 2ab},l_2),2\big), \big((\sqrt{3a^2 + b^2 - 2ab},l_2), 2\big) \Big\}$, as well as the lists of all the vertexes on the cube from both graphs, i.e. $\Big\{ \big((2a, l_2),1\big), \big((2\sqrt{2}a, l_2),1\big), $
$\big((2\sqrt{3}a, l_2),1\big), \big((\sqrt{3a^2 + b^2 + 2ab},l_1),1\big), \big((\sqrt{3a^2 + b^2 - 2ab},l_1),1\big)\Big\}$. 

Since Vanilla DisGNNs cannot subdivide the vertexes at the second step, they can still not at the following steps. So it cannot distinguish the two geometric graphs. But they are non-congruent: on the left graph, the nodes on the regular octahedron can only form isosceles triangles with the nodes on the face diagonal of the cube. On the right graph, they can only form isosceles triangles with nodes on the cubes that are on the same side. We note that the counterexample in Figure~\ref{fig:ce20} is not a special case of this, because on the left graph of this case all the nodes are on the same surface, but none of graphs in Figure~\ref{fig:ce20} have all nodes on the same surface.

\textit{Proof of the \textcolor[HTML]{4EB8F4}{blue} case.} The proof of the blue case is quite similar to that of the \textcolor[HTML]{FF7F7F}{red} case; thus, we omit the proof here. Note that the labels will stabilize after the first iteration. At the end of Vanilla DisGNN, there are \textbf{2} kinds of nodes, each kind has 4 nodes. And the two geometric graphs are non-congruent because the plane formed by four nodes on the regular octahedron is perpendicular to the plane formed by four nodes on the cube in the left graph, and is not in the right graph.

\textbf{2 cubes.} 

\begin{figure}[H]
\vskip 0in
\begin{center}
\centerline{\includegraphics[width=10cm]{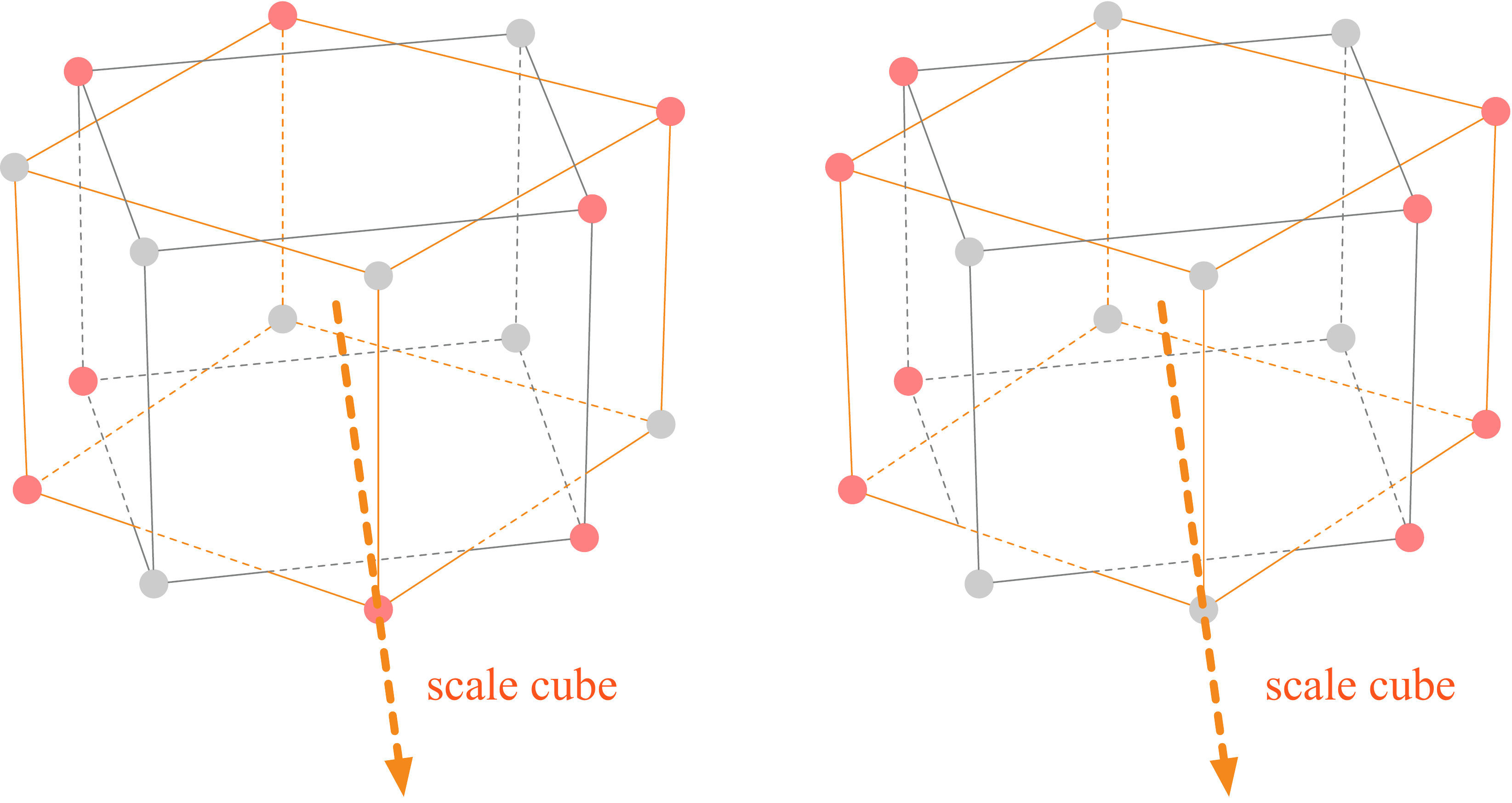}}

\label{counterexample-2cube}
\end{center}
\vskip -0.2in
\end{figure}

The second case is the combination of 2 cubes. We combine them in the following way: Let the center of the two cubes coincide as well as the axis formed by the centers of two opposite faces. Also, we do not limit the \textbf{relative size} of the two cubes: one can scale the cube to any size as long as the constraints are not broken. The proof of this case is quite similar to that in \textit{Cube + regular octahedron}; thus, we omit the proof here. Note that the labels will stabilize after the first iteration. If the sizes of the two cubes are the same, then at the end of Vanilla DisGNN, there is only \textbf{1} kind of node. If not, there are \textbf{2} kinds of nodes, each kind has 4 nodes.

\subsection{Families of Counterexamples}
\label{sec:ce3}

In this section, we will prove that any one of the counterexamples given in Appendix~\ref{sec:ce1} can be \textbf{augmented} to a family. To facilitate the discussion, we will introduce some notations.

\textbf{Notation.} Consider an arbitrary counterexample $(G^L_{ori}, G^R_{ori})$ given in Appendix~\ref{sec:ce1}. It is worth noting that for each graph, the nodes are sampled from a regular polyhedron and are located on the same spherical surface. We can denote the geometric graph on this sphere as $G_{ori,r}$, where $r$ represents the radius of the sphere. Since the nodes are sampled from some regular polyhedron (denoted by $G_{all,r}$), there must be nodes that aren't sampled, which we call as complementary nodes. We denote the corresponding geometric graph as $G_{com,r}$. Given two graphs $G_{Q_1,r_1}$, $G_{Q_2,r_2}$ where $Q_i \in \{ori, com, all\}$ for $i \in [2]$, we say that we \textbf{align} them if we make sure that the spherical centers of the two geometric graphs coincide and one can get $G_{Q_2,r_2}$ by just scaling all the nodes of $G_{Q_2,r_1}$ along the spherical center. We use $\mathcal{L} = \big\{(v, l_v) \mid v \in V \big\}$ to represent a \textbf{label state} for some graph $G$, and call it a \textbf{stable} state if Vanilla DisGNN cannot further refine the labels of the graph $G$ with initial label $\mathcal{L}$. We say that $\mathcal{L}$ is the \textbf{final} state of $G$ if it is the label state obtained by performing Vanilla DisGNN on $G$ with no initial labels for infinite rounds. Note that final state is a special stable state. We denote the final state of $G_{ori}$ and $G_{com}$ as $\mathcal{L}_{ori}$ and $\mathcal{L}_{com}$ respectively.

Now, let us introduce an augmentation method for arbitrary counterexample $(G^L_{ori}, G^R_{ori})$ shown in Appendix~\ref{sec:ce1}. For a given set $\mathcal{QR}_k = \big\{(Q_1, r_1), (Q_2, r_2), ..., (Q_k, r_k)\big\}$ where $r_i > 0, Q_i \in \{ori, com, all\}$ and $r_i \neq r_j$ for $i\neq j$, we get the pair of augmented geometric graphs $(G^L, G^R) = {\rm AUG}_{\mathcal{QR}_k}(G^L_{ori}, G^R_{ori})$ by the following steps (take $G^L$ for example): 

\begin{enumerate}
    \item Generate $k$ geometric graphs $G^{L}_{Q_1,r_1}, i \in [k]$ according to $\mathcal{QR}_k$.
    \item Align the $k$ geometric graphs.
    \item Combine the $k$ geometric graphs $G^{L}_{Q_i,r_i}, i \in [k]$ to obtain $G^L$.
\end{enumerate}

\begin{figure}[t]
\vskip 0in
\begin{center}
\centerline{\includegraphics[width=14cm]{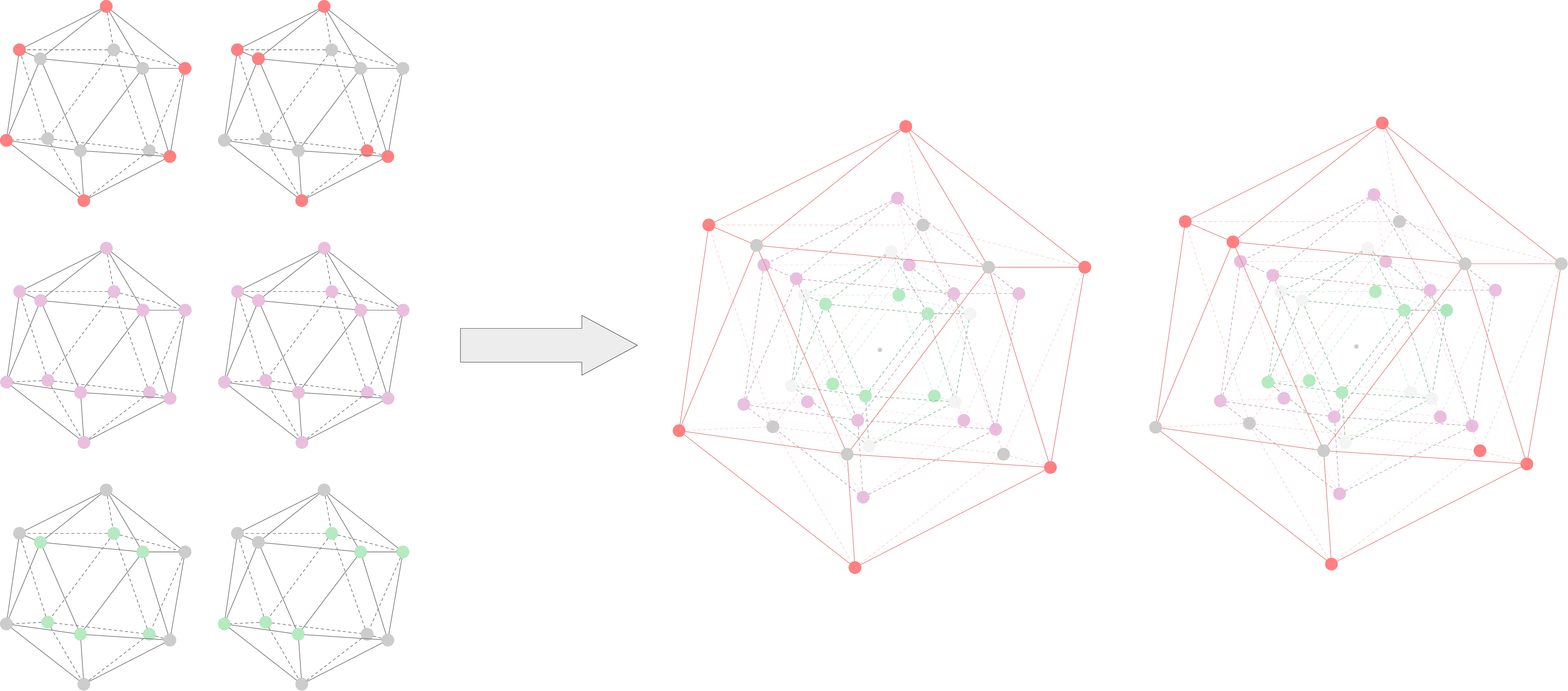}}
\caption{An example of the augmentation. In this example, $(G^L_{ori}, G^R_{ori})$ are from Figure~\ref{fig:ce20}, and $\mathcal{QR}_3 = \big\{(ori, r_r), (all, r_p), (com, r_g)\big\}$ (with $r, p, g$ representing \textcolor[HTML]{FF7F80}{red}, \textcolor[HTML]{E9C0DE}{purple} and \textcolor[HTML]{B6EAC1}{green} respectively). Nodes with different colors indicate that they are derived from different $G_{Q,r}$ generations. The augmented graphs $(G^L, G^R) = {\rm AUG}_{\mathcal{QR}3}(G^L_{ori}, G^R_{ori})$ consist of all the colored nodes.}
\label{fig:family}
\end{center}
\vskip -0.2in
\end{figure}

We give an example of the augmentation in Figure~\ref{fig:family}. Now we give our theorem.

\begin{theorem}
\label{theorem:family}
Consider an arbitrary counterexample $(G^L_{ori}, G^R_{ori})$ given in Appendix~\ref{sec:ce1}. Let $\mathcal{QR}_k = \big\{(Q_1, r_1), (Q_2, r_2), \dots, (Q_k, r_k)\big\}$ be an arbitrary set, where $Q_i \in \{ori, com, all\}$ and $r_i \neq r_j$ for $i \neq j$. If $\exists i \in [k]$ such that $Q_i \neq all$, then the augmented geometric graphs $(G^L, G^R) = {\rm AUG}_{\mathcal{QR}_k}(G^L_{ori}, G^R_{ori})$ form a counterexample.
\end{theorem}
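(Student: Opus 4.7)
The plan is to establish two things: (i) $G^L$ and $G^R$ are non-congruent, and (ii) Vanilla DisGNN cannot distinguish them.

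\textbf{Non-congruence.} I would argue that any putative isometry $\sigma$ of $\mathbb{R}^3$ carrying $G^L$ to $G^R$ must preserve the common center of the aligned concentric shells (recoverable intrinsically, e.g.\ as the point minimizing the sum of squared distances to all nodes, which is fixed by any isometric relabeling). Since the radii $r_1, \dots, r_k$ are pairwise distinct, $\sigma$ must send the shell at radius $r_i$ of $G^L$ to the shell at radius $r_i$ of $G^R$, so it restricts on each shell to a congruence between the scaled copies of $G^L_{Q_i}$ and $G^R_{Q_i}$. Pick any index $i$ with $Q_i \neq all$ (which exists by hypothesis); this restriction yields a congruence between $G^L_{ori}$ and $G^R_{ori}$ (or between their complements, which are also non-congruent by the symmetric argument), contradicting the original counterexample.

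\textbf{Building a label-preserving bijection.} For indistinguishability, I would exhibit a bijection $\Psi : V(G^L) \to V(G^R)$ such that $v$ and $\Psi(v)$ carry the same Vanilla DisGNN label at every iteration, which immediately yields equal multisets of final labels. Let $\psi_{ori} : G^L_{ori} \to G^R_{ori}$ be the DisGNN-label-preserving bijection guaranteed by the counterexample property. A companion $\psi_{com} : G^L_{com} \to G^R_{com}$ can be obtained from vertex-transitivity of the ambient regular polyhedron $G_{all}$: the multiset of all pairwise distances inside $G_{all}$ is a fixed quantity, so the within-$com$ distance multisets are determined by subtracting the within-$ori$ and cross-$(ori, com)$ multisets, both of which already match on the two sides. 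Glue these to get $\psi : G_{all} \to G_{all}$ and lift to $\Psi(i, \hat v) := (i, \psi_i(\hat v))$, where $\psi_i \in \{\psi_{ori}, \psi_{com}, \psi\}$ is chosen according to whether $Q_i$ is $ori$, $com$, or $all$.

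\textbf{Key Lemma.} The heart of the argument is to show that for every $v = (i, \hat v) \in V(G^L)$ and every shell index $j$, the multiset of distances from $v$ to the shell-$j$ nodes of $G^L$ equals the multiset of distances from $\Psi(v)$ to the shell-$j$ nodes of $G^R$. Because the Euclidean distance $d(r_i, r_j, \hat v \cdot \hat w)$ on concentric spheres is a strictly monotone function of the inner product $\hat v \cdot \hat w$, this reduces to matching angle multisets from $\hat v$ to $G^L_{Q_j}$ against those from $\psi_i(\hat v)$ to $G^R_{Q_j}$. I would split by $Q_j$: when $Q_j = all$ the multiset is a universal constant by vertex-transitivity; when $Q_j \in \{ori, com\}$ and $\hat v$ lies in the matching subset, the equality is directly the DisGNN-label-preserving property of $\psi_{ori}$ or $\psi_{com}$; in the mixed cases one uses the ``total over $G_{all}$ minus complement'' subtraction trick to pass through.

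\textbf{Induction and the main obstacle.} The Key Lemma is exactly what is needed for the $t = 1$ step of Vanilla DisGNN. The main obstacle is the inductive step for $t \ge 2$, where the labels have been refined and one must preserve distance multisets \emph{stratified} by the previously computed labels, not merely the total distance multisets. My plan is to show inductively that each step-$t$ class on shell $j$ is itself characterized by a fixed angle profile against the already-matched classes on the other shells, so that the subtraction-from-$G_{all}$ trick applies class by class and matches the refined stratified multisets. Once this is established, $\Psi$ preserves labels at every iteration, and the final label multisets on $G^L$ and $G^R$ coincide, so Vanilla DisGNN cannot distinguish them.
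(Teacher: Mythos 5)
Your non-congruence argument and your Key Lemma (reducing inter-shell distance multisets to angle multisets via the monotone map $\Theta \mapsto \sqrt{r_i^2+r_j^2-2r_ir_j\Theta}$, plus the ``subtract $ori$ from $all$ to get $com$'' trick) are exactly the geometric ingredients the paper uses. But your overall route --- a direct induction on the iteration count $t$ starting from uniform labels --- has a genuine gap precisely where you flag it, and the paper's proof is structured specifically to avoid that induction. The paper never tracks the refinement of the augmented graph from scratch. Instead it (i) \emph{constructs} an explicit labeling of $(G^L,G^R)$ by assigning to each shell the \emph{final} Vanilla-DisGNN state of that shell's type ($\mathcal{L}_{ori}$, $\mathcal{L}_{com}$, or $\mathcal{L}_{all}$, with shells kept distinct), (ii) proves this labeling is already \emph{stable} under one update (a single application of your Key Lemma, class by class, using pre-derived equations relating the $ori$, $com$, and $all$ multisets), and (iii) invokes a separate monotonicity lemma: if Vanilla DisGNN cannot distinguish two graphs when started from \emph{some} initial labeling, it cannot distinguish them when started from identical labels, because adding initial labels only ever refines the final partition. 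Step (iii) is the missing idea in your proposal; without it, you are forced into the stratified induction for $t\ge 2$, and your sketch for that step (``each step-$t$ class on shell $j$ is characterized by a fixed angle profile against the already-matched classes'') is essentially a restatement of what must be proved rather than a proof --- the step-$t$ classes are produced by the augmented graph's own refinement, which is the very object you have no handle on.

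A second, smaller gap: your construction of $\psi_{com}$ from vertex-transitivity and subtraction only matches \emph{first-iteration} distance multisets within the $com$ parts. What the argument actually needs is that the combined $ori$-plus-$com$ labeling is a stable state of the full polyhedron $G_{all}$ and that the resulting histograms agree between the left and right graphs --- a statement about the \emph{joint} iterated refinement of the two parts, not just their first-round multisets. The paper does not derive this analytically; it is a finite claim verified by an accompanying computer program (its Lemma on stable states of $G_{all}$), and the subtraction identities you want are then read off from that verified stability. As written, your subtraction argument silently assumes the conclusion of that verification.
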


Before presenting the proof of the theorem, we will introduce several lemmas that will be used in the proof.

\begin{lemma}
\label{lem:stable state}
If Vanilla DisGNN cannot distinguish $(G^L, G^R)$ with some initial label $(\mathcal{L}^L, \mathcal{L}^R)$, then Vanilla DisGNN cannot distinguish $(G^L, G^R)$ without initial label (with identical labels).
\end{lemma}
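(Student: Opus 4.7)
The plan is to use a simple refinement argument: running Vanilla DisGNN from the uniform initial labeling is a ``coarsening'' of running it from any other initial labeling, in the sense that whatever can be distinguished from uniform initialization can also be distinguished from a richer one. Let $A_t^L(v), A_t^R(u)$ denote the labels produced at round $t$ on $G^L, G^R$ when Vanilla DisGNN is started from $\mathcal{L}^L, \mathcal{L}^R$, and $B_t^L(v), B_t^R(u)$ the labels produced when started from identical (uniform) labels. The hypothesis is that for every $t$, the multisets $\msl A_t^L(v) \mid v\in V^L \msr$ and $\msl A_t^R(u) \mid u\in V^R \msr$ are equal, and the goal is to deduce the same equality for $B_t$.

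The crux will be the following \emph{cross-graph refinement claim}, proved by induction on $t$:
$$\forall\, v\in V^L,\ u\in V^R:\quad A_t^L(v) = A_t^R(u)\ \Longrightarrow\ B_t^L(v) = B_t^R(u).$$
The base case $t=0$ is immediate because $B_0$ is constant on both graphs. For the inductive step I would use that HASH in \Cref{def:1WLE update} is injective, so the equality $A_{t+1}^L(v) = A_{t+1}^R(u)$ forces $A_t^L(v) = A_t^R(u)$ together with the multiset equality
$$\msl (A_t^L(v'), e^L_{vv'}) \mid v'\in V^L \msr \;=\; \msl (A_t^R(u'), e^R_{uu'}) \mid u'\in V^R \msr.$$
A bijection $\sigma:V^L\to V^R$ witnesses this equality. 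Applying the inductive hypothesis pointwise to $\sigma$ (and to the pair $v,u$ itself) converts the matched $A_t$-labels into matched $B_t$-labels while preserving the paired distances $e^L_{vv'}=e^R_{u\sigma(v')}$; reassembling and applying HASH once more gives $B_{t+1}^L(v)=B_{t+1}^R(u)$.

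With the claim in hand, the lemma is almost immediate. The hypothesis provides, for every $t$, a bijection $\tau:V^L\to V^R$ matching $A_t$-labels, so the claim gives $B_t^L(v)=B_t^R(\tau(v))$ for all $v$, whence $\msl B_t^L(v) \msr = \msl B_t^R(u) \msr$ as multisets. Thus uniform-initialized Vanilla DisGNN also fails to distinguish $(G^L, G^R)$ at every round, which is exactly the conclusion.

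The only subtlety I anticipate is being careful about HASH injectivity across two different graphs: I want to apply it in a way that compares inputs living in $G^L$ to inputs living in $G^R$. This is standard in WL-type arguments — one may, without loss of generality, take HASH to be the canonical identity on multisets — but I will state it explicitly so that the inductive step's extraction of the matching multiset of $(A_t, e)$ pairs from equal hashes is clearly justified across graphs.
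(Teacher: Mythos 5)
Your proposal is correct and follows essentially the same route as the paper's proof: an induction on rounds, using HASH injectivity, showing that the run initialized from $(\mathcal{L}^L,\mathcal{L}^R)$ refines the uniformly initialized run, and then concluding that equal label histograms for the finer partition force equal histograms for the coarser one. The only (cosmetic) difference is that you state the refinement claim directly for a node pair across the two graphs, whereas the paper states it for two nodes of a single graph and then applies it across graphs; your cross-graph formulation is the slightly more careful version of the same argument.
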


\begin{proof}[Proof of Lemma~\ref{lem:stable state}]
We use $l_u^{t}$ to represent the label of $u$ after $t$ iterations of Vanilla DisGNN without initial labels, and use $l_u^{\mathcal{L}, t}$ to represent the label of $u$ after $t$ iterations of Vanilla DisGNN with initial label $\mathcal{L}$. We use $l_u^{\infty}$ and $l_u^{\mathcal{L},\infty}$ to represent the corresponding final label.
In order to prove this lemma, we first prove a conclusion:
\textbf{for arbitrary graphs $G=(V,E)$ and arbitrary initial label $\mathcal{L}$, let $u,v$ be two arbitrary nodes in $V$, then we have $\forall t\in \mathbb{Z}^+$, $l_u^{t} \neq l_v^{t}$ $\to$ $l_u^{\mathcal{L},t} \neq l_v^{\mathcal{L},t}$.} We'll prove the conclusion by induction. 
\begin{itemize}[leftmargin=20pt]
    \item $t=1$ holds. 
    
    If $\big(l_u^{0}, \msl(d_{us}, l_s^{0}) \mid s \in N(u)\msr \big) \neq \big(l_v^{0}, \msl(d_{vs}, l_s^{0}) \mid s \in N(v)\msr \big)$, then we have $\msl d_{us} \mid s \in N(u)\msr  \neq \msl d_{vs} \mid s \in N(v)\msr $, since $l_s^{0}$ are all identical for $s\in V$. Then it is obvious that $\big(l_u^{\mathcal{L},0}, \msl (d_{us}, l_s^{\mathcal{L},0}) \mid s \in N(u)\msr \big) \neq \big(l_v^{\mathcal{L},0}, \msl (d_{vs}, l_s^{\mathcal{L},0}) \mid s \in N(v)\msr \big)$.
    \item For $\forall k\in \mathbb{Z}^+$, $t=k$ holds $\Rightarrow$ $t=k+1$ holds.

     Assume $t=k+1$ does not hold, i.e., $\exists u, v \in V$, $l_u^{(k+1)} \neq l_v^{(k+1)}$ but $l_u^{\mathcal{L},(k+1)} = l_v^{\mathcal{L},(k+1)}$. We can derive from $l_u^{\mathcal{L},(k+1)} = l_v^{\mathcal{L},(k+1)}$ that 
    \begin{align}
        &l_u^{\mathcal{L},(k+1)} = l_v^{\mathcal{L},(k+1)}\notag\\
    \Rightarrow\ \  &\big(l_u^{\mathcal{L},k}, \msl (d_{us}, l_s^{\mathcal{L},k}) \mid s \in N(u)\msr \big) = \big(l_v^{\mathcal{L},k}, \msl (d_{vs}, l_s^{\mathcal{L},k}) \mid s \in N(v)\msr \big)\notag \\
    \Rightarrow\ \ &l_u^{\mathcal{L},k} = l_v^{\mathcal{L},k}\ {\rm and}\ \msl (d_{us}, l_s^{\mathcal{L},k}) \mid s \in N(u)\msr  = \msl (d_{vs}, l_s^{\mathcal{L},k}) \mid s \in N(v)\msr. \label{tmp}
    \end{align}
Since $t=k$ holds, for $\forall s_1, s_2$, we have
\begin{align}
    \label{induction}
    &l_{s_1}^{\mathcal{L},k} = l_{s_2}^{\mathcal{L},k}\notag\\
    \Rightarrow\ \  &l_{s_1}^{k} = l_{s_2}^{k}.\notag
\end{align}
Together with Equation~(\ref{tmp}), we have $l_u^{k} = l_v^{k}$ and $\msl (d_{us}, l_s^{k}) \mid s \in N(u)\msr  = \msl (d_{vs}, l_s^{k}) \mid s \in N(v)\msr $. This means $l_u^{(k+1)} = l_v^{(k+1)}$, which is contradictory to assumptions.
\end{itemize}
This means that the conclusion holds for $\forall t \in \mathbb{Z}^+$, as well as $t = \infty$. In other words, specifying an initial label $\mathcal{L}$ will only result in the subdivision of the final labels.

Back to Lemma~\ref{lem:stable state}, if Vanilla DisGNN cannot distinguish $(G^L, G^R)$ even when the final labels are furthermore subdivided, it can still not when the final labels are not subdivided, i.e., cannot distinguish $(G^L, G^R)$ without initial labels.
\end{proof}

\begin{lemma}
\label{lem:verification}
Consider an arbitrary counterexample $(G^L_{ori}, G^R_{ori})$ given in Appendix~\ref{sec:ce1}. We have that $(\mathcal{L}^L_{ori} \cup \mathcal{L}^L_{com}, \mathcal{L}^R_{ori} \cup \mathcal{L}^R_{com})$ is a pair of \textbf{stable states} of $(G_{all}^L, G_{all}^R)$. Denote the labels as $\mathcal{L}^L_{all}$ and $\mathcal{L}^R_{all}$ respectively. Moreover, we have that Vanilla DisGNN cannot distinguish $(G^L_{all}, G^R_{all})$ with initial labels $\mathcal{L}^L_{all}$ and $\mathcal{L}^R_{all}$.
\end{lemma}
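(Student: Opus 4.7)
The plan is to prove the lemma in two stages: first the stability of the combined labeling, then the indistinguishability under Vanilla DisGNN. Throughout, I treat $\mathcal{L}_{ori}^L$ and $\mathcal{L}_{com}^L$ as living in disjoint label alphabets, so that any node in $V_{ori}^L$ is automatically distinguished from any node in $V_{com}^L$ by its initial label in $\mathcal{L}_{all}^L$.

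For stability of $\mathcal{L}_{all}^L$ on $G_{all}^L$, I would pick any two nodes $u, v \in V_{all}^L$ carrying the same combined label. By the disjointness just noted, $u$ and $v$ lie on the same side of the partition; by symmetry assume both are in $V_{ori}^L$, so $\mathcal{L}_{ori}^L(u) = \mathcal{L}_{ori}^L(v)$. Stability of $\mathcal{L}_{ori}^L$ as a final state on $G_{ori}^L$ immediately gives $\msl (d_{us}, l_s) \mid s \in V_{ori}^L \msr = \msl (d_{vs}, l_s) \mid s \in V_{ori}^L \msr$. The remaining task, and the true crux, is to show the same equality for the multisets indexed by $s \in V_{com}^L$. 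For this I would exploit the concrete construction in Appendix~\ref{sec:ce1}: every counterexample is sampled from a regular polyhedron in a symmetric pattern, and I would argue that the DisGNN-equivalence class of $u$ inside $V_{ori}^L$ is contained in an orbit of a subgroup $H \leq O(3)$ of isometries of the ambient polyhedron that preserves both $V_{ori}^L$ and $V_{com}^L$ setwise. Any $\phi \in H$ with $\phi(u) = v$ then restricts to a distance- and label-preserving permutation of $V_{com}^L$ (label preservation there follows by running the analogous argument on $G_{com}^L$), yielding the multiset equality.

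For indistinguishability, once stability is in hand Vanilla DisGNN never refines $\mathcal{L}_{all}^L$ or $\mathcal{L}_{all}^R$, so it suffices to match label histograms. Since $(G_{ori}^L, G_{ori}^R)$ is a counterexample pair by hypothesis, their final label histograms agree via some multiplicity-preserving bijection $\sigma_{ori}$; analogously I obtain $\sigma_{com}$ for the complementary pair. The disjoint union $\sigma := \sigma_{ori} \sqcup \sigma_{com}$ identifies the histograms of $\mathcal{L}_{all}^L$ and $\mathcal{L}_{all}^R$ on the underlying geometrically isomorphic graphs $G_{all}^L \cong G_{all}^R$, and combined with stability this yields that Vanilla DisGNN cannot separate the two labeled graphs.

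The main obstacle is the symmetry step inside stability: one must certify, for each family of counterexamples in Appendix~\ref{sec:ce1}, that the DisGNN-equivalence classes on $V_{ori}^L$ (and on $V_{com}^L$) sit inside orbits of a common isometry subgroup of the polyhedron that also preserves the complementary vertex set. For the highly symmetric cases (icosahedron, full-polyhedron splits) this is transparent, but for the less symmetric dodecahedral cases (e.g.\ the $8$-size/$12$-size split) it may require direct geometric inspection of the isometry group of the sampled vertex set. An alternative I would keep in reserve is to argue more abstractly that the combined labeling $\mathcal{L}_{ori}^L \cup \mathcal{L}_{com}^L$ is an equitable partition of $G_{all}^L$ by checking the coloring condition edge by edge — but the symmetry route seems the most economical presentation.
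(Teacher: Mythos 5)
The paper does not prove this lemma analytically at all: since Appendix~\ref{sec:ce1} contains only finitely many counterexamples, the authors verify the stability of $(\mathcal{L}^L_{all},\mathcal{L}^R_{all})$ and the indistinguishability of $(G^L_{all},G^R_{all})$ directly by a computer program shipped with their code. Your attempt to replace this with a symmetry argument is a genuinely different route, but as written it has two gaps.

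First, the stability step rests on the claim that the Vanilla-DisGNN equivalence class of a node $u$ inside $V^L_{ori}$ is contained in an orbit of an isometry subgroup $H\leq O(3)$ preserving both $V^L_{ori}$ and $V^L_{com}$. WL-type equivalence is in general strictly weaker than orbit equivalence under isometries --- indeed the very existence of these counterexamples shows that Vanilla DisGNN identifies nodes that no isometry relates. So this containment is an additional hypothesis that must be certified separately for every configuration in Appendix~\ref{sec:ce1}; you acknowledge this, but that certification is itself a finite case check of essentially the same size as the paper's computational verification, and it is not obvious it succeeds for the less symmetric dodecahedral samples. Second, and more seriously, your indistinguishability step is logically insufficient. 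Per-graph stability plus equality of label histograms does not imply that Vanilla DisGNN cannot separate the two labeled graphs: even when neither partition is refined, the hash computed for a node $u\in G^L_{all}$ and a same-labeled node $v\in G^R_{all}$ agree only if the \emph{cross-graph} multisets $\msl (d_{us},l_s)\mid s\in V^L_{all}\msr$ and $\msl (d_{vs},l_s)\mid s\in V^R_{all}\msr$ coincide. Restricted to $V_{ori}$ this follows from $(G^L_{ori},G^R_{ori})$ being a counterexample, but the mixed part --- distances from a node of one type to the nodes of the complementary type, compared across the two non-congruent configurations --- is exactly the nontrivial content of the lemma (it becomes Equation~(\ref{eq:e3}) and is what Step~2 of the proof of Theorem~\ref{theorem:family} consumes). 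No isometry can supply it, since $G^L_{ori}$ and $G^R_{ori}$ are non-congruent by construction; it is a WL-level coincidence that must be checked, which is precisely why the paper resorts to direct verification.
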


\begin{proof}[Proof of Lemma~\ref{lem:verification}]

Since the number of situations is finite, this lemma can be directly verified using a computer program. It is worth noting that we have also included the \textbf{verification program} in our code.
\end{proof}

This lemma allows us to draw several useful prior conclusions. In the graph $G_{all}$, the node set $V_{all}$ can be split into two subsets by definition, namely $V_{ori}$ and $V_{com}$. Now for arbitrary node $u$ from $G^L_{all}$ with initial label $\mathcal{L}_{all}^L$ and $v$ from $G^R_{all}$ with initial label $\mathcal{L}_{all}^R$, if the initial labels of node $u$ and node $v$ are the same, they must be in the same subset, i.e., respectively in $V_{ori}^L $ and $V_{ori}^R$ or $V_{com}^L $ and $V_{com}^R$. Without loss of generality, let us assume that both $u$ and $v$ are in $V_{ori}$. Since $(\mathcal{L}^L_{all}, \mathcal{L}^R_{all})$ is a stable state of $(G_{all}^L, G_{all}^R)$, we can obtain the following equation:
\begin{equation}
\label{eq:e1}
    \big(l_u, \msl  (d_{us}, l_s) \mid s \in V^L_{all}\msr \big) = \big(l_v, \msl  (d_{vs}, l_s) \mid s \in V^R_{all}\msr \big).
\end{equation}
Since $(\mathcal{L}^L_{ori}, \mathcal{L}^R_{ori})$ is a stable state of the induced subgraphs $(G_{ori}^L, G_{ori}^R)$ where the node sets are $(V_{ori}^L, V_{ori}^R)$, we can obtain the following equation:
\begin{equation}
\label{eq:e2}
    \big(l_u, \msl  (d_{us}, l_s) \mid s \in V^L_{ori}\msr \big) = \big(l_v, \msl  (d_{vs}, l_s) \mid s \in V^R_{ori}\msr \big).
\end{equation}
Notice that in Equation~(\ref{eq:e1},~\ref{eq:e2}), we can obtain a new equation by replacing $V^L_{ori}$ or $V^L_{all}$ with $(V^L_{ori} - u)$ or $(V^L_{all} - u)$ and doing the same for the $v$ side, while keeping the equation valid, since $l_u = l_v$. By subtracting Equation~(\ref{eq:e2}) from Equation~(\ref{eq:e1}), we obtain the following equation:
\begin{equation}
\label{eq:e3}
    \big(l_u, \msl  (d_{us}, l_s) \mid s \in V^L_{com}\msr \big) = \big(l_v, \msl  (d_{vs}, l_s\big) \mid s \in V^R_{com}\msr ).
\end{equation}
It is important to note that the same conclusion holds if both $u$ and $v$ are from type \textit{com}. These equations provide valuable \textbf{prior knowledge} that we can use to prove Theorem~\ref{theorem:family}.

Now, let us prove Theorem~\ref{theorem:family}. Our proof is divided into four steps:

\begin{enumerate}[leftmargin=20pt]
    \item Construct an initial state $(\mathcal{L}^L, \mathcal{L}^R)$ for the augmented geometric graphs $(G^L, G^R)$.
    \item Prove that $(\mathcal{L}^L, \mathcal{L}^R)$ is a pair of stable states for $(G^L, G^R)$.
    \item Explain that Vanilla DisGNN cannot distinguish $(G^L,G^R)$ with initial labels $(\mathcal{L}^L, \mathcal{L}^R)$.
    \item Explain that $(G^L, G^R)$ are non-congruent.
\end{enumerate}

By applying Lemma~\ref{lem:stable state}, we can get the conclusion from step 2 and 3 that Vanilla DisGNN cannot distinguish $(G^L, G^R)$ without initial labels. Since $(G^L, G^R)$ are non-congruent (as established in step 4), it follows that $(G^L, G^R)$ is a counterexample.

\begin{proof}[Proof of Theorem~\ref{theorem:family}]
~

\textbf{Step 1.} We first construct an initial state for the augmented graphs $(G^L, G^R)$.

For simplicity, we omit the superscripts, as the rules are the same for both graphs. For each layer (i.e., the sphere of some radius) of the graph $G$, we label the nodes with $\mathcal{L}_{tp}$ if the layer is of type $tp$, for $tp\in \{ori, com, all\}$. Note that we also ensure that the labels in different layers are distinct.

\textbf{Step 2.} Now we prove that $(\mathcal{L}^L, \mathcal{L}^R)$ is a stable state for $(G^L, G^R)$. This is equivalent to proving that for arbitrary node $u$ in $(G^L, \mathcal{L}^L)$ and node $v$ in $(G^R, \mathcal{L}^R)$, if $l_u^0 = l_v^0$, then $l_u^1 = l_v^1$.

Since $G^L$ is considered as a complete distance graph by DisGNNs, the neighbors of node $u$ are all the nodes in $G^L$ except itself. We denote the neighbor nodes from the layer with radius $r_i$ by $N_{r_i}(u)$. This is similar for node $v$. Now we need to prove that $\big(l_u^0, \msl  (d_{us}, l_s^0) \mid s \in \bigcup_{i \in [k]} N_{r_i}(u)\msr \big) = \big(l_v^0, \msl  (d_{vs}, l_s^0) \mid s \in \bigcup_{i \in [k]} N_{r_i}(v)\msr \big)$. Since $l_u^0 = l_v^0$, we only need to prove that $\msl  (d_{us}, l_s^0) \mid s \in \bigcup_{i \in [k]} N_{r_i}(u)\msr  = \msl  (d_{vs}, l_s^0) \mid s \in \bigcup_{i \in [k]} N_{r_i}(v)\msr $.

We split the multiset $\msl  (d_{us}, l_s^0) \mid s \in \bigcup_{i \in [k]} N_{r_i}(u)\msr $ into $k$ multisets, namely $\msl  (d_{us}, l_s^0) \mid s \in N_{r_i}(u)\msr , i \in [k]$, and do the same for the multiset of node $v$. Our goal is to prove that $\msl  (d_{us}, l_s^0) \mid s \in N_{r_i}(u)\msr  = \msl  (d_{vs}, l_s^0) \mid s \in N_{r_i}(v)\msr$ for each $i \in [k]$.

Let the coordinates of node $s$ in a spherical coordinate system be $(r_s, \theta_s, \phi_s)$. Since nodes $u$ and $v$ have the same initial label, they must be from the same layer, meaning that $r_u = r_v$. Additionally, we always realign the coordinate systems of $G^L$ and $G^R$ to ensure that the direction of the polyhedra is the same. The distance between node $u$ and node $s$ in a spherical coordinate system is given by $\sqrt{r_u^2 + r_s^2 - 2r_ur_s\big(\sin \theta_u\sin \theta_s\cos (\phi_u - \phi_s) + \cos \theta_u\cos \theta_s\big)}$. We denote the angle term $\big(\sin \theta_u\sin \theta_s\cos (\phi_u - \phi_s) + \cos \theta_u\cos \theta_s\big)$ by $\Theta_{us}$ for simplicty.

For each value of $r_i$, it can be one of three types: \textit{ori}, \textit{com}, or \textit{all}. Similarly, nodes $u$ and $v$ can belong to one of three categories: \textit{ori}, \textit{com}, \textit{all}. Regardless of the combination, these situations can always be found in the Equations~(\ref{eq:e1},~\ref{eq:e2},~\ref{eq:e3}) derived from Lemma~\ref{lem:verification} and can produce the following conclusion:
\begin{equation}
\label{eq:theta}
    \big(l_u^0, \msl  (\Theta_{us}, l_s^0) \mid s \in N_{r_i}(u) \msr \big) = \big(l_v^0, \msl  (\Theta_{vs}, l_s^0) \mid s \in N_{r_i}(v)\msr \big). \notag
\end{equation}
Then we have:
\begin{align}
    &\big(l_u^0, \msl  (\sqrt{r_u^2 + r_i^2 - 2r_ur_i\Theta_{us}}, l_s^0) \mid s \in N_{r_i}(u) \msr \big) \notag\\=
    &\big(l_v^0, \msl  (\sqrt{r_v^2 + r_i^2 - 2r_vr_i\Theta_{vs}}, l_s^0) \mid s \in N_{r_i}(v)\msr \big) \notag
\end{align}
since $r_u=r_i$. This means that for all values of $r_i$, $\msl  (d_{us}, l_s^0) \mid s \in N_{r_i}(u)\msr  = \msl  (d_{vs}, l_s^0) \mid s \in N_{r_i}(v)\msr $. By merging all the multisets of radius $r_i$, we can get $\msl  (d_{us}, l_s^0) \mid s \in \bigcup_{i \in [k]} N_{r_i}(u)\msr  = \msl  (d_{vs}, l_s^0) \mid s \in \bigcup_{i \in [k]} N_{r_i}(v)\msr $, which concludes the proof.

\textbf{Step 3.} Since the stable states $(\mathcal{L}^L, \mathcal{L}^R)$ are obtained by assigning the stable states of each layer, i.e. $\mathcal{L}_{com}$, $\mathcal{L}_{ori}$ or $\mathcal{L}_{all}$, that cannot be distinguished by Vanilla DisGNN, the histogram of both graphs are exactly the same, which means that Vanilla DisGNN cannot distinguish the two graphs.

\textbf{Step 4.} Notice that the construction of an isomorphic mapping requires that the radius of the layer of node $u$ in $G^L$ and node $v$ in $G^R$ must be the same. If there exists a pair of layers in the two geometric graphs that are non-congruent, then the two geometric graphs are non-congruent. According to the definition of $(G^L, G^R)$, such layers always exist, therefore the two geometric graphs are non-congruent.
\end{proof}

\newpage
\section{Proofs}
\label{sec:Proofs}

\subsection{Proof of Proposition~\ref{prop:GemNet}}
\label{sec:GemNet_proof}

The main proof of our proposition is to construct the key procedures in GemNet~\citep{gemnet} using basic message passing layers of 3-E/F-DisGNN, and the whole model can be constructed by stacking these message passing layers up. Since the $k$-E-DisGNN and $k$-F-DisGNN share the same initialization and output blocks and have similar update procedure (they can both learn 4-order geometric information), we mainly focus on the proof of E version, and one can check easily that the derivation for F version also holds.

\textbf{Basic methods.} Assume that we want to learn $\mathcal{O}_{\rm tar} = f_{\rm tar}(\mathcal{I}_{\rm tar})$ with our function $\mathcal{O}_{\rm fit} = f_{\rm fit}(\mathcal{I}_{\rm fit})$. In our proof, the form of $\mathcal{O}_{\rm tar}$ and $\mathcal{O}_{\rm fit}$, as well as the form of $\mathcal{I}_{\rm tar}$ and $\mathcal{I}_{\rm fit}$, are quite different. For example, consider the case where $\mathcal{O}_{\rm fit}$ is an embedding $h_{abc}$ for a 3-tuple $abc$ and $\mathcal{I}_{\rm fit} = \mathcal{I}_{\rm fit}^{(abc)}$ contains the information of all the neighbors of $abc$, while $\mathcal{O}_{\rm tar}$ is an embedding $m_{ab}$ for a 2-tuple $ab$ and $\mathcal{I}_{\rm tar} = \mathcal{I}_{\rm tar}^{(ab)}$ contains the information of all the neighbors of $ab$. Therefore, directly learning functions by $f_{\rm fit}$ that produces exactly the same output as $f_{\rm tar}$ is inappropriate. Instead, we will learn functions that can calculate several different outputs in a way $f_{\rm tar}$ does and appropriately embeds them into the output of $f_{\rm fit}$. For example, we still consider the case mentioned before, and we want to learn a function $f_{\rm fit}$ that can extract $\mathcal{I}_{\rm tar}^{(ab)}, \mathcal{I}_{\rm tar}^{(ac)}, \mathcal{I}_{\rm tar}^{(bc)}, \mathcal{I}_{\rm tar}^{(ba)},
\mathcal{I}_{\rm tar}^{(ca)},
\mathcal{I}_{\rm tar}^{(cb)}$ from $\mathcal{I}_{\rm fit}^{(abc)}$ respectively, and calculates $m_{ab}, m_{ac}, m_{bc}, m_{ba}, m_{ca}, m_{cb}$ with these information like $f_{\rm tar}$, and embed them into the output $h_{abc}$ in an injective way.

Since we realize $f_{\rm fit}$ as a universal function approximator (such as MLPs and deep multisets), $f_{\rm fit}$ can always learn the function we want. So the only concern is that \textbf{whether we can extract exact $\mathcal{I}_{\rm tar}$ from $\mathcal{I}_{\rm fit}$}, i.e., whether there exists an injective function $(\mathcal{I}_{\rm tar}^{(1)}, \mathcal{I}_{\rm tar}^{(2)}, ..., \mathcal{I}_{\rm tar}^{(n)}) = f_{\rm ext}(\mathcal{I}_{\rm fit})$. We will mainly discuss about this in our proof.

\textbf{Notations.} We use the superscript \textit{G} to represent functions in GemNet and \textit{3E} to represent functions in 3-E-DisGNN, and use $\mathcal{I}$ with the same superscript and subscript to represent the input of a function. We use the superscript \textit{$(a_1a_2..a_k)$} to represent some input $\mathcal{I}$ if it
's for tuple $(a_1a_2..a_k)$. If there exists an injective function $\mathcal{I}_{\rm tar} = f_{\rm ext}(\mathcal{I}_{\rm fit})$, then we denote it by $\mathcal{I}_{\rm fit} \to \mathcal{I}_{\rm tar}$, meaning that $\mathcal{I}_{\rm tar}$ can be derived from $\mathcal{I}_{\rm fit}$. If some geometric information $\mathcal{I}_{\rm geo}$ (such as distance and angles) is contained in the distance matrix of a tuple $a_1a_2..a_k$, then we denote it by $\mathcal{I}_{\rm geo} \in a_1a_2..a_k$. For simplicity, We omit all the time superscript $t$ if the context is clear.

\subsubsection{Construction of Embedding Block}
\label{sec:embedding block}

\textbf{Initialization of directional embeddings.} GemNet initialize all the two-tuples $m$ (also called directional embeddings) at the embedding block by the following paradigm
\begin{equation}
m_{ab}=f_{\rm init}^{\rm G}(z_a, z_b, d_{ab}).
\end{equation}
At the initial step, 3-E-DisGNN follows the paradigm outlined below:
\begin{equation}
h_{abc}=f_{\rm init}^{\rm 3E}(z_a,z_b,z_c,d_{ab},d_{ac},d_{bc}).
\end{equation}
Then we have
\begin{align*}
\mathcal{I}^{{\rm 3E}, (abc)}_{\rm init} \to (\mathcal{I}_{\rm init}^{{\rm G}, (ab)}, \mathcal{I}_{\rm init}^{{\rm G}, (ac)}, \mathcal{I}_{\rm init}^{{\rm G}, (bc)}, \mathcal{I}_{\rm init}^{{\rm G}, (ba)},
\mathcal{I}_{\rm init}^{{\rm G}, (ca)},
\mathcal{I}_{\rm init}^{{\rm G}, (cb)}),
\end{align*}
meaning that we can extract all the information to calculate $m_{ab}, m_{ac}, m_{bc}, m_{ba}, m_{ca}, m_{cb}$ from the input of $f_{\rm init}^{\rm 3E}$. And thanks to the universal property of $f_{\rm init}^{\rm 3E}$, we can approximate a function that accurately calculates these variables and injectively embeds them into $h_{abc}$, such that $h_{abc} \to (m_{ab}, m_{ac}, m_{bc}, m_{ba}, m_{ca}, m_{cb}) $.

\textbf{Initialization of atom embeddings.} GemNet initializes all the one-tuples $u_i$ (also called atom embeddings) at the embedding block simply by passing atomic number $z_i$ through an embedding layer. 

Note that since we can learn all the $m_{ij}$
 and embed them into $h_{abc}$ by $f_{\rm init}^{\rm 3E}$, it is also possible to learn all the $u_{i}, i\in\{a,b,c\}$ and embed them into $h_{abc}$ at the same time with some function, since $\mathcal{I}^{{\rm 3E}, (abc)}_{\rm init}$ contains $z_i, i\in\{a,b,c\}$. Therefore, $h_{abc} \to (u_a, u_b, u_c)$ holds.

 \textbf{Initialization of geometric information.} It is an important observation that since $f_{\rm init}^{\rm 3E}$ take all the pair-wise distance within 3-tuple $abc$ as input, all the geometric information within the tuple can be included in $h_{abc}$. This makes geometric information rich in the 3-tuple embedding $h_{abc}$.

 To remind the readers, now we prove that there exists a function $f_{\rm init}^{\rm 3E}$ which can correctly calculate $m_{ij}\ (i\neq j, i,j\in\{a,b,c\})$, $u_{i}\ (i\in\{a,b,c\})$ and all the geometric information $\mathcal{I}_{\rm geo}$ within the triangle $abc$, and injectively embed them into $h_{abc}$:
\begin{align*}
h_{abc} \to \Big(\big(m_{ij} \mid i\neq j, i,j\in\{a,b,c\}\big), \big(u_i \mid (i\in\{a,b,c\})\big), \big(\mathcal{I}_{\rm geo} \mid \mathcal{I}_{\rm geo} \in abc \big)\Big).
\end{align*}

 \subsubsection{Construction of Atom Embedding Block}
 \label{sec:atom emb block}

 \textbf{Atom embedding block.} GemNet updates the 1-tuple embeddings $u$ by summing up all the relevant 2-tuple embeddings $m$ in the atom emb block. The process can be formulated as
\begin{equation}
    u_a = f_{\rm atom}^{\rm G}\big(\msl  (m_{ka}, e_{\rm RBF}^{(ka)})\mid k \in [N] \msr \big).
\end{equation}
Since this function involves the process of pooling, it cannot be learned by $f_{\rm init}^{\rm 3E}$. However, it can be easily learned by the basic message passing layers of 3-E-DisGNN $f_{\rm MP}^{\rm 3E}$, which is formulated as
\begin{align}
h_{abc} &= f_{\rm MP}^{\rm 3E} \big(h_{abc}, \msl  (h_{kbc}, e_{ak})\mid k\in [N]\msr , \msl  (h_{akc}, e_{bk})\mid k\in [N]\msr , \msl  (h_{abk}, e_{ck})\mid k\in [N]\msr \big),\label{eq:3E}\\
&~\text{where}~e_{ab} = f_{\rm e}^{\rm 3E}\big(d_{ab}, \msl  h_{kab} \mid k\in [N] \msr , \msl  h_{akb} \mid k\in [N] \msr , \msl  h_{abk} \mid k\in [N] \msr \big). \label{eq:e}
\end{align}
We now want to learn a function $f_{\rm MP}^{\rm 3E}$ that updates the $u_a, u_b, u_c$ embedded in $h_{abc}$ like $f_{\rm atom}^{\rm G}$ and keep the other variables and information unchanged. Note that $h_{abc}$ is the first input of $f_{\rm MP}^{\rm 3E}$, so all the old information is maintained. As what we talked about earlier, the main focus is to check whether the information to update $u$ is contained in $f_{\rm MP}^{\rm 3E}$'s input. In fact, the following derivation holds
\begin{align*}
\mathcal{I}_{\rm MP}^{{\rm 3E}, (abc)}
&\to \msl (h_{akc}, e_{bk})\mid k \in [N]\msr 
\to \msl h_{akc}\mid k \in [N]\msr 
\\
&\to \msl  (m_{ka}, d_{ka})\mid k\in [N] \msr 
\to \msl  (m_{ka}, e_{\rm RBF}^{(ka)})\mid k \in [N]\msr  = \mathcal{I}_{\rm atom}^{{\rm G}, (a)}.
\end{align*}
Note that $d_{ka} \in abc$, and $e_{\rm RBF}^{(ka)}$ can be calculated from $d_{ka}$. Similarly, we can derive that $\mathcal{I}_{\rm MP}^{{\rm 3E}, (abc)} \to \mathcal{I}_{\rm atom}^{{\rm G}, (b)}$ and $\mathcal{I}_{\rm MP}^{{\rm 3E}, (abc)} \to \mathcal{I}_{\rm atom}^{{\rm G}, (c)}$. This means we can update $u$ in $h$ using a basic message passing layer of 3-E-DisGNN.

\subsubsection{Construction of Interaction Block}
 
\textbf{Message passing.} There are two key procedures in GemNet's message passing block, namely two-hop geometric message passing (Q-MP) and one-hop geometric message passing (T-MP), which can be abstracted as follows
\begin{align}
{\text{T-MP}}&:\ \ m_{ab} = f_{\rm TMP}^{\rm G}\big(\msl  (m_{kb}, e_{\rm RBF}^{(kb)}, e_{\rm CBF}^{(abk)})\mid k \in [N], k\neq a \msr \big)
\\
{\text{Q-MP}}&:\ \ m_{ab} = f_{\rm QMP}^{\rm G}\big(\msl  (m_{k_2k_1}, e_{\rm RBF}^{(k_2k_1)}, e_{\rm CBF}^{(bk_1k_2)}, e_{\rm SBF}^{(abk_1k_2)})\mid k_1,k_2 \in [N], k_1\neq a, k_2 \neq b,a \msr \big) \label{eq:Q-MP}
\end{align}
Note that what we need to construct is a function $f_{\rm MP}^{\rm 3E}$ that can update the information about $m_{ij}, i,j\in \{a,b,c\}, i\neq j$ embedded in $h_{abc}$ just like what $f_{\rm TMP}^{\rm G}$ and $f_{\rm QMP}^{\rm G}$ do, and keep the other variables and information unchanged. Since it is quite similar among different $m_{ij}$, we will just take the update process of $m_{ab}$ for example.

First, T-MP. For this procedure, the following derivation holds
\begin{align*}
\mathcal{I}_{\rm MP}^{{\rm 3E}, (abc)}
&\to \big(h_{abc}, \msl (h_{kbc}, e_{ak})\mid k \in [N]\msr \big)
\to \msl (h_{abc}, h_{kbc}, e_{ak})\mid k \in [N]\msr 
\\
&\to \msl  (m_{kb}, d_{kb}, d_{ab}, \phi_{abk})\mid k\in [N] \msr 
\to \msl  (m_{kb}, e_{\rm RBF}^{(kb)}, e_{\rm CBF}^{(abk)}) \mid k \in [N], k\neq a  \msr  = \mathcal{I}_{\rm TMP}^{{\rm G}, (ab)}.
\end{align*}

Note that in the derivation above, there is an important conclusion implicitly used: the tuple $(h_{abc}, h_{kbc}, e_{ak})$ actually contains all the geometric information in the 4-tuple $abck$, because the distance matrix of the four nodes can be obtained from it. Thus $d_{kb}$, $d_{ab}$, $\phi_{abk}$ can be obtained from it, and since $e_{\rm RBF}^{(kb)}$ and $ e_{\rm CBF}^{(abk)}$ are just calculated from these geometric variables, the derivation holds. 

And we can exclude the element in the multiset where the index $k=a$, simply because these tuples have different patterns from others.

Second, Q-MP. In Q-MP, the pooling objects consist of two indices (which we call two-order pooling), namely $k_1$ and $k_2$ in Equation~(\ref{eq:Q-MP}). One alternative way to do this is two-step pooling, i.e. pool two times and once for one index. For example we can pool index $k_1$ before we pool index $k_2$ as follows:
\begin{equation}
m_{ab} = f_{\rm QMP'}^{\rm G}\big(\msl  \msl  (m_{k_2k_1}, e_{\rm RBF}^{(k_2k_1)}, e_{\rm CBF}^{(bk_1k_2)}, e_{\rm SBF}^{(abk_1k_2)})\mid k_1 \in [N], k_1 \neq a \msr  \mid k_2\in [N], k_2 \neq b,a \msr \big).
\end{equation}
Note that the expressiveness of two-step pooling is not less than two-order pooling, i.e.
\begin{align*}
    &\msl  \msl  (m_{k_2k_1}, e_{\rm RBF}^{(k_2k_1)}, e_{\rm CBF}^{(bk_1k_2)}, e_{\rm SBF}^{(abk_1k_2)})\mid k_1 \in [N], k_1 \neq a \msr  \mid k_2\in [N], k_2 \neq b,a \msr \\
    \to &\msl  (m_{k_2k_1}, e_{\rm RBF}^{(k_2k_1)}, e_{\rm CBF}^{(bk_1k_2)}, e_{\rm SBF}^{(abk_1k_2)})\mid k_1,k_2 \in [N], k_1\neq a, k_2 \neq b,a \msr .
\end{align*}
Thus, if we use two-step pooling to implement Q-MP, it does not reduce expressiveness. Inspired by this, in order to update $m_{ab}$ in $h_{abc}$ like $f_{\rm QMP}^{\rm G}$, we first learn a function by $f_{\rm MP}^{\rm 3E}$ that calculates an intermediate variable $w_c=f^{\rm 3E}_{\rm inter1}\big(\msl  (m_{ck}, e_{\rm RBF}^{(ck)}, e_{\rm CBF}^{(bkc)}, e_{\rm SBF}^{(abkc)}) \mid k \in [N], k\neq a  \msr \big)$ by pooling all the $m_{ck}$ at index $k$, which is feasible because the following derivation holds
\begin{align*}
\mathcal{I}_{\rm MP}^{{\rm 3E}, (abc)}
&\to \big(h_{abc}, \msl  (h_{abk}, e_{ck})\mid k\in [N]\msr \big)
\to \msl  (h_{abc}, h_{abk}, e_{ck})\mid k\in [N]\msr 
\\
&\to \msl  (m_{ck}, d_{ck}, d_{bk}, \phi_{abk},\theta_{abkc}) \mid k\in [N]\msr \\
&\to \msl  (m_{ck}, e_{\rm RBF}^{(ck)}, e_{\rm CBF}^{(bkc)}, e_{\rm SBF}^{(abkc)}) \mid k \in [N], k\neq a  \msr  = \mathcal{I}_{
\rm inter}^{{\rm 3E}, (c)}.
\end{align*}
Note that in the derivation process above, $m_{ck}$ is directly derived from $e_{ck}$ because it contains the information by definition~\ref{eq:e}. Then we apply another message passing layer $\ f_{\rm MP}^{\rm 3E}$ but this time we learn a function that just pools all the $w_c$ in $h_{abc}$ and finally updates $m_{ab}$:
\begin{align*}
\mathcal{I}_{\rm MP}^{{\rm 3E}, (abc)}
&\to \msl  (h_{abk}, e_{ck})\mid k\in [N]\msr 
\to \msl  h_{abk}\mid k\in [N]\msr  \to \msl  w_k\mid k\in [N], k\neq b,a\msr .
\end{align*}
This means we can realize $f_{\rm QMP'}^{\rm G}$ by stacking two message passing layers up.

\textbf{Atom self-interaction.} This sub-block actually involves two procedures: First, update atom embeddings $u$ according to the updated directional embeddings $m$. Second, update the directional embeddings $m$ according to the updated atom embeddings $u$. The first step is actually an atom embedding block, which is already realized by our $f_{\rm MP}^{\rm 3E}$ in Appendix~\ref{sec:embedding block}. The second procedure can be formulated as
\begin{equation}
    m_{ab} = f_{\rm self-inter}^{\rm G}(u_a,u_b,m_{ab}).
\end{equation}
It is obvious that we can update $m_{ab}$ in $h_{abc}$ by $f_{\rm MP}^{\rm 3E}$ because the following derivation holds
$$
\mathcal{I}_{\rm MP}^{{\rm 3E}, (abc)} \to h_{abc} \to (u_a, u_b, m_{ab}) = \mathcal{I}_{\rm self-inter}^{{\rm G}, (ab)}.
$$

\subsubsection{Construction of Output Block}
\label{sec:gemnet output block}

In GemNet, the final output $t$ is obtained by summing up all the sub-outputs from each interaction block. While it is possible to add additional sub-output blocks to our model, in our proof we only consider the case where the output is obtained solely from the final interaction block. 

The following function produces the output of GemNet 
\begin{equation}
\label{eq:GemOut}
    t = \sum_{a\in[N]}W_{\rm out}\big(f_{\rm atom}^{\rm G}(\msl  (m_{ka}, e_{\rm RBF}^{(ka)})\mid k \in [N] \msr )\big),
\end{equation}
where $W_{\rm out}$ is a learnable matrix. 

And our output function is 
\begin{equation}
    t = f^{\rm 3E}_{\rm output}\big(\msl  h_{abc}\mid a,b,c \in [N] \msr \big).
\end{equation}
We can realize Equation~(\ref{eq:GemOut}) by stacking a message passing layer and an output block. First, the message passing layer updates all the atom embeddings $u$ in $h_{abc}$, and then the output block extracts $u$ from $h$ as follows, and calculates $t$ like GemNet.
\begin{align*}
\mathcal{I}_{\rm output}^{{\rm 3E}, (abc)} \to \msl  h_{aaa}\mid a \in [N] \msr  \to \msl  u_a\mid a \in [N] \msr .
\end{align*}

\subsection{Proof of Proposition~\ref{prop:DimeNet}}
\label{sec:DimeNet_proof}

In this section, we will follow the basic method and notations in Appendix~\ref{sec:GemNet_proof}. We will mainly prove for 2-F-DisGNN, and since when $k=2$, $k$-E-DisGNN's update function can implement $k$-F-DisGNN's (See Appendix~\ref{proof of univers2}), the derivations also holds for 2-E-DisGNN. We will use the superscript \textit{D} to represent functions in DimeNet and \textit{2F} to represent functions in 2-F-DisGNN.

\subsubsection{Construction of Embedding Block}

DimeNet initializes all the two tuples in the embedding block just like GemNet, which can be formulated as
\begin{equation}
m_{ab}=f_{\rm init}^{\rm D}(z_a, z_b, d_{ab}).
\end{equation}
What 2-F-DisGNN does at the initialization step is
\begin{equation}
h_{ab} = f_{\rm init}^{\rm 2F}(z_a, z_b, d_{ab}).
\end{equation}
Now assume we want to learn such a function that can learn both $m_{ab}$ and $m_{ba}$, then embed it into $h_{ab}$. As what we talked about in Appendix~\ref{sec:GemNet_proof}, it is possible because the following derivation holds
\begin{align*}
    \mathcal{I}_{\rm init}^{{\rm 2F}, (ab)} \to (\mathcal{I}_{\rm init}^{{\rm D}, (ab)},  \mathcal{I}_{\rm init}^{{\rm D}, (ba)}).
\end{align*}
Note that $h_{ab}$ also contains the geometric information, but in this case, just the distance. Different from GemNet, DimeNet does not "track" the 1-tuple embeddings: it does not initialize and update the atom embeddings in the model, and only pool the 2-tuples into 1-tuples in the output block. So there is no need to embed the embeddings of atom $a$ and atom $b$ into $h_{ab}$.

\subsubsection{Construction of Interaction Block}

The message passing framework of DimeNet (so-called directional message passing) can be formulated as
\begin{equation}
m_{ab} = f_{\rm MP}^{\rm D}\big(\msl  (m_{ka}, e_{\rm RBF}^{(ab)}, e_{\rm CBF}^{(kab)}) \mid  k \in [N], k \neq b\msr \big).
\end{equation}
And the message passing framework of 2-F-DisGNN can be formulated as
\begin{equation}
h_{ab} = f_{\rm MP}^{\rm 2F}\big(h_{ab}, \msl  (h_{kb}, h_{ak}) \mid  k \in [N], k \neq b\msr \big).
\end{equation}
Now we want to learn a function $f_{\rm MP}^{\rm 2F}$ that can updates the $m_{ab}$ and $m_{ba}$ embedded in $h_{ab}$ like what $f_{\rm MP}^{\rm D}$ does. We need to check if $f_{\rm MP}^{\rm 2F}$ have sufficient information to update $m_{ab}$ and $m_{ba}$ embedded in its output $h_{ab}$. The derivation holds:
\begin{align*}
\mathcal{I}_{\rm MP}^{{\rm 2F},(ab)}
&\to \msl   (h_{ab}, h_{kb}, h_{ak}) \mid k \in [N]\msr 
\to \msl  (m_{ka}, d_{ab}, d_{ka}, \phi_{kab})\mid k\in [N], k \neq b \msr 
\\
&\to \msl  (m_{ka}, e_{\rm RBF}^{(ab)}, e_{\rm CBF}^{(kab)}) \mid k \in [N], k \neq b\msr 
= \mathcal{I}_{\rm MP}^{{\rm D},(ab)}.
\end{align*}
Note that we again used the observation that tuple $(h_{ab}, h_{kb}, h_{ak})$ contains all the geometric information of 3-tuple $abk$. Similarly we can get $\mathcal{I}_{\rm MP}^{{\rm 2F},(ab)} \to \mathcal{I}_{\rm MP}^{{\rm D},(ba)}$.

\subsubsection{Construction of Output Block}

The output block of DimeNet is quite similar to that of GemNet (Appendix~\ref{sec:gemnet output block}), which can be formulated as
\begin{align}
u_{a} &= f_{\rm atom}^{\rm D}\big(\msl  (m_{ka}, e_{\rm RBF}^{(ka)}) \mid k \in [N]\msr \big),\\
t &= \sum_{a\in[N]}u_{a}.\label{eq:gemnet output block}
\end{align}
This can be realized by stacking a message passing layer of 2-F-DisGNN and its output block up.

The message passing layer of 2-F-DisGNN can learn $u_a$ and $u_b$ and embed it into the output $h_{ab}$ because the following derivation holds
\begin{align*}
\mathcal{I}_{\rm MP}^{{\rm 2F},(ab)}
&\to \msl   (h_{kb}, h_{ak}) \mid k \in [N]\msr 
\to \msl   h_{ak} \mid k \in [N]\msr 
\\
&\to \msl  (m_{ka}, d_{ka})\mid k\in [N] \msr 
\to \msl  (m_{ka}, e_{\rm RBF}^{(ka)}) \mid k \in [N]\msr  = \mathcal{I}_{\rm atom}^{{\rm D}, (a)}.
\end{align*}
Similarly we can get $\mathcal{I}_{\rm MP}^{{\rm 2F},(ab)} \to \mathcal{I}_{\rm atom}^{{\rm D}, (b)}$.

And the output block of 2-F-DisGNN is formulated as follows
\begin{equation}
t = f_{\rm output}^{\rm 2F}\big(\msl  h_{ab} \mid a,b \in [N]\msr \big).
\end{equation}
Note that the following derivation holds
\begin{align*}
\mathcal{I}_{\rm output}^{{\rm 2F}, (ab)} \to \msl  h_{aa}\mid a \in [N] \msr  \to \msl  u_a\mid a \in [N] \msr .
\end{align*}
This means that the output block of 2-F-DisGNN can implement the sum operation in Equation~(\ref{eq:gemnet output block}).

\subsection{Proof of Theorem~\ref{theorem:expressiveness}}
\label{proof:expressiveness}

We first restate the theorem formally.

\begin{theorem}
    \label{theorem:expressiveness_formal}
    Let $\mathcal{M(\theta)} \in {\rm MinMods} = \{$1-round 4-DisGNN, 2-round 3-E-DisGNN, 2-round 3-F-DisGNN$\}$ with parameters $\theta$. Denote $h_m = f_{\rm node} \big( \msl h^T_{\bm{v}} \mid \bm{v} \in V^k, v_0 = m \msr \big) \in \mathbb{R}^{K'}$ as node $m$'s representation produced by $\mathcal{M}$ where $f_{\rm node}$ is an injective multiset function, and $\mathbf{x}_m^c$ as node $m$'s coordinates w.r.t. the center.  Denote ${\rm MLPs}(\theta'): \mathbb{R}^{K'} \to \mathbb{R}$ as a multi-layer perceptron with parameters $\theta'$.
    \begin{enumerate}[leftmargin=20pt,itemsep=0pt,topsep=0pt]
        \item (\textbf{Completeness}) Given arbitrary two geometric graphs $\mathbf{X}_1,\mathbf{X}_2\in \mathbb{R}^{3\times n}$, then there exists $\theta_0$ such that $\mathcal{M}(\mathbf{X_1};\theta_0) = \mathcal{M}(\mathbf{X_2};\theta_0) \iff \mathbf{X_1}$ and $\mathbf{X_1}$ are congruent.
        \item (\textbf{Universality for scalars}) Let $f: \mathbb{R}^{3\times n} \to \mathbb{R}$ be an arbitrary continuous, $E(3)$ invariant and permutation invariant function over geometric graphs, then for any compact set $M \subset \mathbb{R}^{3\times n}$ and $\epsilon > 0$, there exists $\theta_0$ such that $\forall \mathbf{X} \in M, |f(\mathbf{X})-\mathcal{M}(\mathbf{X};\theta_0)| \leq \epsilon$.
        \item (\textbf{Universality for vectors}) Let $f: \mathbb{R}^{3\times n} \to \mathbb{R}^3$ be an arbitrary continuous, $O(3)$ equivariant, permutation and translation invariant function over geometric graphs, let $f^{\rm equiv}_{\rm out} = \sum_{m=1}^{|V|} {\rm MLPs}\big( h_m  \big)\mathbf{x}_m^c$, then for any compact set $M \subset \mathbb{R}^{3\times n}$ and $\epsilon > 0$, there exists $\theta_0$ for $\mathcal{M}$ and $\theta_0'$ for ${\rm MLPs}$ such that $\forall \mathbf{X} \in M, |f(\mathbf{X})-f^{\rm equiv}_{\rm out}(\mathbf{X};\theta_0, \theta_0')| \leq \epsilon$.
    \end{enumerate}
\end{theorem}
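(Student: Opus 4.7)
The three parts are tightly linked: once completeness is proved in the strong single-node form stated in the theorem, the two universality claims follow from essentially standard facts, so I would organize the proof with completeness as the central engine.

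My plan is to first establish the following key lemma: for each $\mathcal{M}\in{\rm MinMods}$, there is a choice of parameters $\theta_0$ realizing injective hashes at every stage such that the pooled node representation $h_m$ determines, up to a global congruence fixing $m$, the entire geometric graph. I would prove this separately for the three models but with a uniform strategy. For 1-round 4-DisGNN, the initialization block injectively embeds each $4$-tuple's ordered distance matrix and atomic types, so $h^0_{\bm{v}}\to$ (distance matrix of $v_0 v_1 v_2 v_3$). A single round of $k$-WL aggregation over the four neighbor-slots of $\bm{v}=(m,a,b,c)$ lets us recover, via the $j=1$ update, the multiset $\msl \text{distance matrix of }(m,a,b,c,d)\msr$ for all $d\in V$ (using Figure~\ref{fig:k-order-information and mixtuples}(B) logic: gluing two $4$-tuples that share three indices reconstructs a $5$-tuple's distance matrix). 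Pooling over all $(a,b,c)$ then yields the entire multiset of $(k+1)$-subset distance matrices anchored at $m$, from which the full distance graph is reconstructed by ordinary inclusion-exclusion over tuples. For 2-round 3-E/F-DisGNN, the same accounting works: round one propagates the $3$-tuple distance matrices into $4$-tuple information (via the edge representation $e^t_{\bm v\setminus\bm w,\bm w\setminus\bm v}$ for E, and via the F-neighbor aggregation for F), and round two lifts this to $5$-tuple information; pooling over all $3$-tuples starting with $m$ again recovers the distance graph relative to $m$. By Theorem~\ref{theorem:distance matrix}, knowing the distance graph is equivalent to knowing the geometric graph up to congruence, which gives completeness.

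Given completeness, universality for scalars is essentially a soft argument. The map $\mathbf{X}\mapsto \mathcal{M}(\mathbf{X};\theta_0)$ factors through the quotient by congruence and separates points; on any compact $M\subset\mathbb{R}^{3\times n}$ the image is compact, so by Stone–Weierstrass (or, equivalently, by first passing through the separating node-representation $h_m$ and applying the universal approximation theorem to the final MLP and deep-set pooling) any continuous invariant $f$ can be approximated uniformly to within $\epsilon$. I would appeal explicitly to the argument of Hordan et al.\ here since the reduction is identical once completeness is in hand.

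For universality on vectors, I would invoke the representation theorem of Villar et al.\ stating that any continuous, $O(3)$-equivariant, translation- and permutation-invariant function $f:\mathbb{R}^{3\times n}\to\mathbb{R}^3$ admits a representation of the form $f(\mathbf{X})=\sum_{m=1}^{|V|} g_m(\mathbf{X})\,\mathbf{x}_m^c$, where each $g_m$ is a continuous, $E(3)$- and permutation-invariant scalar function of $\mathbf{X}$ that is furthermore equivariant under permutations in a way that singles out node $m$. The completeness lemma implies that $h_m$ is a separating invariant of the pointed geometric graph $(\mathbf{X},m)$, so by the scalar universality argument again applied to pointed graphs, there exists an MLP with parameters $\theta_0'$ such that $\mathrm{MLPs}(h_m)$ approximates $g_m(\mathbf{X})$ uniformly on $M$. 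Summing against $\mathbf{x}_m^c$ and using the triangle inequality together with the uniform bound $\|\mathbf{x}_m^c\|\le R$ on the compact set $M$ yields the claimed $\epsilon$-approximation.

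The hard part is the completeness lemma, specifically the claim that a single node's pooled representation $h_m$ (rather than the full multiset of all tuple representations) already determines the geometry. The subtlety is that one must track not only which distance matrices can be reconstructed in principle, but also that this reconstruction can be read off from $h_m$ alone under an injective pooling $f_{\rm node}$. The bookkeeping for 2-round 3-E-DisGNN — showing that after two rounds $h_m$ contains the multiset of $5$-tuple distance matrices anchored at $m$, with the node $m$ tracked through the zeroth slot — is the place where I expect to spend the most care, since it requires simultaneously exploiting the edge-augmented aggregation formula~\eqref{def:k-E-DisGNN} and the fact that pooling is taken only over tuples with $v_0=m$.
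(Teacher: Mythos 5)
There is a genuine gap at the heart of your completeness lemma, in two places. First, your claim that for 1-round 4-DisGNN "gluing two $4$-tuples that share three indices reconstructs a $5$-tuple's distance matrix" is false for the plain $k$-WL aggregation. If $\boldsymbol{v}=(m,a,b,c)$ and $\boldsymbol{w}=(m,a,b,d)\in N_4(\boldsymbol{v})$, then $h^0_{\boldsymbol{v}}$ and $h^0_{\boldsymbol{w}}$ together give you all pairwise distances among $\{m,a,b,c\}$ and the distances from $d$ to $m,a,b$ --- but \emph{not} $d(c,d)$. The Figure~\ref{fig:k-order-information and mixtuples}(B) gluing works only for the E-variant (where $e^t_{\boldsymbol{v}\backslash\boldsymbol{w},\boldsymbol{w}\backslash\boldsymbol{v}}$ supplies exactly that missing distance) and the F-variant (where the ordered neighbor tuple supplies it); the paper is explicit in Section~\ref{sec:high-order geometry} that plain $k$-DisGNN learns only $k$-order, not $(k+1)$-order, information. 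Consequently each new point $d$ is pinned down only up to reflection across the plane of the three retained reference points, and the entire difficulty of the paper's proof is resolving this two-fold ambiguity: it fixes one affinely independent $4$-tuple $\boldsymbol{k}$, forms for each slot $j$ a multiset of candidate coordinate pairs $P_{j,\boldsymbol{k}}$, and runs an iterative refinement whose key step is a contradiction argument showing the four reflection planes cannot all contain a common centroid. Your proposal skips this entirely.

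Second, even for the E/F variants where $(k+1)$-tuple distance matrices are genuinely recoverable, your final step --- passing from the \emph{multiset} of sub-distance-matrices anchored at $m$ to the full distance graph "by ordinary inclusion-exclusion over tuples" --- is not a proof. A multiset of sub-configurations forgets which concrete nodes participate in which tuple, and it is a nontrivial (and in general false for small tuple sizes) assertion that such a multiset determines the point cloud up to congruence; this is exactly why the paper does not take this route and instead reconstructs explicit coordinates by intersecting the candidate sets attached to a single reference tuple (for 3-E/F-DisGNN it additionally needs the second round to locate an off-plane node $j_0$ and obtain three further reference planes). Your treatment of scalar universality (via Hordan et al.) and vector universality (via Villar et al.'s representation and the $\mathcal{G}_m$-orbit separation of $h_m$) matches the paper and would be fine once the completeness lemma is actually established.
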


In Theorem~\ref{theorem:expressiveness_formal}, we study geometric graphs as pure point clouds with identical node features, rather than labeled point clouds with varying node features $z_i \in \mathbb{R}^k$. We note that distinguishing point clouds with identical features constitutes a significantly more complex case~\citep{kurlin2023polynomial} compared to the latter, and the conclusions of completeness and universality can be readily verified to hold when specific node features are introduced.

In the forthcoming proof for completeness part, we examine the optimal scenario for $k$-(E/F-)DisGNNs, namely the geometric version $k$-(E/F)WL, where the hyper-parameters such as hidden dimensions are set and parameters $\theta$ of $\mathcal{M}(\theta)$ are learned to ensure that all the relevant functions are hash functions. To simplify the analysis for $k$-EWL, we substitute the edge representation in $k$-E-DisGNN with the edge weight (distance) alone, which may decrease the expressiveness of models but is already enough.

\textbf{Basic idea for proof of completeness.} We prove the completeness part of the theorem, which demonstrates that the optimal DisGNNs have the ability to distinguish all geometric graphs, through a \textit{reconstruction} method. Specifically, if we can obtain all points' coordinates up to permutation and E(3) translation from the output of DisGNNs, it implies that DisGNNs can differentiate between all geometric graphs.

\subsubsection{Proof of Completeness for One-Round 4-WL}

\begin{proof}
We begin by assuming the existence of four affinely independent nodes, meaning that there exists a 4-tuple of nodes that can form a tetrahedron. If this condition is not met, and all the nodes lie on the same plane, the problem is simplified, as the cloud degenerates into 2D space. We will address this case later in the discussion. We assume the point number is $N$.

Our first step is to prove that we can recover the 4-tuple formed by the four affinely independent nodes from the output of one-round 4-WL, denoted by $\{\!\!\{ c_{\boldsymbol{i}}^1 \mid \boldsymbol{i} \in [N]^4 \}\!\!\}$. Since all the tuple colors are initialized injectively according to the tuple’s distance matrices, there exists a function $f^0$ such that $f^0(c_{\boldsymbol{i}}^0)=D_{\boldsymbol{i}}$, where $D_{\boldsymbol{i}}$ is the distance matrix of tuple ${\boldsymbol{i}}$. Note that since the update procedures of the colors are ${\rm HASH}$ functions,  $c^t_{\boldsymbol{i}} = {\rm HASH}(c^{t-1}_{\boldsymbol{i}}, C_{0,{\boldsymbol{i}}}^{t-1}, C_{1,{\boldsymbol{i}}}^{t-1}, C_{2,{\boldsymbol{i}}}^{t-1},C_{3,{\boldsymbol{i}}}^{t-1})$ where $C_{m,{\boldsymbol{i}}}^{t-1}$ denotes the color multiset of tuple ${\boldsymbol{i}}$’s $m$-th neighbors, there also exists a series functions $f^t$ such that $f^t(c_{\boldsymbol{i}}^t)=c_{\boldsymbol{i}}^{t-1}$. This simply means that the latter colors of a tuple will contain all the information of its former colors. Given the fact, we can use the function $f^1\circ f^0$ to reconstruct the distance matrices of all the tuples, and find the one that represents a tetrahedron geometry (Theorem~\ref{theorem:distance matrix}). We mark the found tuple with ${\boldsymbol{k}}$.

Now we prove that one can reconstruct the whole geometry from just $c_{\boldsymbol{k}}^1$.

First, we can reconstruct the 4 points’ 3D coordinates (given any arbitrary center and orientation) from ${\boldsymbol{k}}$’s distance matrix (Theorem~\ref{theorem:distance matrix}). Formally, there exists a function $f^D(D_{\boldsymbol{k}})=X_{\boldsymbol{k}}$, where $X_{\boldsymbol{k}}\in \mathbb{R}^{4*3}$ represents the coordinates. And since $f^1\circ f^0(c_{\boldsymbol{k}}^1)=D_{\boldsymbol{k}}$, we have that $f^1\circ f^0\circ f^D(c_{\boldsymbol{k}}^1)=X_{\boldsymbol{k}}$.

The update function of $c_{\boldsymbol{k}}$ is $c_{\boldsymbol{k}}^1 = {\rm HASH}(c^{0}_{\boldsymbol{k}}, C_{0,{\boldsymbol{k}}}^{0}, C_{1,{\boldsymbol{k}}}^{0}, C_{2,{\boldsymbol{k}}}^{0},C_{3,{\boldsymbol{k}}}^{0})$, where $C^0_{m,{\boldsymbol{k}}}=\{\!\!\{ c_{\Phi_m(\boldsymbol{k},j)}^0 \mid j \in [N]\}\!\!\},m\in [4]$ and $\Phi_m(\boldsymbol{k},j)$ replaces the $m$-th element in tuple $\boldsymbol{k}$ with $j$. Since the function is ${\rm HASH}$ function, we can also reconstruct each of $C^0_{m,{\boldsymbol{k}}}$ from $c_{\boldsymbol{k}}^1$. In $C^0_{m,{\boldsymbol{k}}}$, there exists $N$ 4-tuples’ initial color, from which we can reconstruct the 4-tuple’s distance matrix. Note that given a color $c_{\Phi_m(\boldsymbol{k},j)}^0 $ in the multiset, we can reconstruct 3 distances, namely $d(x_j, x_{\boldsymbol{k}_{(m+1)\%4}}),d(x_j, x_{\boldsymbol{k}_{(m+2)\%4}}),d(x_j, x_{\boldsymbol{k}_{(m+3)\%4}})$, where $x_j$ representes the 3D coordinates of point $j$ and $d(x,y)$ calculates the $l_2$ norm of $x,y$. 

There is a strong geometric constrain in 3D space: Given the distances between a point and 3 affinely independent points (whose postions are known), one can calculate at most 2 possible positions of the point, and the 2 possible postions are mirror-symmetric relative to the plane formed by the 3 affinely independent points. Moreover, the point is on the plane formed by the 3 affinely independent points iff there is only one solution of the distance equations.

Now since we have already reconstructed $X_{\boldsymbol{k}}$, and ${\boldsymbol{k}}_{(m+1)\%4}$, ${\boldsymbol{k}}_{(m+2)\%4}$, ${\boldsymbol{k}}_{(m+3)\%4}$ are affinely independent, we can calculate 2 possible positions of point $j$. And for the whole multiset $C^0_{m,{\boldsymbol{k}}}$, we can calculate a $N$-size multiset where each element is a pair of possible positions of some point, denoted as $P_{m,{\boldsymbol{k}}}=\{\!\!\{\{x_j^{(1)}, x_j^{(2)}\}\mid  j\in [N]\}\!\!\}$. Note that now the possible geometry of the $N$ points form a 0-dimension manifold, with no more than 2$^N$ elements. 

There are two important properties of $P_{m,{\boldsymbol{k}}}$:
\begin{enumerate}[leftmargin=20pt]
    \item Given arbitrary two elements (pairs) of $P_{m,{\boldsymbol{k}}}$, denoted as $p_{j_1}$ and $p_{j_2}$, we have $p_{j_1}=p_{j_2}$ or $p_{j_1} \cap p_{j_2} = \emptyset$. These correspond to two possible situations: $j_1$ and $j_2$ are either mirror-symmetric relative to the plane formed by ${\boldsymbol{k}}_{(m+1)\%4},{\boldsymbol{k}}_{(m+2)\%4},{\boldsymbol{k}}_{(m+3)\%4}$, or not.
    \item Arbitrary element $p_{j}$ of $P_{m,{\boldsymbol{k}}}$ has multiplicity value at most 2. This corresponds two possible situations: $j$ either has a mirror-symmetric point relative to the plane formed by ${\boldsymbol{k}}_{(m+1)\%4},{\boldsymbol{k}}_{(m+2)\%4},{\boldsymbol{k}}_{(m+3)\%4}$, or does not.
\end{enumerate}

The follows proves that the four 0-dimension manifolds determined by $P_{m,{\boldsymbol{k}}}, m\in[4]$ can intersect at a unique solution, which is the real geometry.

We first “refine” the multiset $P_{m,{\boldsymbol{k}}}$. There are 3 kinds of pairs in $P_{m,{\boldsymbol{k}}}$:
\begin{enumerate}[leftmargin=20pt]
    \item The pair $\{x_j^{(1)}, x_j^{(2)}\}$ that have multiplicity value 2. This means that there are two mirror-symmetric points relative to the plane formed by ${\boldsymbol{k}}_{(m+1)\%4},{\boldsymbol{k}}_{(m+2)\%4},{\boldsymbol{k}}_{(m+3)\%4}$. So we can ensure that the two points with coordinates $x_j^{(1)}, x_j^{(2)}$ both exist.
    \item The pair where $x_j^{(1)} = x_j^{(2)}=x_j$. This means that by solving the 3 distance equations, there is only one solution. So we can uniquely determine the coordinates of point $j$ (and the point is on the plane).
    \item The other pairs.
\end{enumerate}

We can determine the real coordinates of the points from the first two kinds of pairs, so we record the real coordinates and delete them from the multiset $P_{m,{\boldsymbol{k}}}$ (for the first kind, delete both two pairs). We denote the real coordinates determined by $P_{m,{\boldsymbol{k}}}$ as $A_{m}$. Now we have 4 preliminarily refined sets, denoted as $P_{m,{\boldsymbol{k}}}'$. Note that now in each $P_{m,{\boldsymbol{k}}}'$:
\begin{enumerate}[leftmargin=20pt]
    \item Arbitrary two elements intersect at $\emptyset$.
    \item Arbitrary element has two distinct possible coordinates, one is real and the other is fake.
    \item The two properties above also mean that elements in multiset $\bigcup_{p\in P'_{m,{\boldsymbol{k}}}} p$ has only multiplicity 1.
\end{enumerate}

Since the real coordinates $A_m$ determined by different $P_{m,{\boldsymbol{k}}}$ may be not the same, we further refine the $P_{m,{\boldsymbol{k}}}'$ by removing the pairs that contain coordinates in $(A_0\cup A_1\cup A_2\cup A_3) - A_m$. Note that according to the property of $P_{m,{\boldsymbol{k}}}'$ mentioned above, each coordinates in $(A_0\cup A_1\cup A_2\cup A_3) - A_m$ will and will only corresponds to one pair in $P_{m,{\boldsymbol{k}}}'$. Denote the further refined sets as $P_{m,{\boldsymbol{k}}}''$. After the refinement, each $P_{m,{\boldsymbol{k}}}''$ has equally $M = N-|A_0\cup A_1\cup A_2\cup A_3|$ elements (pairs), corresponding to $M$ undetermined real coordinates.

We use $p\cap P$ to denote the multiset $\{\!\!\{ p' \mid p'\in P, p'\cap p\neq \emptyset \}\!\!\}$.  Now we prove that $\exists m \in [4],\exists p \in P''_{m,{\boldsymbol{k}}}, \exists m'\in [4] -m,$  s.t. $|p\cap P''_{m',{\boldsymbol{k}}}|=1$ by contradiction. 

Note that $\forall m \in [4], \forall p \in P_{m,{\boldsymbol{k}}}'', \forall m'\in[4]-m, |p\cap P_{m',{\boldsymbol{k}}}''|\in \{1,2\}$. That’s because each element in $P_{m,{\boldsymbol{k}}}''$ contains one real coordinates and one fake coordinates, each real/fake coordinates can at most intersect with one pair in $P_{m',{\boldsymbol{k}}}''$, and there must exist one element in $P_{m',{\boldsymbol{k}}}''$ that contains the same real coordinates.

So opposite to the conclusion is: $\forall m \in [4], \forall p \in P_{m,{\boldsymbol{k}}}'', \forall m'\in[4]-m, |p\cap P_{m',{\boldsymbol{k}}}''|=2$, we assume this holds. Now there are also two properties:
\begin{enumerate}[leftmargin=20pt]
    \item For all $m\in [4]$, elements in multiset $\bigcup_{p\in P_{m,{\boldsymbol{k}}}''} p$ has multiplicity 1.
    \item For all $m\in [4]$, $\bigcup_{p\in P_{m,{\boldsymbol{k}}}''} p$  with different $m$ shares the same $M$ real coordinates.
\end{enumerate}

So we have that: $\bigcup_{p\in P_{m,{\boldsymbol{k}}}''} p$ is the same across all $m\in [4]$.  Note that for each $p\in P_{m,{\boldsymbol{k}}}''$, since the two coordinates in $p$ are mirror-symmetric relative to plane ${\boldsymbol{k}}_{(m+1)\%4},{\boldsymbol{k}}_{(m+2)\%4},{\boldsymbol{k}}_{(m+3)\%4}$, the midpoint of the two points is on the plane. This means that by averaging all the coordinates in $\bigcup_{p\in P_{m,{\boldsymbol{k}}}''} p$, we have a center point on the plane ${\boldsymbol{k}}_{(m+1)\%4},{\boldsymbol{k}}_{(m+2)\%4},{\boldsymbol{k}}_{(m+3)\%4}$. However, since  $\bigcup_{p\in P_{m,{\boldsymbol{k}}}''} p$ for all $m\in [4]$ are the same, we have one same center point on each four planes ${\boldsymbol{k}}_{(m+1)\%4},{\boldsymbol{k}}_{(m+2)\%4},{\boldsymbol{k}}_{(m+3)\%4}$, $m\in[4]$. This contradicts to the assumption that ${\boldsymbol{k}}_1, {\boldsymbol{k}}_2,{\boldsymbol{k}}_3,{\boldsymbol{k}}_4$ are affinely independent points. So the conclusion that $\exists m \in [4],\exists p \in P''_{m,{\boldsymbol{k}}}, \exists m'\in [4] -m,$  s.t. $|p\cap P''_{m',{\boldsymbol{k}}}|=1$ holds.

Since the conclusion holds, we can obtain the $p$ and the unique element $p'\in$ $p\cap P''_{m',{\boldsymbol{k}}}$. Moreover, we have that $|p'\cap p| =1$. This is because, the two pairs share one real coordinates, and the fake coordinates must be different (The two planes formed by ${\boldsymbol{k}}_{(m+1)\%4},{\boldsymbol{k}}_{(m+2)\%4},{\boldsymbol{k}}_{(m+3)\%4}$ and ${\boldsymbol{k}}_{(m'+1)\%4},{\boldsymbol{k}}_{(m'+2)\%4},{\boldsymbol{k}}_{(m'+3)\%4}$ are not the same). So we obtain the real coordinates by $p'\cap p$.

Now we further refine the 4  $P''_{m,{\boldsymbol{k}}}$ by deleting the pair that contains the real coordinates we just found, which makes each  $P''_{m,{\boldsymbol{k}}}$ have only $M-1$ pairs now. Then we loop the procedure above to find all the remaining real coordinates.

Note that all the information, including the final geometry, we obtained in the above proof is derived from only $c_{\boldsymbol{k}}^1$. This means that when generating $c_{\boldsymbol{k}}^1$, 4-WL injectively embeds the whole geometry using a HASH function. For two non-congruent geometric graph, their 4-tuple’s (where 4 nodes are affinely independent) colors will always be different, giving the 4-WL the power to distinguish them. 

For the degenerated situation (there does not exist 4 affinely independent nodes), the points form a 2D plane. Then with similar proof idea, we can prove that 3-WL can be universal on 2D plane. Since 3-WL is not stronger than 4-WL, 4-WL can also distinguish all the degenerated cases.
\end{proof}

\subsubsection{Proof of Completeness for Two-Round 3-FWL}
\label{sec:proof of two-round 3-FWL}
\begin{proof}
As previous, our discussion is under the fact that the point cloud does not degenerates to a 2D or 1D point cloud. Otherwise, the problem is quite trivial.

We first find the tuple containing three nodes that are affinely independent from the 2-round 3-FWL’s output $\{\!\!\{ c_{\boldsymbol{i}}^2 \mid \boldsymbol{i} \in [N]^3 \}\!\!\}$. Denote the founded tuple as $\boldsymbol{k}$. We first calculate the 3 nodes’ coordinates according to the $c_{\boldsymbol k}^0$ derived from $c_{\boldsymbol k}^2$.

As the update procedure, $c_{\boldsymbol{k}}^2={\rm HASH}(c_{\boldsymbol{k}}^1, \{\!\!\{ (c^1_{\Phi_0(\boldsymbol k,j)},c^1_{\Phi_1(\boldsymbol k,j)},c^1_{\Phi_2(\boldsymbol k,j)})\mid j\in N \}\!\!\})$, each tuple $(c^1_{\Phi_0(\boldsymbol k,j)},c^1_{\Phi_1(\boldsymbol k,j)},c^1_{\Phi_2(\boldsymbol k,j)})$ in the multiset contains the distance $d_{\boldsymbol k_0, j},d_{\boldsymbol k_1, j},d_{\boldsymbol k_2, j}$. Thus, we can calculate at most two possible coordinates of node $j$, and the two coordinates are mirror-symmetric relative to plane $\boldsymbol k$. Like in the proof of 4-WL, we denote the final calculaoed possible coordinates-pair multiset as $P_{\boldsymbol k}=\{\!\!\{\{x_j^{(1)}, x_j^{(2)}\}\mid  j\in [N]\}\!\!\}$.

We then further find a node $j_0$ that is not on plane $\boldsymbol k$ from the multiset term in $c_{\boldsymbol k}^2$, and its colors' tuple is $(c^1_{\Phi_0(\boldsymbol k,j_0)},c^1_{\Phi_1(\boldsymbol k,j_0)},c^1_{\Phi_2(\boldsymbol k,j_0)})$. Note that the three colors are at time-step 1, which means that they already aggregated neighbors' information for one-round. So by repeating the above procedure, we can again get a possible coordinates-pair multiset $P_{\Phi_m(\boldsymbol k,j_0)}$  from each of $c^1_{\Phi_m(\boldsymbol k,j_0)}$ where $m\in [3]$.

Note that the four plane, ${\boldsymbol k},{\Phi_0(\boldsymbol k,j_0)},{\Phi_1(\boldsymbol k,j_0)},{\Phi_2(\boldsymbol k,j_0)}$, do not intersect at a common point. So we can do as what we do in the proof of 4-WL, to repeatedly refine each multiset $P$ and get all the real coordinates, thus reconstruct the whole geometry.
\end{proof}

\subsubsection{Proof of Completeness for Two-Round 3-E-DisGNN}

\begin{proof}
As previous, our discussion is under the fact that the point cloud does not degenerates to a 2D or 1D point cloud. Otherwise, the problem is quite trivial.

So we first find the tuple containing three nodes that are affinely independent from the 2-round 3-E-DisGNN’s output $\{\!\!\{ c_{\boldsymbol{i}}^2 \mid \boldsymbol{i} \in [N]^3 \}\!\!\}$, and refer to it as tuple $\boldsymbol k$. Its output color $c_{\boldsymbol k}^2$ can derive the three points’ coordinates. The update procedure is $c_{\boldsymbol{k}}^2={\rm HASH}(c_{\boldsymbol{k}}^1, \{\!\!\{ (c^1_{\Phi_0(\boldsymbol k,j)}, d_{j,\boldsymbol k_0})\mid j \in [N] \}\!\!\}, \{\!\!\{ (c^1_{\Phi_1(\boldsymbol k,j)}, d_{j,\boldsymbol k_1})\mid j \in [N] \}\!\!\}, \{\!\!\{ (c^1_{\Phi_2(\boldsymbol k,j)}, d_{j,\boldsymbol k_2})\mid j \in [N] \}\!\!\})$. Since in the first multiset $\{\!\!\{ (c^1_{\Phi_0(\boldsymbol k,j)}, d_{j,\boldsymbol k_0})\mid j \in [N] \}\!\!\}$, the element $(c^1_{\Phi_0(\boldsymbol k,j)}, d_{j,\boldsymbol k_0})$ contains distances $d_{j,\boldsymbol k_0},d_{j,\boldsymbol k_1},d_{j,\boldsymbol k_2}$, we can calculate a possible coordinates-pair multiset as $P_{\boldsymbol k}=\{\!\!\{\{x_j^{(1)}, x_j^{(2)}\}\mid  j\in [N]\}\!\!\}$. As we discussed before, the elements in $P_{\boldsymbol k}$ may only have multiplicity value 1 or 2. So we discuss for two possible situations:
\begin{enumerate}
    \item All the elements (coordinates pairs) in $P_{\boldsymbol k}$ either have multiplicity value 2 or the two coordinates within the pair are the same. This means that we can determine all the real coordinates from $P_{\boldsymbol k}$, as we discussed in the proof of 4-WL.
    \item There exists some element in $P_{\boldsymbol k}$, that have two distinct coordinates within the pair, and have multiplicity value 1. We denote the node corresponding to such element $j_0$. We can uniquely find its relevant color (i.e., $c^1_{\Phi_0(\boldsymbol k,j_0)},c^1_{\Phi_1(\boldsymbol k,j_0)},c^1_{\Phi_2(\boldsymbol k,j_0)}$) among the three multisets of $c_{\boldsymbol k}^2$'s input, because only one $(c^1_{\Phi_m(\boldsymbol k,j)}, d_{j,\boldsymbol k_m})$ will derive the pair of possible coordinates of $j_0$, which is unique. Now, as we discussed in the proof of 3-FWL, we can reconstruct the whole geometry.
\end{enumerate}
\end{proof}

\subsubsection{Proof of Universality for Scalars}
\label{Proof:universality}
We have shown in the preceding three subsections that for each $\mathcal{M(\theta)} \in {\rm MinMods}$, there exists a parameter $\theta_0$ such that $\mathcal{M}(\theta_0)$ is complete for distinguishing all geometric graphs that are not congruent. Specifically, $\mathcal{M}(\theta_0)$ generates a unique $K$-dimensional intermediate representation for geometric graphs that are not congruent at the output block. According to Theorem 4.1 in~\citet{hordan2023complete}, by passing the intermediate representation through a MLP, $\mathcal{M}(\theta_0)$ can achieve universality for scalars (note that MLPs are already included in the output block of $\mathcal{M}$).

\subsubsection{Proof of Universality for Vectors}
The key to proving universality for vectors of function $f^{\rm equiv}_{\rm out}$ is Proposition 10 proposed by~\citet{villar2021scalars}. The goal is to prove that function $f = {\rm MLPs}\big( h_m )$ can approximate all functions that are O(3)-invariant and permutation-invariant with respect to all nodes except the $m$-th node (We denote the corresponding group as $\mathcal{G}_m$).

First of all, note that $h_m$ is $\mathcal{G}_m$-invariant. That's because $h_{\bm{v}}$ ($\bm{v}\in V^k$) is invariant under O(3) actions, and $h_m$ is obtained by pooling all the $k$-tuples whose first index is $m$. When permuting nodes except the $m$-th one, the multiset $\msl \bm{v} \mid v_0 = m, \bm{v}\in V^k \msr$ does not change. According to Theorem 4.1 in~\citet{hordan2023complete}, to prove that $f^{\rm equiv}_{\rm out}$ is universal, we need to show that $h_m$ can distinguish inputs $\mathbf{X}\in \mathbb{R}^{n\times 3}$ on different $\mathcal{G}_m$ orbits. 

To see this, recall that $h_m = f_{\rm node}\big(\msl h^T_{\bm{v}} \mid v_0 = m \msr\big)$, and as we demonstrated in the first three subsections, there always exists some $h_{\bm{v}}^T$ in the multiset $\msl h^T_{\bm{v}} \mid v_0 = m \msr$ that can reconstruct the entire geometry. For example, in the case of 3-F-DisGNN, if the point cloud does not degenerate into a 1-D point cloud, there must exist $\bm{v}$ where $v_0 = m$ and which contains three nodes that are affinely independent (for arbitrary $m$), which can reconstruct the entire geometry as demonstrated in~\ref{sec:proof of two-round 3-FWL}. It is worth noting that this reconstruction is started at the $m$-th node, thus if two point clouds share the same $h_m$, they must be related by a $\mathcal{G}_m$ action.

Finally, it is worth noting that an alternative perspective to understand the role of group $\mathcal{G}_m$ is through the \textbf{universality for node-wise representation}, $h_m$. The statement "Two geometric graphs are related by a $\mathcal{G}_m$ action" is equivalent to the statement "the two geometric graphs are congruent, and the $m$-th nodes in the two geometric graphs are related by an autoisomorphism." The ability of models to generate node representations that can separate different $\mathcal{G}_m$ orbits implies their capability to generate universal node-wise representations, which is indeed a more powerful property than global universality.

\subsection{Proof of Theorem~\ref{theorem:univers2}}
\label{proof of univers2}

\citet{delle2023three} is our concurrent work, which demonstrated that geometric 3-round 2-FWL and 1-round 3-FWL can distinguish all non-congruent geometric graphs. We refer the readers to read the proof provided by the work, and in comparison to our previous proof, they further make good use of the global properties, i.e., the multiset of all the tuple representations, to get several desired conclusions. In this subsection, we make use of their findings: Since 3-round 2-F-DisGNN and 1-round 3-F-DisGNN are continuous versions of these methods and can achieve injectiveness with a parameter $\theta_0$, they can also achieve completeness. 

It is important to note that the update function of 2-E-DisGNN can be used to implement that of 2-F-DisGNN: When the order is 2, the edge representation $e_{ij}^t$ is simplified as $e_{ij}^t = (e_{ij}, h_{ij})$. As a result, the neighbor component of 2-F-DisGNN $\msl (h_{ik}, h_{kj}) \mid k \in V \msr$ is encompassed in the 1-neighbor component of 2-E-DisGNN $\msl (h_{kj}, e_{ik}^t) \mid k \in V \msr$. Hence, the 3-round 2-E-DisGNN is also complete. When the round number is limited to 1, the update function of 3-E-DisGNN can also implement that of 3-F-DisGNN. This is due to the fact that during the initialization step, each 3-tuple is embedded based on the distance matrix. Consequently, the neighbor component of 3-F-DisGNN, $\msl (h_{mjk}, h_{imk}, h_{ijm}) \mid m \in V \msr$, solely represents the distance matrix of $ijkm$, which can be obtained from any $s$-neighbor component ($s \in [3]$) of 3-E-DisGNN (It should be noted that when the iteration number is not 1, determining whether the update function of 3-E-DisGNN can implement that of 3-F-DisGNN is not trivial). Thus, the E-version DisGNNs are also complete. Moreover, as per our proof in Section~\ref{Proof:universality}, they are also universal for scalars.

\textbf{Discussion of universality for vectors.} The primary obstacle to proving the universality of $\mathcal{M} \in {\rm Minmods_{ext}}$ for vectors lies in the limited expressive power of individual nodes, which may not reconstruct the whole geometry. The problem cannot be easily solved by concating $h_m$ with the global universal representation $h=\msl h_{\bm{v}}^T \mid \bm{v} \in V^k$\msr. Specifically, given $(h_{m1}, h_1)$ and $(h_{m2}, h_2)$ for two $N$-node point clouds, if the two tuples are the same, we know that the two point clouds are related by an E(3) transformation and a permutation of the $N$ nodes. However, we cannot guarantee that the permutation (or there exists another permutation) maps the $m$-th node in the first point cloud to the $m$-th node in the second. For instance, even though two points in a point cloud may not be related by an autoisomorphism, they may still have the same representation after applying $\mathcal{M} \in {\rm Minmods_{ext}}$ (Possible counterexamples may be derived from this situation). As a result, $f$ may not be universal for all $G_m$-invariant functions, and $f_{\rm out}^{\rm equiv}$ may not be universal for all vector functions. We leave this as an open question.

\newpage
\section{Detailed Model Design and Analysis}
\label{sec:detailed model}
\textbf{Radial basis function.} In $k$-DisGNNs, we use radial basis functions (RBF) $f^{\rm rbf}_{\rm e}: \mathbb{R} \to \mathbb{R}^{H_e}$ to expand the distance between two nodes into an $H_e$-dimension vector. This can reduce the number of learnable parameters and additionally provides a helpful inductive bias~\citep{dimenet}. The appropriate choice of RBF is beneficial, and we use RBF defined as
\begin{equation}
    f_e^{\rm rbf}(e_{ij})[k] = e^{-\beta_k({\rm exp}(-e_{ij})-\mu_k)^2},
\end{equation}
where $\beta_k, \mu_k$ are coefficients of the $k^{\rm th}$ basis. Experiments show that this RBF performs better than others, such as the Bessel function used in \citet{dimenet, gemnet}.

\textbf{Initialization function.} We realize $f_{\rm init}$ of $k$-DisGNNs' initialization block in Section~\ref{sec:High-order DisGNNs} as follows
\begin{equation}
    \label{eq:kD initial}
    f_{\rm init}(\boldsymbol{v}) = \bigodot_{i\in [k]}f_z^i\big(f^{\rm emb}_{z}(z_{v_i})\big) \odot \bigodot_{i, j\in [k], i < j} f_e^{ij}\big( f^{\rm rbf}_{\rm e}(e_{v_iv_j}) \big),
\end{equation}
where $\odot$ represents Hadamard product and $\bigodot$ represents Hadamard product over all the elements in the subscript. $f^{\rm emb}_{z}: \mathbb{Z}^+ \to \mathbb{R}^{H_z}$ is a learnable embedding function, $f^{\rm rbf}_{\rm e}: \mathbb{R}^+ \to \mathbb{R}^{H_e}$ is a radial basis function, and $f_z^i, f_e^{ij}$ are neural networks such as $\rm MLPs$ that maps the embeddings of $z$ and $e$ to the common continuous vector space $\mathbb{R}^K$.

$f_{\rm init}$ can learn an injective representation for $\boldsymbol{v}$ as long as the embedding dimensions are high enough, just like passing the concatenation of $z_{v_i}$ and $e_{v_iv_j}$ through an MLP. We chose this function form for the better experimental performance.

Note that by this means, tuples with different \textbf{equality patterns}~\citep {IGN} can be distinguished, i.e., get different representations, without explicitly incorporating the representation of equality pattern. This is because in the context of distance graphs, tuples with different equality patterns will have quite different \textbf{distance matrices} (elements are zero at the positions where two nodes are the same) and thus can be captured by $f_{\rm init}$.

\textbf{Injective functions.} We realize all the message passing functions as injective functions to ensure expressiveness. To be specific, we embed all the multisets in Equation~(\ref{def:k-DisGNN}, \ref{def:et}, \ref{def:k-E-DisGNN}) and in output function with the injective multiset function proposed in~\citet{GIN}, and use the matrix multiplication methods proposed in~\citet{PPGN} to 
implement the message passing functions of $k$-F-DisGNN (Equation~(\ref{def:k-F-DisGNN})).

\textbf{Inductive bias.} Although theoretically there is no need to distinguish tuples with different equality patterns explicitly, we still do so to incorporate inductive bias into the model for better learning. Specifically, we modify the initialization function and the output function as follows: 1. At the initialization step, we learn embeddings for different equality patterns and incorporate them into the results of Equation~(\ref{eq:kD initial}) through a Hadamard product. 2. At the output step, we separate tuples where all sub-nodes are the same and the others into different multisets. These modifications are both beneficial for training and generalization.

\newpage

\section{Detailed Experiments}
\label{sec:detailed exper}

\subsection{Experimental Setup}

\begin{table}[h]
\vskip -0.1in
\caption{Training settings.}
\label{tab:training setting}
\begin{center}
\begin{tabular}{lccc}
\hline
                                   & MD17   & revised MD17  & QM9     \\
\hline
Train set size                     & 1000   & 950    & 110000  \\
Val. set size                      & 1000   & 50     & 10000   \\
batch size                         & 2      & 2      & 16      \\
warm-up epochs                     & 25     & 25     & 5       \\
initial learning rate              & 0.001  & 0.001  & 0.0005  \\
decay on plateau patience (epochs) & 15     & 15     & 10       \\
decay on plateau cooldown (epochs) & 15     & 15     & 10       \\
decay on plateau threshold         & 0.001  & 0.001  & 0.001   \\
decay on plateau factor            & 0.7   & 0.7   & 0.5    \\
\hline
\end{tabular}
\end{center}
\end{table}

\textbf{Training Setting.} For QM9, we use the mean squared error (MSE) loss for training. For MD17, we use the weighted loss function 
\begin{equation}
    \mathcal{L}(\boldsymbol{X},\boldsymbol{z})=(1-\rho)|f_\theta (\boldsymbol{X},\boldsymbol{z}) - \hat{t}(\boldsymbol{X},\boldsymbol{z})|+\frac{\rho}{N}\sum_{i=1}^N\sqrt{\sum_{\alpha=1}^3(-\frac{\partial f_\theta (\boldsymbol{X},\boldsymbol{z})}{\partial \boldsymbol{x}_{i\alpha}} - \hat{F}_{i\alpha}(\boldsymbol{X},\boldsymbol{z}))^2},
\end{equation}
where the force ratio $\rho$ is fixed as 0.999 (For revised MD17, we set $\rho$ to 0.99 for several targets since the data quality is better).

We follow the same dataset split as GemNet~\citep{gemnet}. We optimize all models using Adam~\citep{kingma2014adam} with exponential decay and plateau decay learning rate schedulers, and also a linear learning rate warm-up. To prevent overfitting, we use early stopping on validation loss and an exponential moving average (EMA) with decay rate 0.99 for model parameters during validation and test. We follow DimeNet's~\citep{dimenet} setting to calculate $\Delta \epsilon$ by taking $\epsilon_{\rm LUMO} - \epsilon_{\rm HOMO}$ and use the atomization energy for $U_0, U, H$ and $G$ on QM9. Experiments are conducted on Nvidia RTX 3090 and Nvidia RTX 4090. Results are average of three runs with different random seeds. Detailed training setting can be referred to Table~\ref{tab:training setting}.

\textbf{Model hyperparameters.} The key model hyperparameters we coarsely tune are the rbf dimension, hidden dimension, and number of message passing blocks. For rbf dimension, we use 16 for MD17 and 32 for QM9. We choose the number of message passing blocks from $\{4, 5, 6\}$. For hidden dimension, we use 512 for 2-DisGNNs and 320 for 3-DisGNNs. The detailed hyperparameters can be found in our codes.

\subsection{Supplementary Experimental Information}
\label{sec:supply exper}
\textbf{Discussion of results in MD17.} In our experiments on MD17, most of the state-of-the-art performance we achieve is on force targets, and the loss on energy targets is relatively higher, see Table~\ref{tab:MD17}. On force prediction tasks, $k$-DisGNNs rank top 2 on force prediction tasks on average, and outperform the best results by a significant margin on several molecules, such as aspirin and malonaldehyde. However, the results on the energy prediction tasks are relatively lower, with 2-F-DisGNN ranking $2^{\rm nd}$ and 3-E-DisGNN ranking $4^{\rm th}$.

This is due to the fact that we have assigned a quite \textbf{high weight} (0.999) to the \textbf{force loss} during training, similar to what GemNet\citep{gemnet} does. In comparison, TorchMD~\citep{TorchMD} assigned a weight of 0.8 to the force loss, resulting in better results on energy targets, but not as good results on force targets.

Actually, in molecular simulations, force prediction is a more challenging task. It determines the \textbf{accuracy} of molecular simulations and reflects the performance of a model better~\citep{gemnet} (One can see that the energy performance of different models does not vary much). Therefore, it makes more sense to focus on force prediction. Furthermore, previous researches~\citep{NequIP, christensen2020role} have found that models can achieve significantly lower energy loss on the revised MD17 dataset than on the original MD17 dataset, while holding similar force accuracy on the two datasets. As analyzed in~\citet{NequIP}, this suggests that the \textbf{noise floor} on the original MD17 dataset is higher on the energies, indicating that better force prediction results are more meaningful than energy prediction results on the original MD17 datasets.

\begin{table}[h]
\centering
\caption{MAE loss on revised MD17. Energy (E) in kcal/mol, force (F) in kcal/mol/\r{A}.}
\vskip 0.1in
\label{tab:MD17_rev}

\begin{tabular}{cc|ccccc|cc|cc}
\hline
\multicolumn{2}{c|}{Target}    & MACE                  & Allegro               & 2F-Dis.         \\
\hline
\multirow{2}{*}{aspirin}  & E & {\ul 0.0507}          & 0.0530                & \textbf{0.0465} \\
                          & F & {\ul 0.1522}          & 0.1683                & \textbf{0.1515} \\
\multirow{2}{*}{azobenz.} & E & \textbf{0.0277} & \textbf{0.0277} & {\ul 0.0315}          \\
                          & F & {\ul 0.0692}          & \textbf{0.0600}       & 0.1121          \\
\multirow{2}{*}{benzene}  & E & 0.0092                & {\ul 0.0069}          & \textbf{0.0013} \\
                          & F & {\ul 0.0069}          & \textbf{0.0046}       & 0.0085          \\
\multirow{2}{*}{ethanol}  & E & {\ul 0.0092}          & {\ul 0.0092}          & \textbf{0.0065} \\
                          & F & {\ul 0.0484}          & {\ul 0.0484}          & \textbf{0.0379} \\
\multirow{2}{*}{malonal.} & E & 0.0184                & {\ul 0.0138}          & \textbf{0.0129} \\
                          & F & 0.0945                & {\ul 0.0830}          & \textbf{0.0782} \\
\multirow{2}{*}{napthal.} & E & 0.0115                & \textbf{0.0046}       & {\ul 0.0103}    \\
                          & F & {\ul 0.0369}          & \textbf{0.0208}       & 0.0478          \\
\multirow{2}{*}{paracet.} & E & \textbf{0.0300}       & 0.0346                & {\ul 0.0310}    \\
                          & F & \textbf{0.1107}       & {\ul 0.1130}          & 0.1178          \\
\multirow{2}{*}{salicyl.} & E & {\ul 0.0208}          & {\ul 0.0208}          & \textbf{0.0174} \\
                          & F & {\ul 0.0715}          & \textbf{0.0669}       & 0.0860          \\
\multirow{2}{*}{toluene}  & E & 0.0115                & {\ul 0.0092}          & \textbf{0.0051} \\
                          & F & {\ul 0.0346}          & 0.0415                & \textbf{0.0284} \\
\multirow{2}{*}{uracil}   & E & \textbf{0.0115}       &  0.0138          & {\ul 0.0131}          \\
                          & F & {\ul 0.0484}          & \textbf{0.0415}       & 0.0828         \\
                                \hline
\end{tabular}

\end{table}

\textbf{Revised MD17.} For a comprehensive comparison, we also conducted experiments on the revised MD17 dataset, which has \textbf{higher data quality} than original MD17, and compared with two state-of-the-art models, MACE~\citep{batatia2022mace} and Allegro~\citep{musaelian2023learning}. The results are shown in Table~\ref{tab:MD17_rev}. 2-F-DisGNN achieved 10 best and 4 the second best results out of 20 targets. Note that MACE and Allegro are both models that leverage complex equivariant representations. This further demonstrates the high potential of distance-based models in geometry learning and molecular dynamic simulation.

\textbf{Supplementary Results on QM9.} We present the full results on QM9 in Table \ref{tab:qm9 full}. We compare our model with 7 other models, including those that use invariant geometric features: SchNet~\citep{schnet}, DimeNet~\citep{dimenet}, DimeNet++\citep{dimenet++}, a model that uses group irreducible representations: Cormorant\citep{anderson2019cormorant}, those that use first-order equivariant representations: TorchMD~\citep{TorchMD}, PaiNN~\citep{PaiNN} and a model that uses local frame methods: GNN-LF~\citep{GNN-LF}. We calculate the average improvements of all the models relative to DimeNet and list them in the table.  

\begin{table}[h]
\vskip -0.1in
\caption{MAE loss on QM9.}
\label{tab:qm9 full}
\centering
\resizebox{\textwidth}{!}{
\begin{tabular}{cc|cccccc|cc}
\hline
Target                & Unit            & SchNet   & Cormor.  & DimeNet++     & Torchmd        & PaiNN      & GNN-LF         & DimeNet    & 2F-Dis.                              \\
\hline
$\mu$                 & D               & 0.033    & 0.038    & 0.0297        & \textbf{0.002} & 0.012      & 0.013          & 0.0286     & \cellcolor[HTML]{E2EFDA}{\ul 0.0100} \\
$\alpha$              & $a_0^3$         & 0.235    & 0.085    & 0.0435        & \textbf{0.01}  & 0.045      & {\ul 0.0353}   & 0.0469     & \cellcolor[HTML]{E2EFDA}0.0431       \\
$\epsilon_{\rm HOMO}$     & $\rm meV$       & 41       & 34       & 24.6          & \textbf{21.2}  & 27.6       & {\ul 23.5}     & 27.8       & \cellcolor[HTML]{E2EFDA}21.81        \\
$\epsilon_{\rm LUMO}$     & $\rm meV$       & 34       & 38       & 19.5          & {\ul 17.8}     & 20.4       & \textbf{17}    & 19.7       & 21.22                                \\
$\Delta \epsilon$     & $\rm meV$       & 63       & 61       & \textbf{32.6} & 38             & 45.7       & 37.1           &  34.8 &                  \cellcolor[HTML]{E2EFDA}\textbf{31.3}               \\
$\langle R^2 \rangle$ & $a_0^2$         & 0.073    & 0.961    & 0.331         & \textbf{0.015} & 0.066      & 0.037          & 0.331      & \cellcolor[HTML]{E2EFDA}{\ul 0.0299} \\
ZPVE                  & $\rm meV$       & 1.7      & 2.027    & {\ul 1.21}    & 2.12           & 1.28       & \textbf{1.19}  & 1.29       & \cellcolor[HTML]{E2EFDA} 1.26                              \\
$U_0$                 & $\rm meV$       & 14       & 22       & 6.32          & 6.24           & {\ul 5.85} & \textbf{5.3}   & 8.02       & \cellcolor[HTML]{E2EFDA}7.33         \\
$U$                   & $\rm meV$       & 19       & 21       & 6.28          & 6.3            & {\ul 5.83} & \textbf{5.24}  & 7.89       & \cellcolor[HTML]{E2EFDA}7.37         \\
$H$                   & $\rm meV$       & 14       & 21       & 6.53          & 6.48           & {\ul 5.98} & \textbf{5.48}  & 8.11       & \cellcolor[HTML]{E2EFDA}7.36         \\
$G$                   & $\rm meV$       & 14       & 20       & 7.56          & 7.64           & {\ul 7.35} & \textbf{6.84}  & 8.98       & \cellcolor[HTML]{E2EFDA}8.56         \\
$c_v$                 & $\rm cal/mol/K$ & 0.033    & 0.026    & {\ul 0.023}   & 0.026          & 0.024      & \textbf{0.022} & 0.0249     & \cellcolor[HTML]{E2EFDA}0.0233       \\
\hline
\multicolumn{2}{c|}{Avg improv.}         & -78.99\% & -98.22\% & 9.42\%        & 25.00\%        & 17.50\%    & 27.82\%        & 0.00\%     & 18.83\%                              \\
\multicolumn{2}{c|}{Rank}                & 7        & 8        & 5             & 2              & 4          & 1              & 6          & 3                                   \\
\hline
\end{tabular}
}
\end{table}

Although our models exhibit significant improvements on the MD17 datasets, their performance on the QM9 datasets is relatively less remarkable. This discrepancy may be attributed to the fact that the QM9 datasets pose a greater challenge in terms of a model's generalization performance: Unlike MD17, QM9 comprises various kinds of molecules and regression targets. However, since $k$-DisGNNs rely purely on the fundamental geometric feature of distance, they may inherently require a larger amount of data to learn the significant geometric patterns within geometric structures and enhance their generalization performance, compared to models that explicitly incorporate equivariant representations or pre-calculated high-order geometric features. Nevertheless, we believe that such learning paradigm, which \textit{employs highly expressive models to extract information from the most basic components in a data-driven manner}, has immense performance potential and will exhibit even greater performance gains when provided with more data points, increased model scales, better data quality or advanced pre-training methods, as already validated in fields such as natural language processing~\citep{brown2020language, devlin2018bert}, computer vision~\citep{tolstikhin2021mlp,liu2021swin,dosovitskiy2020image}, and molecule-related pre-training methods~\citep{xia2023mole,lu2023highly}.

\subsection{Time and Memory Consumption}

We performed experiments on the MD17 dataset to compare the training and inference time, as well as the GPU memory consumption of our models with their counterparts DimeNet and GemNet. The experiments are conducted on Nvidia RTX 3090, and the results are shown in Table~\ref{tab:time and memory}.

\begin{table}[h]
\caption{Training time, inference time and GPU memory consumption on MD17. The batch size is 32 for 2-F-DisGNN, DimeNet and TorchMD, and 12 for 3-E-DisGNN and GemNet. Training time in ms, inference time in ms, inference GPU memory consumption in MB. Evaluated on Nvidia A100.}
\label{tab:time and memory}
\begin{center}
\resizebox{\textwidth}{!}{
\begin{tabular}{c|ccc|cc}
\hline
\multicolumn{1}{l|}{} & 2FDis.          & DimeNet      & TorchMD     & 3EDis.            & GemNet  \\ \hline
Asp.              & 234/66/2985 & 270/74/5691  & 101/39/2072 & 873/273/18585 & 556/182/7169\\
Ben.              & 92/24/1058 & 162/44/1815    & 110/41/905  & 224/61/3644   & 380/130/1789\\
Eth.              & 61/18/637   & 163/43/791     & 103/43/532  & 112/30/1561   & 462/158/804\\
Mal.        & 61/18/640   & 161/43/791     & 103/42/532  & 113/30/1561   & 356/118/804\\
Nap.          & 184/50/2193 & 238/68/4578  & 111/45/1692 & 652/185/11804 & 444/141/5253\\
Sal.       & 137/37/1758 & 206/58/3468  & 119/42/1401 & 458/127/8281  & 405/136/3778\\
Tol.              & 131/36/1560 & 200/54/3126  & 107/25/1326 & 399/109/6915  & \multicolumn{1}{l}{473/153/3328}\\
Ura.               & 93/24/1058   & 169/45/1782    & 113/42/906  & 225/61/3644   & 369/136/1776\\
\hline
Avg.              & 124/34/1486 & 196/53/2755 & 108/39/1170 & 382/109/6999  & 430/144/3087\\ \hline
\end{tabular}
}
\end{center}
\end{table}

2-F-DisGNN demonstrates significantly better time and memory efficiency compared to DimeNet and GemNet. Even when compared to the quite efficient model, TorchMD, 2-F-DisGNN shows competitive efficiency. This is partly because it utilizes high-order tensors and employs dense calculation, resulting in significant acceleration on GPUs. However, due to the high theoretical complexity and high hidden dimension, 3-E-DisGNN shows relatively worse performance. Regarding this limitation, we have listed several possible solutions in Section~\ref{sec:conclusions} and recent work on simplifying and accelerating $k$-WL~\citep{zhao2022practical, morris2022speqnets} when $k$ is large may also be applied, which is left for future work. Nonetheless, given that 2-F-DisGNN already satisfies the requirements for both theoretical (Section~\ref{sec:completeness and universality}) and experimental (Section~\ref{sec:exper}) performance, our primary focus lies on this model, and the analysis further emphasizes the practical potential of 2-F-DisGNN.

\section{Supplenmentary Related Work}
\label{sec:supply related work}

There is a wealth of research in the field of interatomic potentials, which introduced various methods for modeling the atomic environment as representations~\citep{bartok2010gaussian, bartok2013representing, shapeev2016moment,behler2007generalized}. In particular, ACE (Atomic Cluster Expansion)~\citep{drautz2019atomic} has presented a framework that serves as a unifying approach for these methods and can calculate high-order complete polynomial basis features with small cost regardless of the body order. Building on \citet{drautz2019atomic}, \citet{batatia2022mace} proposed a GNN variant called MACE that can effectively exploit the rich geometric information in atoms' local environment. Additionally, \citet{joshi2023expressive} proposed geometric variants of the WL test to characterize the expressiveness of invariant and equivariant geometric GNNs for the general graph setting. All of \citet{drautz2019atomic, batatia2022mace, joshi2023expressive} leverage the many-body expansion and can include $k$-tuples during embedding atom's neighbors, allowing atoms to obtain $k$-order geometric information from their local environments like $k$-DisGNNs.

However, we note that there are key differences between these works (based on many-body expansion) and $k$-DisGNNs in terms of their message passing units and use of representations. The former assumes that the total energy can be approximated by the sum of energies of atomic environments of individual atoms and therefore use atoms as their message passing units, while $k$-DisGNNs pass messages among $k$-tuples and pool tuple representations to obtain global representation. In addition, while interatomic potential methods leverage equivariant representations to enhance their expressiveness, $k$-DisGNNs completely decouple E(3)-symmetry and only use the most basic geometric invariant feature, distance, to achieve universality and good experimental results.

\newpage

\end{document}